\documentclass{article}
\usepackage{color}

\usepackage{amsfonts,amssymb,amsthm,amsmath}
\usepackage{mathtools}
\usepackage{nccmath}

\PassOptionsToPackage{numbers, sort&compress}{natbib}

\usepackage[preprint]{neurips_2026}

\newif\ifpreprint
\makeatletter
\@ifpackagewith{neurips_2026}{preprint}{\preprinttrue}{\preprintfalse}
\makeatother

\usepackage{subcaption}
\usepackage{esvect}
\usepackage{xspace}
\usepackage{url}
\usepackage{mathrsfs}
\usepackage{etoolbox}
\usepackage{xparse}
\usepackage[table]{xcolor}
\usepackage{booktabs,tabularx}

\usepackage{hyperref}
\usepackage[capitalise]{cleveref}

\usepackage{optidef}

\usepackage{refcount}

\usepackage{enumitem}
\usepackage[most]{tcolorbox}
\usepackage{setspace}

\newcommand\numberthis{\addtocounter{equation}{1}\tag{\theequation}}

\DeclarePairedDelimiterX{\inp}[2]{\langle}{\rangle}{#1, #2}

\usepackage{mathrsfs}

\newtheorem{theorem}{Theorem}
\newtheorem{lemma}{Lemma}
\newtheorem{corollary}{Corollary}
\newtheorem{proposition}{Proposition}
\newtheorem{assumption}{Assumption}

\usepackage[color=blue!30, colorinlistoftodos, textwidth=4cm, shadow, loadshadowlibrary]{todonotes}

\crefname{assumption}{assumption}{assumptions}
\Crefname{assumption}{Assumption}{Assumptions}
\crefname{remark}{remark}{remark}
\Crefname{remark}{Remark}{Remark}

\newcommand{\ito}{It\^{o}\xspace}

\newcommand{\mi}{\mathcal{I}}
\newcommand{\fisher}{\mathcal{J}}

\newcommand{\tr}{\mathrm{Tr}}

\newcommand{\crefcorpart}[2]{
  \hyperref[#2]{\namecref{#1}~\labelcref*{#1}~\ref*{#2}}
}

\newcommand{\appx}{g^{\circ}}

\newtcolorbox{promptbox}[1][]{
    colback=gray!5!white,
    colframe=gray!75!black,
    fonttitle=\bfseries,
    title={#1},
    arc=2pt,
    outer arc=2pt,
    left=5pt,
    right=5pt,
    top=5pt,
    bottom=5pt,
    boxrule=0.5pt
}

\newtcolorbox{llmresponse}[1][]{
    enhanced,
    breakable,
    colback=gray!5!white,
    colframe=gray!80!black,
    title={#1},
    fonttitle=\bfseries\sffamily,
    sharp corners,
    boxrule=0.5pt,
    left=8pt,
    right=8pt,
    fontupper=\small\sffamily %
}

\newlist{altassump}{enumerate}{1}
\setlist[altassump]{label=(\roman*), ref=\theassumption(\roman*)}

\crefname{altassumpi}{assumption}{assumptions}
\Crefname{altassumpi}{Assumption}{Assumptions}

\title{
Noise Schedule Design for Diffusion Models: An Optimal Control Perspective
}

\author{%
  Seo Taek Kong \\
  ECE \& CSL \\
  University of Illinois Urbana-Champaign \\
  \texttt{skong10@illinois.edu} \\
  \And
  Weina Wang \\
  Computer Science Department \\
  Carnegie Mellon University \\
  \texttt{weinaw@cs.cmu.edu} \\
  \And
  R.~Srikant \\
  ECE, CSL \& NCSA \\
  University of Illinois Urbana-Champaign \\
  \texttt{rsrikant@illinois.edu} \\
}

\begin{document}
\maketitle
\begin{abstract}
    We develop a principled framework for analyzing and designing noise schedules in diffusion models. We show that one can recast this design problem as an optimal control problem, whose state is the Fisher information of the diffusion process  which evolves according to an ODE and the control input is the noise schedule. The objective of the optimal control problem is a functional involving the Fisher information, which is shown to be an upper bound on the Kullback-Leibler sampling error.
    By solving this optimal control problem, we obtain sufficient conditions on noise schedules under which state-of-the-art $\tilde{\mathcal{O}}(d/n)$ sampling error is achievable, where $d$ is the data dimension and $n$ is the number of discretization steps. While existing theoretical work also prove that $\tilde{\mathcal{O}}(d/n)$ sampling error bounds are achievable, these results hold for specific noise schedules, which do not include the schedules used in practice.
    Under a further parametric assumption on the data distribution, we show that one can obtain closed-form expressions for the noise schedules. These noise schedules generalize standard empirical schedules such as exponential and sigmoid schedules by allowing additional parameters that can be tuned. Systematically tuning the parameters of these schedules yields new schedules that achieve superior FID scores on image generation benchmarks.

\end{abstract}

\section{Introduction}
Score-based generative models \citep{ho2020denoising,song2021scorebasedgenerativemodelingstochastic,song2020denoising,liu2022flow} have emerged as a leading framework for approximate sampling.
These methods formulate the generative process as the simulation of a time-reversed stochastic differential equation (SDE) driven by a learned score function.
While the training objective is well-understood \citep{song2021maximumlikelihoodtrainingscorebased,chen2022sampling,chen2023scoreapproximationestimationdistribution,zhang2025convergence}, the design of noise schedules governing the forward and reverse processes remains largely heuristic.
Consequently, standard noise schedules \citep{ho2020denoising,nichol2021improveddenoisingdiffusionprobabilistic,jabri2023scalableadaptivecomputationiterative,xu2022geodiff,von-platen-etal-2022-diffusers} are often selected through empirical sweeps rather than theoretical guidance.

Noise schedule design plays a critical role in generating high-quality data using diffusion models \citep{song2021scorebasedgenerativemodelingstochastic,nichol2021improveddenoisingdiffusionprobabilistic,chen2023importancenoiseschedulingdiffusion}.
This importance is partially reflected in existing theory.
When utilizing constant noise schedules, the best-known bounds on the sampling error measured in the Kullback-Leibler (KL) divergence remain sub-optimal in the problem constants \citep{blockGenerativeModelingDenoising2022,chen2022sampling,hchen_improved_score,conforti2024klconvergenceguaranteesscore}.
Specifically, these upper bounds typically scale as $\tilde{\mathcal{O}}(d^2/n)$ or $\tilde{\mathcal{O}}(\max\{\fisher_\star, d\} / n)$, where $\fisher_\star$ is the Fisher information of the data distribution, $d$ is the data dimension, and $n$ is the number of sampling steps.
Because $\fisher_\star$ can be much larger than $d$, these guarantees suffer from severe problem-dependent penalties.

The above upper bounds can be improved significantly using time-varying noise schedules. 
State-of-the-art upper bounds \citep{benton2024nearly,conforti2024klconvergenceguaranteesscore} on the KL divergence sampling error scale as $\tilde{\mathcal{O}}(d/n)$, yielding a linear dependence on the data dimension $d$.
While a matching lower bound is not known, the linear scaling in $d$ is optimal \citep{benton2024nearly}.
Currently, achieving this rate requires utilizing specific noise schedules that are rarely deployed in practice \citep{von-platen-etal-2022-diffusers}.
Conversely, the noise schedules favored by practitioners lack comparable theoretical guarantees, leaving a gap between diffusion model theory and practice.
We refer the reader to \Cref{app:related_works} for a more detailed discussion of related work.

In this work, we establish a new theoretical framework for designing noise schedules, which enlarges the class of schedules that are known to achieve $\tilde{\mathcal{O}}(d/n)$ KL error bounds.
Directly minimizing the KL divergence is analytically intractable because the densities are governed by the Fokker-Planck partial differential equation (PDE).
We circumvent this by using Bochner's formula to reduce the PDE constraint into a tractable, one-dimensional ODE governing the trajectory of the Fisher information.

By analyzing the optimized Fisher information trajectory under a special class of data distributions (strongly log-concave and Gaussian mixture models), we extract a closed-form expression for a family of noise schedules which we term Affine-Coupled Schedules (ACS).
This analysis shows that standard empirical heuristics, such as VE-Exponential \citep{song2021scorebasedgenerativemodelingstochastic} and VP-Sigmoid \citep{jabri2023scalableadaptivecomputationiterative,xu2022geodiff}, are special cases, where VE refers to variance exploding and VP refers to variance preserving models.
More importantly, this framework provides a principled method to introduce new degrees of freedom into schedule design, enabling a systematic optimization of schedules across different computational budgets.

Our contributions are summarized below:
\begin{enumerate}[leftmargin=*]
    \item \textbf{Optimal Control:}
    We formulate noise schedule design as an optimal control problem.
    The cost functional involves a variational bound on the discretization error with a non-trivial Lagrangian that admits path-dependent variations, and it replaces the intractable PDE representation of probability density evolution with a one-dimensional Fisher information ODE. In addition to Girsanov's theorem which is now widely used to theoretically study diffusion-based generative AI models \citep{song2021maximumlikelihoodtrainingscorebased,chen2022sampling,liuLetUsBuild2022}, we also use a number of other tools from probability and information theory such as Bochner's formula, concavity of entropy power, and Blachman-Stam inquality to derive the optimal control formulation.

    \item \textbf{Error Bounds for a Class of Noise Schedules:}
    We show that the solution to the optimal control problem achieves $\tilde{\mathcal{O}}(d/n)$ error bounds.
    This result extends state-of-the-art sampling error guarantees to a large class of noise schedules.
    For special classes of data distributions, we show that a parameterized class of noise schedules achieves the error bounds, and further the parameteric class includes certain widely-used practical noise schedules.

    \item \textbf{Principled Degrees of Freedom:}
    We translate our schedules into a practical algorithm by treating the introduced parameters as hyperparameters, which can be optimally configured for any given Neural Function Evaluation (NFE) budget.
    We empirically demonstrate that such a systematic tuning of these hyperparameters yields schedules that achieve superior FID scores on image datasets compared to standard heuristic baselines.
\end{enumerate}

\paragraph{General notation.}
The notation $\nabla \phi$ is used to denote the gradient when $\phi$ is a scalar-valued function, or the Jacobian when $\phi$ is a vector-valued function.
Throughout, $\lVert \cdot \rVert$ denotes the standard element-wise $\ell_2$ norm: the Euclidean norm for vectors, the Frobenius norm for matrices, and the Hilbert-Schmidt norm for higher-order tensors.
For any probability density $p$ and a vector-valued function $\phi$, let $\lVert \phi \rVert_{L_2 (p)} = \sqrt{\mathbb{E}_{X\sim p} \lVert \phi(X) \rVert^2}$.
We write $\lesssim$ when ignoring absolute (problem-independent) constants in an inequality.

\section{Preliminaries and Problem Setup}\label{sec:preliminaries}

Diffusion models are based on two processes: the \emph{forward process}, which transforms data into noise, and its time reversal, often referred to as the \emph{reverse process}, which transforms noise back into data.
The \emph{noise schedule} controls the rates at which these transformations occur.

\paragraph{Forward process.}
Let $p_\star$ denote the density function of the target data distribution on $\mathbb{R}^d$.
The forward process is a stochastic differential equation (SDE) of the following form:
\begin{equation}\label{eq:dXt}
    dX_t = - f(t) X_t dt + \sqrt{g(t)} dB_t , \quad 0\le t \le T, \quad X_0 \sim p_\star,
\end{equation}
where $(B_t\colon t\in[0,T])$ is a standard Brownian motion on $\mathbb{R}^d$ independent of $X_0$, and $f(\cdot)$ and $g(\cdot)$ are real-valued functions of time.
Let $p_t$ denote the density function of $X_t$.
We call the function pair $(f,g)$ the \emph{noise schedule}.
Many theoretical studies of diffusion models focus on the simplified setting $f(t)=1$ and $g(t)=2$ for all~$t$, in which case the forward process is the standard Ornstein--Uhlenbeck (OU) process.

To see how the noise schedule controls the distribution of the forward process, note that the marginal distribution of $X_t$ admits the representation
\begin{equation}\label{eq:Xt}
    X_t \overset{d}{=} \alpha_t X_0 + \sigma_t Z ,
    \quad
    Z \sim \mathcal{N}(0, I),
\end{equation}
where
\begin{equation*}
    \alpha_t =  \exp \left(-\int_0^t f(\tilde{t}) d\tilde{t} \right)
    ,
    \quad \sigma_t^2 = \alpha_t^2 \int_0^t \frac{g(\tilde{t})}{\alpha_{\tilde{t}}^2} d\tilde{t}.
\end{equation*}
In the OU setting where $f(t)=1,g(t)=2$, these quantities reduce to $\alpha_t = e^{-t},\sigma_t^2 = 1- e^{-2t}$.
More generally, a larger value of $f(t)$ leads to a faster decay of the contribution of $X_0$ in the distribution of $X_t$, while a larger value of $g(t)$ corresponds to a faster injection of Gaussian noise.
In this paper, we consider the class of noise schedules $\Theta = \mathcal{F} \times \mathcal{G}$ defined as
\begin{equation}\label{eq:schedule_class}
    \mathcal{F} = \left\{f: f(t)\ge 0 \text{ for all } t \in [0, T] \right\}
    \; \text{and} \;
    \mathcal{G} = \left\{g: g(t)>0,\lvert g' (t) \rvert<\infty \text{ for all } t \in [0, T] \right\}
    .
\end{equation}
The noise schedules used in most diffusion model implementations \citep{von-platen-etal-2022-diffusers} fall under the class $\Theta$.

\paragraph{Reverse process.}
The reverse process is the time reversal of the forward process $(X_t\colon t\in[0,T])$ in \eqref{eq:dXt}.
That is, the reverse process $(X_\tau^\leftarrow \colon \tau\in[0,T])$ is defined by $X_\tau^\leftarrow \overset{d}{=} X_{T-\tau},\tau\in[0,T]$.
Then the reverse process admits the following SDE representation \citep{ANDERSON1982313,haussmann1986time}:
\begin{equation}\label{eq:reverse_sde}
\begin{gathered}
    dX_\tau^\leftarrow
    = \left[
        f(T-\tau) X_\tau^\leftarrow + g(T-\tau) s_{T-\tau} (X_\tau^\leftarrow)
    \right] d\tau
    + \sqrt{g(T-\tau)} dB_\tau,\\
    0\le \tau\le T, \quad X_0^\leftarrow \sim p_T,
\end{gathered}
\end{equation}
where $(B_\tau\colon 0\le \tau \le T)$ is a standard Brownian motion on $\mathbb{R}^d$ independent of $X_0^\leftarrow$, and the function $s_t(\cdot)$ defined as $s_t(x) \triangleq \nabla \log p_t(x)$ is called the score function.

\subsection{Noise Schedule Optimization}
To turn the reverse process  \eqref{eq:reverse_sde} into an implementable algorithm for generating data, several approximations need to be made:
(1) The score function $s_t$ is unknown since it depends on the unknown data distribution, and is therefore replaced by an approximation $\hat{s}_t$, typically learned through a training process called score matching;
(2) The reverse SDE cannot be solved exactly, but rather must be solved numerically with time discretization;
(3) The reverse SDE cannot be initialized exactly from $p_T$, and is instead initialized from a Gaussian distribution so the initial state can be easily sampled.

A central question in the analysis of diffusion models is how these approximations affect the error between the distribution of the generated samples and the target distribution \citep{blockGenerativeModelingDenoising2022,chen2022sampling,benton2024nearly,conforti2024klconvergenceguaranteesscore}.
Existing analysis shows that the overall error can be separated into three additive terms: (1) score-matching error, (2) discretization error, and (3) initialization error, corresponding to the three approximations above.
The score-matching error arises from the training stage, while the discretization error and the initialization error arise from the sampling stage when one solves the reverse SDE to generate data samples.

In our theoretical analysis, we consider the setting where the score functions are perfectly learned, and focus on the error arising from sampling, i.e., the sum of the discretization error and initialization error.
Our goal is to understand how this error depends on the noise schedule $(f,g)$, and how it can be minimized through noise schedule design.
The setting where the score matching is non-zero can be addressed in one of two possible ways: (i) if a pre-trained score function with appropriate SNR coverage is available, then we can reuse the score function for the noise schedules that result from our analysis or (ii) learn the score function for our noise schedules. In both cases, one can incorporate the score matching error into our analysis; see \Cref{app:score_matching}.

Specifically, we consider the following solving method for the reverse SDE.
We select discretization time points $0=\tau_0<\tau_1<\tau_2<\dots<\tau_n=T$, where $\tau_k = kh,k=0,1,2,\dots,n$ with a step size~$h>0$, and solve the following SDE:
\begin{equation}\label{eq:exponential_integrator}
\begin{gathered}
    d \hat{X}_\tau^\leftarrow = \left[f(T-\tau) \hat{X}_{\tau}^\leftarrow + g(T-\tau) s_{T-\tau_k} (\hat{X}_{\tau_k}^\leftarrow) \right] d\tau + \sqrt{g(T-\tau)} dB_\tau,\\
    \tau \in [\tau_k, \tau_{k+1}),\quad k=0,1,2,\dots,n-1,\quad \hat{X}_0^\leftarrow \sim \mathcal{N}(0, \sigma_T^2 I).
\end{gathered}
\end{equation}
Note that here we take the approach where we use a constant step $h$ and let the noise schedule $(f,g)$ vary over time.
This is without loss of generality, as discussed in \Cref{app:constant_step_size}.

Let $\hat{p}_\star$ denote the density of $\hat{X}_{T}^\leftarrow$, i.e., the output of the approximated reverse SDE in \eqref{eq:exponential_integrator}.
Recall that $p_\star$ is the density of the data distribution.
Then our goal is to minimize the Kullback-Leibler (KL) divergence between $p_\star$ and $\hat{p}_\star$, i.e., to solve the following problem
\begin{mini}
    {(f, g) \in \Theta}
    {\mathrm{KL}(p_\star \| \hat{p}_\star)} %
    {\label{eq:objective}}
    {}
    \addConstraint{p_\star}{\text{ is the density of } X_T^\leftarrow \text{ in \eqref{eq:reverse_sde}}}
    \addConstraint{\hat{p}_\star}{\text{ is the density of } \hat{X}_T^\leftarrow \text{ in \eqref{eq:exponential_integrator},}}
\end{mini}
where the constraints are specified by partial differential equations (PDEs) for the density functions of $X_t^\leftarrow$ and $\hat{X}_t^\leftarrow$ given by the Fokker--Planck equations.
Solving optimization problems subject to PDE constraints is notoriously difficult \citep{hinze2008optimization,risken1989fokker}, especially in high dimensions.
In this work, we present a principled framework to find the optimal noise schedule while circumventing the difficulty involved with solving this PDE-constrained optimization problem.

\paragraph{Regularity assumption on data distribution.}
For convenience, we use $X_\star \sim p_\star$ to denote a sample from the data distribution with density $p_\star$.
We also generally use the subscript $\star$ for quantities associated with the data distribution.
Let $s_\star\triangleq \nabla \log p_\star$ be the score function of $p_\star$.
In our analysis, we frequently use the \emph{Fisher information} of a distribution.
For a density $p$ with score function $s$, its Fisher information is defined as
\begin{equation}
    \fisher(p) \triangleq \lVert \nabla \log p \rVert_{L_2 (p)}^2 = \lVert s \rVert_{L_2 (p)}^2
    .
\end{equation}
The Fisher information of the data distribution is denoted by $\fisher_\star \coloneqq \lVert s_\star \rVert_{L_2}^2$.
\begin{assumption}\label{ass:fisher_information}\label{ass:higher_order}\label{ass:first}
We assume that $\lVert X_\star \rVert_{L_2}^2$, $\lVert s_\star \rVert_{L_2}^2$, $\lVert \nabla s_\star \rVert_{L_2}$, and $\lVert \nabla^2 s_\star \rVert_{L_2}$ are all finite.
\end{assumption}
The $L_2$ norms in \Cref{ass:fisher_information} are all with respect to $p_\star$.
When \Cref{ass:fisher_information} is not satisfied, early stopping can be applied where the target distribution $p_\star$ is replaced by $p_\delta$ for some $\delta > 0$ as in \citep{chen2022sampling,benton2024nearly,bortoliConvergenceDenoisingDiffusion2023,lee2022convergencescorebasedgenerativemodeling}.
See \Cref{app:higher_order} for details.

\section{Main Results}\label{sec:main_results}

Our main results consist of two parts.
In Section~\ref{sec:opt-control}, we present an optimal control formulation for the noise schedule design problem, which leads to a class of schedules that achieve $\tilde{\mathcal{O}}(d/n)$ sampling error.
In Section~\ref{sec:schedules}, we derive noise schedules with closed-form expressions under additional assumptions on the target data distribution.
We will provide proof sketches of these results in \Cref{sec:analysis}, and defer the full proofs to \Cref{app:main_results,app:schedules}.

\subsection{An Optimal Control Formulation for Noise Schedule Design}
\label{sec:opt-control}

Our objective is to solve the optimization problem \eqref{eq:objective}, which minimizes $\mathrm{KL}(p_\star \| \hat{p}_\star)$, the KL-divergence between the data distribution $p_\star$ and the output distribution $\hat{p}_\star$.
Here, the densities $p_\star$ and $\hat{p}_\star$ are functions on the high-dimensional space $\mathbb{R}^d$, and they depend on the noise schedule $(f,g)$ through PDEs given by the Fokker--Planck equations.
This PDE dependence makes directly solving the KL-minimization problem analytically intractable.

To circumvent this difficulty, a key idea in our approach is to work with the Fisher information $\fisher_t = \fisher(p_t)$.
Unlike the density $p_t$, the Fisher information $\fisher_t$ is a scalar quantity.
Thus, rather than having its dynamics described by a PDE as $p_t$ does, $\fisher_t$ evolves according to an ODE, derived using the heat flow and the Bochner formula (\Cref{lem:fisher_ode}):
\begin{equation}\label{eq:fisher_ode}
    \dot{\fisher}_t = 2 f(t) \fisher_t - g(t) \lVert \nabla s_t \rVert^2_{L_2}.
\end{equation}

\Cref{thm:ub} below shows that the KL objective can be upper bounded in terms of $\fisher_t$.
\begin{theorem}\label{thm:ub}
    Suppose \Cref{ass:fisher_information} holds.
    For any noise schedule $(f, g) \in \Theta$ and any sufficiently small step size $h$, it holds that   \begin{equation}\label{eq:sampling_error}
        \mathrm{KL}(p_\star \| \hat{p}_\star)
        \lesssim
        \frac{\alpha_T^2}{ \sigma_T^2} \lVert X_\star \rVert^2_{L_2 (p_\star)}
        +
        h d \int_0^T \left( 2 f(t) - \frac{\dot{\fisher}_t}{\fisher_t} \right)^2 dt
        .
    \end{equation}
\end{theorem}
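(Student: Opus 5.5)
We split the error via the chain rule for KL and Girsanov's theorem. Let $P$ be the path law of the exact reverse process \eqref{eq:reverse_sde} and $\hat P$ that of the discretized process \eqref{eq:exponential_integrator}. By data processing,
\[
\mathrm{KL}(p_\star\|\hat p_\star)\le \mathrm{KL}(P\|\hat P) = \mathrm{KL}(p_T\|\mathcal N(0,\sigma_T^2 I)) + \tfrac12\sum_{k=0}^{n-1}\int_{\tau_k}^{\tau_{k+1}} g(T-\tau)\,\mathbb E\bigl\lVert s_{T-\tau}(X^\leftarrow_\tau)-s_{T-\tau_k}(X^\leftarrow_{\tau_k})\bigr\rVert^2 d\tau .
\]
The drift terms $f X$ cancel because the exponential integrator treats them exactly.

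\textbf{Initialization term.} Since $p_T$ is the law of $\alpha_T X_0+\sigma_T Z$, convexity of KL (condition on $X_0$) gives
\[
\mathrm{KL}(p_T\|\mathcal N(0,\sigma_T^2I))\le \mathbb E\,\mathrm{KL}\bigl(\mathcal N(\alpha_T X_0,\sigma_T^2 I)\|\mathcal N(0,\sigma_T^2I)\bigr)=\frac{\alpha_T^2}{2\sigma_T^2}\lVert X_\star\rVert_{L_2}^2 .
\]
This is the first term of \eqref{eq:sampling_error}.

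\textbf{Discretization term.} Following \citet{benton2024nearly}, consider the process $Y_\tau = s_{T-\tau}(X^\leftarrow_\tau)$, whose increments can be controlled through Itô's formula. Along the reverse process, $Y$ is a martingale up to a linear drift. In forward time, $M_t=\alpha_t^{-1}$-rescaled scores, e.g. $\sigma_t^2 s_t(X_t)+X_t$ type quantities, are reverse martingales. Concretely, I would write
\[
d Y_\tau = f(T-\tau)\,Y_\tau\, d\tau + \sqrt{g(T-\tau)}\,\nabla s_{T-\tau}(X^\leftarrow_\tau)\, dB_\tau .
\]
I expect this SDE to follow from applying Itô's formula to the Fokker--Planck equation for $\log p_t$. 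Hence, for $\tau\in[\tau_k,\tau_{k+1}]$,
\[
\mathbb E\lVert Y_\tau - Y_{\tau_k}\rVert^2 \lesssim h\int_{\tau_k}^{\tau}\Bigl(f^2\fisher_{T-u} + g\lVert\nabla s_{T-u}\rVert_{L_2}^2\Bigr)du .
\]
The cross terms are handled by Itô isometry and Cauchy--Schwarz, and the small-$h$ assumption absorbs the Grönwall factor $e^{\int f}$. Multiplying by $g$ and summing gives roughly
\[
h\int_0^T g(t)\bigl(f(t)^2\fisher_t + g(t)\lVert\nabla s_t\rVert_{L_2}^2\bigr)dt .
\]
By \eqref{eq:fisher_ode} (\Cref{lem:fisher_ode}), $g\lVert\nabla s_t\rVert^2 = 2f\fisher_t-\dot\fisher_t$, so $g\lVert \nabla s_t\rVert^2/\fisher_t = 2f-\dot\fisher_t/\fisher_t$.

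\textbf{Main obstacle.} The step I expect to be hardest is turning the bound into the displayed form $hd\int(2f-\dot\fisher/\fisher)^2dt$, i.e. showing $g\fisher_t\lesssim d\,(2f-\dot\fisher_t/\fisher_t)$ and $f^2 g\fisher_t\lesssim d(2f-\dot\fisher/\fisher)^2$. For this I would use the Cramér--Rao-type inequality $\lVert\nabla s_t\rVert_{L_2}^2\ge \fisher_t^2/d$, which follows from Cauchy--Schwarz on $\mathbb E[\tr \nabla s] = -\fisher$. This gives
\[
\frac{g\lVert\nabla s\rVert^2}{\fisher}\ge \frac{g\fisher}{d},
\]
and hence
\[
g^2\lVert\nabla s\rVert^2 = \fisher\cdot\frac{g\lVert\nabla s\rVert^2}{\fisher}\cdot g \le d\Bigl(\frac{g\lVert\nabla s\rVert^2}{\fisher}\Bigr)^2 .
\]
The $f^2 g\fisher$ term needs a similar comparison. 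If it does not directly reduce, I would reorganize $Y$ by the integrating factor $\alpha$ so that only the martingale part contributes, leaving exactly $g^2\lVert\nabla s\rVert^2$.
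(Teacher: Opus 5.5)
Your overall route matches the paper's: Girsanov plus data processing, convexity of KL for the initialization term, the score SDE with It\^o isometry, the Fisher ODE, and $\fisher_t^2\le d\lVert\nabla s_t\rVert_{L_2}^2$. Your conversion $g^2\lVert\nabla s_t\rVert_{L_2}^2\le d\,(2f-\dot\fisher_t/\fisher_t)^2$ is exactly the paper's last step.

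The gap is the drift contribution $f^2\fisher$, which you never control, and your local estimate is set up so that it cannot be controlled. On $[\tau_k,\tau]$ the two pieces have different orders:
\begin{itemize}
\item Cauchy--Schwarz on the drift gives $2(\tau-\tau_k)\int_{\tau_k}^{\tau}f^2\fisher_{T-u}\,du=O(h^2)$.
\item It\^o isometry gives $2\int_{\tau_k}^{\tau}g\lVert\nabla s_{T-u}\rVert_{L_2}^2\,du=O(h)$, with no extra factor of $h$.
\end{itemize}
Your displayed local bound puts $h$ in front of both, which is false for the martingale part. Your integrated bound $h\int_0^T g\,(f^2\fisher_t+g\lVert\nabla s_t\rVert_{L_2}^2)\,dt$ puts both at order $h$. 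That discards the extra $h$ on the drift term, which is precisely what the ``sufficiently small $h$'' hypothesis is meant to exploit.

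Neither of your fallbacks recovers it.
\begin{itemize}
\item The pointwise comparison $f^2g\fisher_t\lesssim d\,g^2\lVert\nabla s_t\rVert_{L_2}^4/\fisher_t^2$ is false in general. For $X_\star\sim\mathcal N(0,I)$ one has $\fisher_t=d/v_t$ and $\lVert\nabla s_t\rVert_{L_2}^2=d/v_t^2$, with $v_t=\alpha_t^2+\sigma_t^2$. The inequality then reduces to $f(t)^2\lesssim g(t)/v_t$, which fails whenever $g(t)\ll f(t)^2v_t$, e.g.\ at $t=0$ with $f\equiv1$ and $g(0)$ small.
\item The integrating factor does not remove the drift. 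With the correct sign, $dY_\tau=-f(T-\tau)Y_\tau\,d\tau+\sqrt{g(T-\tau)}\,\nabla s_{T-\tau}(X_\tau^\leftarrow)\,dB_\tau$; indeed $\alpha_t s_t(X_t)=\mathbb E[s_\star(X_\star)\mid X_t]$ is a reverse martingale. So $M_\tau=\alpha_{T-\tau}Y_\tau$ is a martingale, and orthogonality gives exactly
\[
\mathbb E\lVert Y_\tau-Y_{\tau_k}\rVert^2=\alpha_{T-\tau}^{-2}\,\mathbb E\lVert M_\tau-M_{\tau_k}\rVert^2+\bigl(1-\alpha_{T-\tau_k}/\alpha_{T-\tau}\bigr)^2\fisher_{T-\tau_k}.
\]
The second term is the same $O(h^2f^2\fisher)$ drift contribution. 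It is intrinsic, because the sampler freezes $s_{T-\tau_k}(\hat X_{\tau_k}^\leftarrow)$ without rescaling.
\end{itemize}

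The missing idea is to use the small-$h$ hypothesis to absorb the drift into the martingale term before integrating. Keep the bound $2h\int_{\tau_k}^\tau f^2\fisher+2\int_{\tau_k}^\tau g\lVert\nabla s\rVert_{L_2}^2$, and require $h\int_{\tau_k}^\tau f^2\fisher\le\int_{\tau_k}^\tau g\lVert\nabla s\rVert_{L_2}^2$ for all $k$ and $\tau$. The pointwise condition $h\le g\lVert\nabla s_t\rVert_{L_2}^2/(f^2\fisher_t)$ suffices. Using $\lVert\nabla s_t\rVert_{L_2}^2\ge\fisher_t^2/d$ together with Cram\'er--Rao, $\fisher_t\ge d^2/\lVert X_t\rVert_{L_2}^2$, this holds once $h\le(g_{\min}/f_{\max}^2)\min_t d/\lVert X_t\rVert_{L_2}^2$.

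The local error is then $\lesssim\int_{\tau_k}^\tau g\lVert\nabla s\rVert_{L_2}^2\approx h\,g(t)\lVert\nabla s_t\rVert_{L_2}^2$. The paper justifies this last approximation by boundedness of the derivative of $\Psi(t)=g(t)\lVert\nabla s_t\rVert_{L_2}^2$, which your ``roughly'' also silently needs. Only $h\int_0^T g^2\lVert\nabla s_t\rVert_{L_2}^2\,dt$ survives, and your conversion finishes the proof. Note also that no Gr\"onwall argument is needed anywhere, since $\lVert Y_u\rVert_{L_2}^2=\fisher_{T-u}$ exactly.
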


\Cref{thm:ub} allows us to formulate the noise schedule design problem as the following optimal control problem:
\begin{mini}
    {(f, g) \in \Theta}
    {\frac{\alpha_T^2}{\sigma_T^2} \lVert X_\star \rVert^2_{L_2(p_\star)} + h d \int_0^T \left(2 f(t) - \frac{\dot{\fisher}_t}{\fisher_t} \right)^2 dt} %
    {\label{eq:optimal_control}}
    {}
    \addConstraint{\dot{\fisher}_t}{= 2 f(t) \fisher_t - g(t) \lVert \nabla s_t \rVert^2_{L_2}}
    \addConstraint{\fisher_0}{= \fisher_{\star} , \quad \fisher_T = J_T .}
\end{mini}

This optimal control problem \eqref{eq:optimal_control} is significantly more tractable thanks to the ODE constraint.
Moreover, we show in \Cref{thm:sampling_error} that its solution, although obtained by minimizing an upper bound on the KL objective, indeed achieves $\tilde{\mathcal{O}}(d/n)$ sampling error in the KL divergence.

\begin{theorem}\label{thm:sampling_error}
    Suppose \Cref{ass:fisher_information} holds.
    For any $f \in \mathcal{F}$, there exists $g^* \in \mathcal{G}$ such that
    \begin{equation}\label{eq:sampling_error_2}
        \mathrm{KL}(p_\star \| \hat{p}_\star)
        \lesssim
        \frac{d}{n} \log^2 \left(\frac{\lVert X_\star \rVert_{L_2}^2}{d} \cdot \frac{\fisher_\star}{d} \cdot n \right).
    \end{equation}
\end{theorem}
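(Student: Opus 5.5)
Our plan is to start from \Cref{thm:ub} and, for the given $f\in\mathcal{F}$, construct $g^*$ so that the integrand $\left(2f(t)-\dot{\fisher}_t/\fisher_t\right)^2$ is constant in time. Then we optimize the horizon $T$ against the initialization term. By the Fisher ODE \eqref{eq:fisher_ode},
\[
2f(t)-\frac{\dot{\fisher}_t}{\fisher_t}=\frac{g(t)\lVert\nabla s_t\rVert_{L_2}^2}{\fisher_t},
\]
which is nonnegative and controlled by $g$ alone. The natural choice is therefore
\[
g^*(t)=c\,\frac{\fisher_t}{\lVert\nabla s_t\rVert_{L_2}^2}
\]
for a constant $c>0$. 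This defines $g^*$ implicitly: $\fisher_t$ and $\nabla s_t$ depend on $g^*$ through $p_t$. We would resolve this by treating it as a time reparametrization. Fix $f$, run the forward process with a reference diffusion coefficient, and define a new clock so that the accumulated ``information rate'' is linear in $t$. Existence, positivity and finite $|g'|$ would follow from \Cref{ass:higher_order} and continuity of $\fisher_t$ and $\lVert\nabla s_t\rVert_{L_2}$ in $t$. With this choice the discretization term of \eqref{eq:sampling_error} becomes $hd\,c^2T$.

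The second step is to relate $c$ and $T$ to the initialization term $\frac{\alpha_T^2}{\sigma_T^2}\lVert X_\star\rVert_{L_2}^2$. Define the rescaled quantity $\tilde{\fisher}_t=\fisher_t/\alpha_t^2$. Then
\[
\frac{d}{dt}\log\tilde{\fisher}_t=-\frac{g^*(t)\lVert\nabla s_t\rVert_{L_2}^2}{\fisher_t}=-c ,
\]
so $\tilde{\fisher}_T=\fisher_\star e^{-cT}$, i.e.\ the scale-free Fisher information decays exponentially at the constant rate $c$. We then need lower and upper bounds tying $\fisher_T$ to $\sigma_T$:
\begin{itemize}[leftmargin=*]
\item Cram\'er--Rao gives $\fisher_T\gtrsim d^2/\mathbb{E}\lVert X_T\rVert^2$.
\item Gaussian convolution, via Blachman--Stam, gives $\fisher_T\le d/\sigma_T^2$.
\item The matrix Cauchy--Schwarz bound $\lVert\nabla s_t\rVert_{L_2}^2\ge \fisher_t^2/d$ gives $g^*\le cd/\fisher_t$, which controls how fast $\sigma_t^2/\alpha_t^2$ grows.
\end{itemize}
Combining these, we aim to show
\[
\frac{\alpha_T^2}{\sigma_T^2}\lesssim\frac{\tilde{\fisher}_T}{d}=\frac{\fisher_\star}{d}e^{-cT}
\quad\text{or}\quad
\frac{\alpha_T^2}{\sigma_T^2}\lesssim \frac{d\,e^{-cT}}{\lVert X_\star\rVert_{L_2}^2} ,
\]
up to an appropriate combination of the two. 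The initialization term is then bounded by $\frac{\lVert X_\star\rVert^2_{L_2}}{d}\frac{\fisher_\star}{d}\,d\,e^{-cT}$. The bound should hold once $cT\gtrsim\log\!\big(\frac{\lVert X_\star\rVert_{L_2}^2}{d}\frac{\fisher_\star}{d}\big)$, which is where the product inside the logarithm should arise.

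Finally, with $h=T/n$, the total bound reads
\[
\frac{d}{n}\,c^2T^2+\lVert X_\star\rVert^2\frac{\fisher_\star}{d}e^{-cT}.
\]
Since $c$ and $T$ enter only through $L=cT$ (we can fix $c=1$), it suffices to take $L=\log\!\big(\frac{\lVert X_\star\rVert_{L_2}^2}{d}\frac{\fisher_\star}{d}\,n\big)$. This makes the second term $\lesssim d/n$ and the first $\frac{d}{n}L^2$, giving \eqref{eq:sampling_error_2}. The ``sufficiently small $h$'' requirement of \Cref{thm:ub} reduces to $c\,h=L/n\lesssim 1$, which holds for $n$ large. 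Otherwise the bound is trivial, or we increase $T$ with $c$ fixed, which only adds log factors.

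The main obstacle is the middle step: converting the exponential decay of $\fisher_t/\alpha_t^2$ into decay of $\alpha_T^2/\sigma_T^2$, i.e.\ showing that the Fisher information actually forces the noise level up, with only the stated problem constants appearing. I would try first the Blachman--Stam/entropy-power inequalities along the forward flow, lower bounding the $\sigma_T^2/\alpha_T^2$ reached when $\tilde{\fisher}$ has decayed by $e^{-L}$. A secondary technical issue is the well-posedness of the implicitly defined $g^*$, which is handled by the reparametrization argument above.
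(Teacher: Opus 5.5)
\paragraph{Overall.} Your plan is the paper's proof, parametrized differently. Your constant $c$ is the Euler--Lagrange constant $\lambda$ of \eqref{eq:optimal_fisher}, and your $g^*$ is \eqref{eq:gstar_def}. Your clock change is the paper's existence argument: the law of $Y_t=X_t/\alpha_t$ is the heat flow of $p_\star$ at variance $V_t=\sigma_t^2/\alpha_t^2$, independent of $f$. So the constraint pins down $V_t$ through the strictly decreasing map $V\mapsto\fisher(X_\star+\sqrt{V}Z)$, and $g^*=\alpha_t^2\dot V_t$. This is also the right way to make your reparametrization precise, since $f$ is fixed.

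\paragraph{Sign slip.} The scale-free quantity is $\alpha_t^2\fisher_t=\fisher(p_t^Y)$, not $\fisher_t/\alpha_t^2$. Since $\frac{d}{dt}\log\alpha_t^2=-2f(t)$, one has $\frac{d}{dt}\log(\alpha_t^2\fisher_t)=-c$, whereas $\frac{d}{dt}\log(\fisher_t/\alpha_t^2)=4f(t)-c$. The correct relation $\alpha_T^2\fisher_T=\fisher_\star e^{-cT}$ is exactly the paper's boundary condition \eqref{eq:lambda_boundary_condition}.

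\paragraph{The open step.} The step you flag as the main obstacle cannot be closed with the tools you say you would try first. Write $V=\sigma_T^2/\alpha_T^2$ and $L=cT$.
\begin{itemize}
\item Blachman--Stam gives $\fisher_\star e^{-L}=\fisher(X_\star+\sqrt{V}Z)\le\mathrm{PS}(\fisher_\star,d/V)$, i.e.\ $V\le d(e^{L}-1)/\fisher_\star$.
\item Your bound $g^*\le cd/\fisher_t$ integrates to the same inequality.
\end{itemize}
Both are upper bounds on the noise level, which is the wrong direction for the initialization term. What you need is a lower bound on $\fisher_T$, and the Cram\'er--Rao bound \eqref{eq:crlb} you already listed gives it in one line. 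From $\fisher_T\ge d^2/(\alpha_T^2\lVert X_\star\rVert_{L_2}^2+\sigma_T^2 d)$ and $\alpha_T^2\fisher_T=\fisher_\star e^{-L}$ you get $V\ge d e^{L}/\fisher_\star-\lVert X_\star\rVert_{L_2}^2/d$. This is at least $d e^{L}/(2\fisher_\star)$ whenever $e^{L}\ge 2\lVert X_\star\rVert_{L_2}^2\fisher_\star/d^2$. With your choice of $L$ that condition holds for $n\ge 2$. The initialization term is then at most $2\lVert X_\star\rVert_{L_2}^2\fisher_\star e^{-L}/d=2d/n$, and the argument closes.

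\paragraph{Comparison with the paper.} The paper balances in the opposite direction. It prescribes the terminal SNR $\alpha_T^2/\sigma_T^2=Cd/(\lVert X_\star\rVert_{L_2}^2 n)$. It then uses $\fisher_T\asymp d/\sigma_T^2$ from \eqref{eq:JT_squeeze} to convert $\lambda T=\log(\fisher_\star/(\alpha_T^2\fisher_T))$ into $\log(\fisher_\star\sigma_T^2/(d\,\alpha_T^2))$. Your parametrization by $L$ makes the SNR an output of the construction. You therefore never need to check that a prescribed terminal SNR is attained by a schedule of the form $c\fisher_t/\lVert\nabla s_t\rVert_{L_2}^2$. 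The paper's practical section switches to this $\lambda$-parametrization for the same reason.

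\paragraph{Minor points.}
\begin{itemize}
\item $f$ is given on a fixed $[0,T]$, so take $c=L/T$ rather than $c=1$.
\item For your $g^*$, the step-size condition \eqref{eq:stepsize} in the proof of \Cref{thm:ub} is implied by $h\le c/\sup_t f(t)^2$, because $g^*\lVert\nabla s_t\rVert_{L_2}^2=c\fisher_t$. It is not $ch\lesssim 1$, though it still holds for large $n$.
\end{itemize}
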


\Cref{thm:sampling_error} informs the following noise schedule design approach.
One can choose $f$ freely, and then choose the corresponding optimal $g^*$ in \Cref{thm:sampling_error}.
The proof of \Cref{thm:sampling_error} reveals that the optimal $g^*$ takes the form
\begin{equation}\label{eq:gstar_form}
    g^*(t) = \lambda^* \frac{\fisher_t^*}{\lVert \nabla s_t \rVert^2_{L_2}} ,
\end{equation}
where $\lambda^*$ is a constant independent of $t$ and $(\fisher_t^*: t \in [0, T])$ is the optimal Fisher information trajectory.
We will demonstrate how this design approach leads to noise schedules with closed-forms in Section~\ref{sec:schedules}.

\subsection{Noise Schedule Design}\label{sec:schedules}
In this section, we consider target data distributions that satisfy the following additional assumption.
\begin{assumption} \label{ass:surrogate}
    The target data distribution is either strongly log-concave (SLC), or belongs to a class of isotropic Gaussian mixture models (GMMs).
    Specifically, the density $p_\star$ of the data distribution satisfies one of the following conditions:
    \begin{altassump}
        \item (SLC)
        \label{ass:logconcave}
        For some $m_\star, M_\star > 0$,
        \begin{equation}\label{eq:slc}
            m_\star I \preceq - \nabla^2 \log p_\star \preceq M_\star I .
        \end{equation}
        \item (GMM) \label{ass:gmm}
        For some mixture weights $\{\pi_i\}_{i=1}^L$, means $\{\mu_i\}_{i=1}^L$ with $\mu_i\in \mathbb{R}^d$, and $\nu > 0$, the density         $p_\star(x)=\sum_{i=1}^L \pi_i \varphi(x;\mu_i, \nu^2 I)$,
        where $\varphi(x;\mu_i, \nu^2 I)$ is the density of $\mathcal{N}(\mu_i, \nu^2 I)$.
    \end{altassump}
\end{assumption}

SLC distributions and GMMs are two representative classes of distributions.
An SLC distribution represents a distribution with good concentration properties, while a GMM is a prototypical multi-modal distribution.
We show in \Cref{prop:optimal_schedules_practice} that these two classes lead to the same form of characterization of $g^*$.
Based on this characterization, we derive a class of noise schedules with closed-form expressions in \Cref{cor:optimal_schedules_practice}.
Although real-world data distributions are highly complex and may not belong to either of these two classes, the noise schedules we obtain using insights from these two classes show strong empirical performance on complex real-world distributions, as demonstrated in \Cref{sec:experiments}.

\begin{proposition}\label{prop:optimal_schedules_practice}
    Suppose \Cref{ass:fisher_information} holds.
    When the data distribution additionally satisfies \Cref{ass:surrogate}, there exists a problem constant $\kappa \geq 1$ such that the $g^*$ found in \Cref{thm:sampling_error} satisfies
    \begin{equation}\label{eq:g_certificate}
        a_1 \exp \left(-\int_0^t (2 f(\tilde{t}) - \lambda^*) d \tilde{t} \right)
        \leq g^*(t)
        \leq
        a_2 \exp \left(- \int_0^t (2 f(\tilde{t}) - \lambda^*) d\tilde{t}  \right) ,
    \end{equation}
    where $a_1 = \lambda^* d / (\fisher_\star \kappa^2)$ and $a_2 = \lambda^* d / \fisher_\star$.

    In particular, it holds that for any $f \in \mathcal{F}$, there exists a time-independent constant $g_0$ such that
    \begin{equation}\label{eq:admissable_g}
        \appx(t) = g_0 \exp \left(- \int_0^t (2 f(\tilde{t}) - \lambda^*) d\tilde{t} \right)
    \end{equation}
    achieves $\tilde{\mathcal{O}}(d/n)$ sampling error in the KL-divergence.
\end{proposition}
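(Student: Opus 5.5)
The plan has two stages. First, use the explicit form \eqref{eq:gstar_form} of the optimal schedule from the proof of \Cref{thm:sampling_error} to get an exact integral formula for $g^*$. Then, under \Cref{ass:surrogate}, show that the ratio $\fisher_t/\lVert \nabla s_t\rVert_{L_2}^2$ is comparable to $1/d$, uniformly in $t$, up to problem constants.

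\textbf{Step 1: closed form for $\fisher_t^*$.} Substituting $g^*(t)=\lambda^*\fisher_t^*/\lVert\nabla s_t\rVert_{L_2}^2$ into the Fisher ODE \eqref{eq:fisher_ode} gives
\[
\dot\fisher_t^* = (2f(t)-\lambda^*)\fisher_t^*, \qquad \fisher_0^*=\fisher_\star .
\]
Hence
\[
\fisher_t^*=\fisher_\star\exp\Bigl(-\int_0^t(\lambda^*-2f(\tilde t))\,d\tilde t\Bigr).
\]
The integrand in the cost, $2f-\dot\fisher_t/\fisher_t$, is then the constant $\lambda^*$; this is consistent with the optimality computation in \Cref{thm:sampling_error}. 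Therefore
\[
g^*(t)=\frac{\lambda^*\fisher_\star}{\lVert\nabla s_t\rVert_{L_2}^2}\,\exp\Bigl(\int_0^t(2f-\lambda^*)\Bigr).
\]
The sign convention in the exponent should be checked carefully against the displayed bound \eqref{eq:g_certificate}. That bound is written with $a_i\propto 1/\fisher_\star$, which suggests the intended normalization is $d\,\fisher_t/(\fisher_\star\lVert\nabla s_t\rVert^2)$ times the exponential. I would reconcile this by writing $g^*$ as $\lambda^*\cdot\frac{\fisher_t^{*\,2}}{\fisher_\star\lVert\nabla s_t\rVert^2}\cdot\frac{\fisher_\star}{\fisher_t^*}$, and then bounding the factor $\fisher_t^{*2}/\lVert\nabla s_t\rVert^2$.

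\textbf{Step 2: sandwiching $\lVert\nabla s_t\rVert_{L_2}^2$.} The key inequalities are
\[
\frac{\fisher_t^2}{d}\le \lVert\nabla s_t\rVert_{L_2}^2\le \kappa^2\frac{\fisher_t^2}{d}.
\]
The lower bound holds for every distribution. Integration by parts gives $\fisher_t=-\mathbb E\,\mathrm{tr}\,\nabla s_t$, and Cauchy–Schwarz gives $(\mathrm{tr}A)^2\le d\lVert A\rVert^2$; together these yield $\fisher_t^2\le d\lVert\nabla s_t\rVert_{L_2}^2$, which produces the constant $a_2$. The upper bound needs \Cref{ass:surrogate}, and I treat the two cases separately.

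In the SLC case, strong log-concavity is preserved along the OU-type flow. The Hessian $-\nabla s_t$ is sandwiched between $m_tI$ and $M_tI$, where $m_t,M_t$ are explicit in $\alpha_t,\sigma_t$: for example $m_t=(\alpha_t^2/m_\star+\sigma_t^2)^{-1}$ by Brascamp–Lieb/Prékopa preservation, with the analogous formula for $M_t$. Then $\lVert\nabla s_t\rVert^2\le dM_t^2$ and $\fisher_t\ge dm_t$. Consequently $\lVert\nabla s_t\rVert^2\le (M_t/m_t)^2\fisher_t^2/d$. Since $M_t/m_t\le M_\star/m_\star$ uniformly in $t$, I take $\kappa=M_\star/m_\star$.

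In the GMM case, $p_t$ is again an isotropic GMM, with means $\alpha_t\mu_i$ and variance $\nu_t^2=\alpha_t^2\nu^2+\sigma_t^2$. The score Jacobian is $-\nu_t^{-2}I$ plus $\nu_t^{-4}$ times the posterior covariance of the means. This gives $\lVert\nabla s_t\rVert^2\lesssim d\nu_t^{-4}(1+R_t^2/\nu_t^2)^2$, where $R=\max\lVert\mu_i\rVert$. In the other direction, $\fisher_t\ge d/\nu_t^2-(\text{posterior variance terms})$; more robustly, the Cramér–Rao bound gives $\fisher_t\ge d^2/\mathbb E\lVert X_t-\mathbb E X_t\rVert^2$. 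Combining these yields a $\kappa$ depending on $R/\nu$, uniform in $t$ because $R_t/\nu_t=\alpha_tR/\nu_t\le R/\nu$.

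\textbf{Step 3: the surrogate schedule.} Given the sandwich \eqref{eq:g_certificate}, choose $g_0\in[a_1,a_2]$ and define $\appx$ as in \eqref{eq:admissable_g}. Then $\appx$ differs from $g^*$ by a bounded multiplicative factor in $[\kappa^{-2},\kappa^2]$. Plugging $\appx$ into \Cref{thm:ub}, with $\fisher_t$ now the trajectory it induces, the integrand becomes $g\lVert\nabla s_t\rVert^2/\fisher_t$ by \eqref{eq:fisher_ode}. Step 2 bounds this integrand above and below by constant multiples of $\lambda^*$, with constants depending on $\kappa$, so the integral cost is within a $\kappa^4$-type factor of the optimal one.

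To finish, I check that $\alpha_T^2/\sigma_T^2$ is still small at the chosen $T$. I would use $\sigma_T^2/\alpha_T^2=\int_0^T g/\alpha^2$, which is comparable to $e^{\lambda^*T}$, so the same choice of $\lambda^*,T$ as in \Cref{thm:sampling_error} works. This gives $\tilde{\mathcal O}(d/n)$ with $\kappa$ absorbed into the constants.

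\textbf{Main obstacle.} I expect the hardest part to be the uniform-in-$t$ upper bound $\lVert\nabla s_t\rVert^2\le\kappa^2\fisher_t^2/d$ in the GMM case. There the Jacobian has a positive posterior-covariance correction, and $\fisher_t$ must be lower bounded carefully. If the direct approach fails, I would fall back on $\fisher_t\ge d/(\alpha_t^2\operatorname{Var}$-bound$+\sigma_t^2)$ and accept a $\kappa$ polynomial in $1+R^2/\nu^2$.
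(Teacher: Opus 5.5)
Your Steps 1--2 are correct and follow the paper's route to the certificate \eqref{eq:g_certificate}. You solve $\dot{\fisher}^*_t=(2f-\lambda^*)\fisher^*_t$, factor $g^*=\frac{\lambda^*}{\fisher^*_t}\cdot\frac{(\fisher^*_t)^2}{\lVert\nabla s_t\rVert_{L_2}^2}$, and sandwich the last factor in $[d/\kappa^2,d]$. Your cruder GMM constant is acceptable, since only the existence of some $\kappa$ is claimed.

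The gap is in Step 3. Along $\appx$, the integrand of \Cref{thm:ub} is $\appx(t)\lVert\nabla s^\circ_t\rVert_{L_2}^2/\fisher^\circ_t$, computed for the \emph{new} forward process. Step 2 only places this between $\appx(t)\fisher^\circ_t/d$ and $\kappa^2\appx(t)\fisher^\circ_t/d$. To get $O(\kappa^2\lambda^*)$ you need $\appx(t)\fisher^\circ_t\lesssim\lambda^* d$, and nothing in your argument controls $\fisher^\circ_t$. The closeness $\appx\in[\kappa^{-2},\kappa^{2}]\,g^*$ is a statement about the control only. Changing $g$ changes the whole density path, so you cannot conclude $\fisher^\circ_t\asymp\fisher^*_t$; indeed a smaller $g$ injects less noise and makes $\fisher^\circ_t$ larger. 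For the same reason you cannot reuse the identity $g^*\lVert\nabla s_t\rVert^2/\fisher^*_t=\lambda^*$. This trajectory comparison, not the GMM bound, is the real obstacle.

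The missing ingredient is a comparison argument for the Fisher ODE driven by $\appx$ itself, which is how the paper proceeds. Insert the universal bound $\lVert\nabla s^\circ_t\rVert^2_{L_2}\ge(\fisher^\circ_t)^2/d$ into \eqref{eq:fisher_ode}. This gives the Riccati inequality $\dot{\fisher}^\circ_t\le 2f(t)\fisher^\circ_t-\appx(t)(\fisher^\circ_t)^2/d$. Set $v_t=1/\fisher^\circ_t$ and use $\appx(t)e^{2\int_0^t f}=g_0e^{\lambda^* t}$; integrating yields
\[
\fisher^\circ_t\le\frac{e^{2\int_0^t f}}{\fisher_\star^{-1}+\frac{g_0}{\lambda^* d}\bigl(e^{\lambda^* t}-1\bigr)},
\qquad\text{hence}\qquad
\appx(t)\,\fisher^\circ_t\le\frac{g_0e^{\lambda^* t}}{\fisher_\star^{-1}+\frac{g_0}{\lambda^* d}\bigl(e^{\lambda^* t}-1\bigr)}\le\max\{g_0\fisher_\star,\lambda^* d\}.
\]
Only then is the discretization term at most $hTd\kappa^4\max\{\lambda^*,g_0\fisher_\star/d\}^2$. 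This equals $hTd\kappa^4(\lambda^*)^2$ for any $g_0\le\lambda^* d/\fisher_\star$. Note that this step uses neither the certificate nor $g^*$.

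Your initialization check is sound and more explicit than the paper's. Exactly, $\sigma_T^2/\alpha_T^2=\int_0^T\appx(t)\alpha_t^{-2}\,dt=\frac{g_0}{\lambda^*}\bigl(e^{\lambda^* T}-1\bigr)$. With $g_0\ge a_1$, the initialization term is therefore at most $\kappa^2\fisher_\star\lVert X_\star\rVert_{L_2}^2/\bigl(d(e^{\lambda^*T}-1)\bigr)$. The usual logarithmic choice of $\lambda^*T$ then finishes the argument.
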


\Cref{prop:optimal_schedules_practice} establishes that the functional form of the optimal schedule can be bounded by static properties of the target distribution, specifically its initial Fisher information $\fisher_\star$ and a problem constant $\kappa$.
By replacing the dependence on the trajectory with these constants, the schedule $\appx$ circumvents the need to estimate the trajectory or solve the Fisher information ODE.
From a practical standpoint, this decouples the schedule design from the dynamics of the SDE; these parameters simply act as intrinsic problem constants that can either be directly estimated from the data distribution prior to sampling or treated as tunable hyperparameters.

\Cref{prop:optimal_schedules_practice} gives the functional form of the schedule $\appx$ in terms of $f$, and leaves $f$ as a free choice.
We now further restrict the class of schedules to obtain more concrete closed-form expressions.
In particular, we consider noise schedules in which $f$ and $g$ have an affine relationship.
We call this class the affine-coupled schedules (ACSs):
\begin{equation}\label{eq:ACS}
    \Theta_{\textrm{ACS}} = \left\{ (f, g) \in \Theta: f(t) = \theta g(t) + \omega \text{ for some } \theta, \omega \geq 0 \right\}
    .
\end{equation}
This ACS class encompasses Variance Exploding (VE) schedules where $f(t) = 0$ \citep{song2020generativemodelingestimatinggradients}, with parameter choice $(\theta, \omega) = (0, 0)$, and Variance Preserving (VP) schedules where $f(t) = g(t) / 2$ \citep{ho2020denoising,song2021scorebasedgenerativemodelingstochastic,nichol2021improveddenoisingdiffusionprobabilistic,chen2023importancenoiseschedulingdiffusion}, with parameter choice $(\theta, \omega) = (1/2, 0)$.
VE and VP schedules are widely used in standard diffusion models.
After restricting to ACSs, we obtain closed-form expressions for the noise schedules $(f,\appx)$.
\begin{corollary}\label{cor:optimal_schedules_practice}
Noise schedules of the form $(f, \appx)$ in \eqref{eq:admissable_g} within the ACS class \eqref{eq:ACS} can be written as
\begin{equation}\label{eq:optimal_schedules_practice}
    \begin{split}
        \text{when }2\omega\neq \lambda^*,\quad&
        f(t) = \frac{\theta g_0 (2 \omega - \lambda^*)}{(2 \theta g_0 + 2 \omega - \lambda^*) e^{(2 \omega - \lambda^*) t} - 2 \theta g_0} + \omega,\\
        &\appx (t) = \frac{g_0 (2 \omega - \lambda^*)}{(2 \theta g_0 + 2 \omega - \lambda^*) e^{(2 \omega - \lambda^*) t} - 2 \theta g_0},\\
        \text{when }2\omega=\lambda^*,\quad&
        f(t) = \frac{\theta g_0}{1 + 2 \theta g_0 t} + \omega,
        \quad\appx (t) = \frac{g_0}{1+ 2 \theta g_0 t},
    \end{split}
\end{equation}
with parameters $(\theta,\omega,g_0)$.
\end{corollary}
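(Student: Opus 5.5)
Write $g=\appx$ and substitute $f=\theta g+\omega$ into \eqref{eq:admissable_g}. This turns the schedule into a self-consistent integral equation for $g$ alone:
\begin{equation*}
    g(t) = g_0 \exp\left(-\int_0^t \bigl(2\theta g(\tilde t) + 2\omega - \lambda^*\bigr)\, d\tilde t\right).
\end{equation*}
Both sides are continuous in $t$, and the right-hand side is differentiable. So $g$ is differentiable, and the equation is equivalent to the initial value problem
\begin{equation*}
    \dot g(t) = -\bigl(2\theta g(t) + c\bigr) g(t), \qquad g(0)=g_0, \qquad c \coloneqq 2\omega-\lambda^* .
\end{equation*}
Conversely, any solution of this IVP satisfies the integral equation, as one sees by integrating $\dot g/g$. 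The plan is to solve this Bernoulli (logistic-type) ODE explicitly and then read off $f=\theta g+\omega$.

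Since $g>0$ on $[0,T]$, I will substitute $u=1/g$. This gives the linear ODE $\dot u = 2\theta + c\,u$ with $u(0)=1/g_0$. The two cases in \eqref{eq:optimal_schedules_practice} correspond exactly to whether $c\neq 0$ or $c=0$.
\begin{itemize}[leftmargin=*]
    \item If $c\neq 0$, then $u(t) = (1/g_0 + 2\theta/c)e^{ct} - 2\theta/c$. Multiplying numerator and denominator by $c g_0$ gives $g(t) = c g_0/\bigl((2\theta g_0 + c)e^{ct} - 2\theta g_0\bigr)$, which is the stated formula with $c=2\omega-\lambda^*$.
    \item If $c=0$, then $u(t)=1/g_0+2\theta t$, so $g(t)=g_0/(1+2\theta g_0 t)$.
\end{itemize}
In both cases $f=\theta g+\omega$ gives the displayed $f$. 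Uniqueness of the solution holds because the right-hand side $-(2\theta g+c)g$ is locally Lipschitz. Hence every schedule of the form \eqref{eq:admissable_g} in the ACS class must coincide with these expressions.

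The only point needing care is whether the closed forms are well defined and admissible, i.e.\ $(f,g)\in\Theta$: we need $g>0$ with bounded derivative on $[0,T]$, and $f\ge0$. For $c\ge 0$, or for $c<0$ with $\theta=0$, the denominator is clearly positive. For $c<0$ and $\theta>0$, the denominator $(2\theta g_0+c)e^{ct}-2\theta g_0$ equals $c<0$ at $t=0$ and stays negative for $t\ge0$, so $g=cg_0/(\cdots)>0$ throughout. Indeed, it is $(2\theta g_0+c)e^{ct}-2\theta g_0 \le c$ when $2\theta g_0+c\geq 0$; otherwise both terms are negative. The same sign argument covers $c>0$, and $1+2\theta g_0 t>0$ handles $c=0$.

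Hence $g$ is smooth and positive on $[0,T]$, and $f=\theta g+\omega\geq0$ because $\theta,\omega\ge0$. This confirms that the parametrization by $(\theta,\omega,g_0)$ describes exactly the ACS members of the family \eqref{eq:admissable_g}.
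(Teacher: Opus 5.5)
Your proposal is correct and follows essentially the same route as the paper: it too rewrites \eqref{eq:admissable_g} under $f=\theta g+\omega$ as $\dot g=-(2\theta g+2\omega-\lambda^*)g$, linearizes via $1/g$ with the integrating factor $e^{-(2\omega-\lambda^*)t}$, and handles $2\omega=\lambda^*$ separately as $\dot g=-2\theta g^2$. Your uniqueness and admissibility checks go beyond what the paper writes, with one harmless slip: for $c<0$ and $\theta=0$ the denominator equals $c e^{ct}<0$ rather than being positive, though $g=g_0e^{-ct}>0$ as required.
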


The noise schedule design in \Cref{cor:optimal_schedules_practice} generalizes several empirically successful schedules.
\begin{itemize}
    \item \textbf{VE-Exponential:} Setting $(\theta, \omega) = (0, 0)$, \eqref{eq:optimal_schedules_practice} simplifies to $\appx (t) = g_0 e^{\lambda^* t}$, which is the exponential schedule proposed by \citet{song2020generativemodelingestimatinggradients}.

    \item \textbf{VP-Sigmoid:}
    Setting $(\theta, \omega) = (1/2, 0)$, \eqref{eq:optimal_schedules_practice} simplifies to a sigmoidal function
    \begin{equation*}
        2f(t) = \appx (t) = \frac{g_0 \lambda^*}{g_0 + (c - g_0) e^{- \lambda^* t}} ,
    \end{equation*}
    which has been used in \citep{xu2022geodiff,jabri2023scalableadaptivecomputationiterative,chen2023importancenoiseschedulingdiffusion}.
\end{itemize}
Our results provide a theoretical explanation for the empirical success of these schedules.
Moreover, the noise schedule design in \Cref{cor:optimal_schedules_practice} introduces additional tunable parameters, $g_0$ and $c$, allowing practitioners to go beyond the commonly used schedules above and further improve performance.

\section{Analysis}\label{sec:analysis}
Here we present proof sketches for \Cref{thm:ub,thm:sampling_error,prop:optimal_schedules_practice,cor:optimal_schedules_practice}.
We focus on the main proof ideas and omit regularity conditions.
The full proofs and intermediate lemmas, are presented in Appendix~\ref{app:main_results} for \Cref{thm:ub,thm:sampling_error}, and in Appendix~\ref{app:schedules} for \Cref{prop:optimal_schedules_practice,cor:optimal_schedules_practice}.

\subsection{Proof Sketches for \Cref{thm:ub,thm:sampling_error}}\label{sec:proof_sketch}

\textit{Proof Sketch for \Cref{thm:ub}:}
The goal is to upper bound $\mathrm{KL}(p_\star \| \hat{p}_\star)$, the KL-divergence between the data distribution $p_\star$ and the output distribution $\hat{p}_\star$.

\textbf{Step 1: Error Decomposition.}
Following standard techniques that use Girsanov theorem and data processing inequality to analyze SDE samplers (see, e.g., \citep{song2021maximumlikelihoodtrainingscorebased,chen2022sampling,liuLetUsBuild2022}), we can decompose the total sampling error $\mathrm{KL}(p_\star \| \hat{p}_\star)$ into the initialization error and the accumulated discretization error over the reverse SDE steps:
\begin{equation}\label{eq:girsanov}
        \mathrm{KL}(p_\star \| \hat{p}_\star)
        \leq
        \underbrace{\frac{\alpha_T^2}{2 \sigma_T^2} \lVert X_\star \rVert^2_{L_2 (p_\star)}}_{\text{initialization error}}
        +
        \underbrace{\frac{1}{2} \sum_{k=0}^{n-1} \int_{t_k}^{t_{k+1}} g(t) \left\lVert
        s_{t} (X_t) - s_{t_k} (X_{t_k})\right\rVert_{L_2 (p_{t})}^2 d t}_{\text{discretization error}} .
\end{equation}
Here the discretization error is written in terms of forward process quantities using the distributional equivalence $X_\tau^\leftarrow \overset{d}{=} X_{T-\tau},\tau\in[0,T]$, and $t_k=T-\tau_{n-k}$.
The remainder of the proof focuses on bounding the discretization error.

\textbf{Step 2: Local Discretization Error.}
We now bound $\left\lVert s_{t} (X_t) - s_{t_k} (X_{t_k})\right\rVert_{L_2}^2$ for $t\in[t_k,t_{k+1}]$.
For this, it is more convenient to work with $s_{T-\tau} (X_\tau^\leftarrow)$ and show that it satisfies an SDE (\Cref{prop:dst}):
\begin{equation}\label{eq:dst}
    ds_{T-\tau} (X_\tau^\leftarrow)
    =
    - f(T-\tau) s_{T-\tau}(X_\tau^\leftarrow)
    + \sqrt{g(T-\tau)} \nabla s_{T-\tau} (X_\tau^\leftarrow) dB_\tau.
\end{equation}
Analyzing the movement of this SDE gives an upper bound on $\lVert s_{T-\tau} (X_\tau^\leftarrow) - s_{T-\tau_{n-k}} (X_{\tau_{n-k}}^\leftarrow) \rVert_{L_2}^2$, which can be converted into the following upper bound:
\begin{equation}\label{eq:local_discretization}
    \lVert s_t (X_t) - s_{t_k} (X_{t_k}) \rVert_{L_2}^2
    \lesssim
    h g(t) \lVert \nabla s_t (X_t) \rVert^2_{L_2}
    , \quad \forall t \in [t_k, t_{k+1}].
\end{equation}
Here when analyzing the movement of an SDE within a small time interval of length $h$, the key fact to leverage is the It\^o isometry, which says for a stochastic integral with respect to Brownian motion, $\mathbb{E}[(\int Y_t dB_t)^2]=\int \mathbb{E}[Y_t^2]dt.$
This makes the diffusion term in \eqref{eq:dst} the dominant term, which gives the $O(h)$ bound in \eqref{eq:local_discretization}, and makes other terms on the order of $O(h^2)$.

\textbf{Step 3: Fisher Information ODE.}
Now a key term to tackle is $\lVert \nabla s_t (X_t) \rVert^2_{L_2}$ in the bound \eqref{eq:local_discretization}.
Using Bochner's formula, we show that it is related to the Fisher information via the following ODE (\Cref{lem:fisher_ode}):
\begin{equation}\label{eq:fisher-equiv}
    \dot{\fisher}_t = 2f(t)\fisher_t - g(t) \lVert \nabla s_t(X_t) \rVert_{L_2}^2 \quad \Leftrightarrow \quad g(t) \lVert \nabla s_t(X_t) \rVert_{L_2}^2 = 2f(t)\fisher_t-\dot{\fisher}_t.
\end{equation}

\textbf{Step 4: Final Error Bound.}
Substituting \eqref{eq:fisher-equiv} into the discretization error gives
\begin{equation}
    \text{discretization error} \lesssim h \int_0^T g(t)(2f(t)\fisher_t-\dot{\fisher}_t)dt.
\end{equation}
However, this is still not easy to optimize since it involves $f(t),g(t),J_t$, which are related.
Therefore, we eliminate $g(t)$ by writing the bound as
\begin{equation}
    h\int_0^T g(t)(2f(t)\fisher_t-\dot{\fisher}_t)dt
    =h\int_0^T \frac{(2f(t)\fisher_t-\dot{\fisher}_t)^2}{\lVert \nabla s_t(X_t) \rVert_{L_2}^2}dt,
\end{equation}
and using the concavity of entropy power $\fisher_t^2\le d \lVert \nabla s_t(X_t) \rVert_{L_2}^2$ (\Cref{lem:st_Fisher_lb}), which gives the final error bound
\begin{equation}\label{eq:discretization-upper}
    \text{discretization error} \lesssim h d \int_0^T \left( 2 f(t) - \frac{\dot{\fisher}_t}{\fisher_t} \right)^2 dt.
\end{equation}

\textit{Proof Sketch for \Cref{thm:sampling_error}:}
For any $f$, we aim to find the optimal $g^*$, which must produce a trajectory $(\fisher_t^* : t \in [0, T])$ that minimizes the upper bound on the discretization error in \eqref{eq:discretization-upper}.
This problem can be solved via the Euler-Lagrange equation, which states that to minimize a functional $\int_0^T \mathcal{L}(t, u_t, \dot{u}_t) dt$, the optimal condition is described by
\begin{equation}\label{eq:EL}
    \frac{\partial \mathcal{L}}{\partial u_t} - \frac{d}{dt} \frac{\partial \mathcal{L}}{\partial \dot{u}_t} = 0 .
\end{equation}
Let $u_t=\log \fisher_t$, so $\dot{u}_t=\dot{\fisher}_t/\fisher_t$ and $\mathcal{L}(t, u_t, \dot{u}_t)=(2f(t)-\dot{u}_t)^2$.
Then the optimal condition for the optimal trajectory $(\fisher_t^*: t \in [0, T])$ is
\begin{equation}\label{eq:optimal_fisher}
    2 f(t) - \frac{\dot{\fisher}_t^*}{\fisher_t^*} = \lambda,
\end{equation}
where $\lambda>0$ is a time-independent constant.
This condition also gives the characterization of the optimal $g^*$ in \eqref{eq:gstar_form}.
Then the discretization error bound becomes
\begin{equation*}
    \text{discretization error}\lesssim hd\lambda^2T.
\end{equation*}
Balancing the initialization error and the discretization error by choosing $\lambda = \lambda^*$ yields the $\mathrm{KL}(p_\star \| \hat{p}_\star)=\tilde{\mathcal{O}}(d/n)$ bound.

\subsection{Proof Sketches for \Cref{prop:optimal_schedules_practice,cor:optimal_schedules_practice}}\label{sec:proof_sketch_slc}

We prove \Cref{prop:optimal_schedules_practice} by using additional properties of the forward process when the data distribution is either SLC or a GMM.
After obtaining the form of $\appx$ in \Cref{prop:optimal_schedules_practice},
we further restrict to the ACS class in \Cref{cor:optimal_schedules_practice}, which reduces the noise schedule design problem to solving a solvable non-linear ODE.

\textit{Proof sketch for \Cref{prop:optimal_schedules_practice}:}
We start with the optimal condition for $g^*$ in \eqref{eq:gstar_form}, 
restated below
\[
g^*(t) = \lambda^* \frac{\fisher^*_t}{\lVert \nabla s_t \rVert^2_{L_2}}.
\]
The key is to understand how $\fisher_t$ is related to $\lVert \nabla s_t \rVert^2_{L_2}$.

\textbf{Step 1:
Establishing Equivalence Between $\fisher^2_t$ and $\lVert \nabla s_t \rVert^2_{L_2}$ (\Cref{lem:score_fisher_equivalence}).}
In this step, we show
\begin{equation}\label{eq:st_Fisher_equivalence}
    \frac{d}{\kappa^2} \leq  \frac{\fisher_t^2}{\lVert\nabla s_t\rVert^2_{L_2}} \leq d, \quad \forall t \in [0, T],
\end{equation}
where $\kappa$ is a constant determined by the data distribution $p_\star$.

\textbf{Upper bound $d$ (\Cref{lem:st_Fisher_lb}):}
This upper bound is again due to the concavity of the entropy power, which we have used in the proof of \Cref{thm:ub}.
This upper bound holds for general data distributions without requiring \Cref{ass:surrogate}.

\textbf{Lower bound $d/\kappa^2$:}
From the forward SDE, we know that  $X_t \overset{d}{=} \alpha_t X_\star + \sigma_t Z$, where $Z \sim \mathcal{N}(0, I)$ and $Z\perp X_\star$.
Therefore, when the data distribution $p_\star$ is either SLC or a GMM under \Cref{ass:surrogate}, it is known that the density $p_t$ of $X_t$ is also either SLC or a GMM, respectively.
With this, we show that for the Hessian $-\nabla^2 \log p_t=-\nabla s_t$, its Frobenius norm (which determines $\lVert\nabla s_t\rVert^2_{L_2}$) cannot diverge arbitrarily far from its trace (which determines $\fisher_t$).
The maximum possible divergence between these two quantities is upper bounded by a problem constant $\kappa^2$.
In the SLC case, $\kappa$ is the condition number of the data distribution;
in the GMM case, $\kappa$ is a constant determined by the ratio between the radius of the mixture means and the standard deviation of each Gaussian component.

\textbf{Step 2: Certificate on the Optimal Schedule.}
Since $g^*(t) = \lambda^* \frac{\fisher^*_t}{\lVert \nabla s_t \rVert^2_{L_2}}=\frac{\lambda^*}{\fisher^*_t}\frac{(\fisher^*_t)^2}{\lVert \nabla s_t \rVert^2_{L_2}}$, the equivalence in \eqref{eq:st_Fisher_equivalence} gives
\begin{equation}\label{eq:gstar_certificate_intermediate}
    \frac{1}{\kappa^2} \cdot \frac{\lambda^* d}{\fisher_t^*} \leq g^* (t) \leq \frac{\lambda^* d}{\fisher_t^*} .
\end{equation}
Since the Euler--Lagrange equation yields
\begin{equation}
    \fisher_t^*=\fisher_\star \exp \left(\int_0^t (2 f(\tilde{t}) - \lambda^*) d \tilde{t} \right),
\end{equation}
we obtain the certificate
\begin{equation*}
    \frac{1}{\kappa^2} \cdot \frac{\lambda^* d}{\fisher_\star} \exp \left(-\int_0^t \left(2 f(\tilde{t}) - \lambda^*\right) d\tilde{t} \right) \leq
    g^* (t)
    \leq
    \frac{\lambda^* d}{\fisher_\star} \exp \left(-\int_0^t \left(2 f(\tilde{t}) - \lambda^*\right) d\tilde{t} \right)
    .
\end{equation*}

The $\tilde{\mathcal{O}}(d/n)$ sampling error is further established with properly chosen parameter $g_0$ (\Cref{prop:admissable_g_robustness}).

\textit{Proof Sketch of \Cref{cor:optimal_schedules_practice}:}
The approximated schedule $\appx$ in \eqref{eq:admissable_g} satisfies the ODE
\begin{equation}
    \frac{d}{dt} \appx (t) = - (2f(t) - \lambda^*) \appx (t) .
\end{equation}
Within the ACS class, $f(t) = \theta \appx (t) + \omega$, which turns the ODE into
\begin{equation}
    \frac{d}{dt} \appx (t) = - (2\theta \appx (t) + 2\omega - \lambda^*) \appx (t) .
\end{equation}
Solving this ODE yields the functional forms in \Cref{cor:optimal_schedules_practice}.
\section{Experiments}\label{sec:experiments}

In this section, we present experimental results on evaluating the generation quality of our designed noise schedules against standard baselines on CIFAR-10 and ImageNet datasets.

We first explain the parameterization used in the experiments.
Note that although our noise schedule $\appx$ in \eqref{eq:admissable_g} has $\lambda^*$ in the expression, we can replace $\lambda^*$ with another constant and adjust the value of $g_0$ accordingly.
In particular, we introduce a tunable parameter $\gamma$ and write $\appx$ as
\begin{equation}\label{eq:appx_with_c}
    \appx(t) = g_0 \exp\left(-\int_0^t (2 f(\tilde{t}) - c) d\tilde{t} \right),
    \quad c = \gamma \lambda^* .
\end{equation}
Then deploying the Affine-Coupled Schedule (ACS) in practice requires configuring the hyperparameters $(\theta, \omega, \gamma, g_0)$.
Noise schedules used in practice often satisfy additional constraints.
We consider the common choice that the forward process injects noise monotonically $g'(t) > 0$.
Combined with the constraints in $\Theta_{\textrm{ACS}}$, this constraint allows us to analytically determine $g_0$ and identify a relationship among the parameters $(\theta, \omega, \gamma)$.
The exact parametrizations are deferred to \Cref{proof:hparam_constraints_rmk}.

Beyond these constraints, \Cref{thm:sampling_error} indicates that the parameters of the noise schedule should vary with the Neural Function Evaluation (NFE) budget $n$.
To optimally balance initialization and discretization errors, $\lambda^*$ must scale dynamically with $n$.
As shown in the \Cref{proof:optimal_lambda}, $\lambda^*$ is of the form $\mathcal{W}(K n)$, where $\mathcal{W}$ is the Lambert-$\mathcal{W}$ function. The constants $K$ and $\gamma$ act as tunable hyperparameters.
This procedure instantiates an adaptive schedule $(f, \appx)$ that automatically adjusts its curvature and variance trajectory for any chosen inference budget.

We evaluate the generation quality of our adaptive ACS framework against standard baselines (VP-Linear, VP-Cosine, and VP-Sigmoid) on unconditional CIFAR-10 and class-conditional ImageNet (64 $\times$ 64).
For a fair comparison, we utilize standard pre-trained score networks from \citep{karrasElucidatingDesignSpace2022}.
We exploit the statistical equivalence of diffusion SDEs (\Cref{app:score_function_statistical_equivalence}) to map and evaluate these pre-trained models under the respective noise schedules without any fine-tuning.
The hyperparameters for all four schedules were tuned on an identical computational budget (\Cref{app:experimental_details}).

\begin{table}[th]
    \centering
    \caption{
        Mean FID scores for different noise schedules across various NFEs.
        Bold values indicate the best results and those within one standard deviation of the minimum FID; see \Cref{tab:fid_stds}.
    }\label{tab:fid_results}
\begin{tabular}{l|ccccc|ccccc}
\toprule
Dataset & \multicolumn{5}{c}{CIFAR10} & \multicolumn{5}{c}{ImageNet} \\
\cmidrule(lr){2-6} \cmidrule(lr){7-11}
NFE & 10 & 20 & 30 & 40 & 50 & 10 & 20 & 30 & 40 & 50 \\
\midrule
Linear \citep{ho2020denoising} & \textbf{29.98} & 17.78 & 13.80 & 11.60 & 10.24 & \textbf{25.59} & 14.19 & 10.53 & 8.63 & 7.46 \\
Cosine \citep{nichol2021improveddenoisingdiffusionprobabilistic} & 36.49 & 16.42 & 11.55 & 9.43 & 8.25 & 31.12 & \textbf{13.06} & 8.70 & 6.74 & 5.67 \\
Sigmoid \citep{jabri2023scalableadaptivecomputationiterative} & 35.21 & 15.20 & 10.44 & 8.33 & 7.26 & 29.69 & 14.02 & 9.86 & 7.89 & 6.77 \\
ACS (Ours) & 30.41 & \textbf{14.64} & \textbf{9.94} & \textbf{7.63} & \textbf{6.49} & 27.16 & \textbf{13.10} & \textbf{8.57} & \textbf{6.51} & \textbf{5.30} \\
\bottomrule
\end{tabular}
\end{table}

\Cref{tab:fid_results} reports the FID-50k scores across varying computational budgets ranging from NFE 10 to 50, corroborating our theoretical findings.
While VP-Linear performs competitively in the low-compute regime (NFE = 10), its generation quality degrades rapidly relative to the baselines at higher computational budgets due to its inability to adapt to the changing discretization step size.
A theoretical analysis of the linear schedule is included in \Cref{app:vp_linear_error_bound}.
By contrast, ACS adapts its parameters as a function of $n$.
This theoretically grounded adaptation balances the fundamental trade-off between initialization and discretization errors, enabling ACS to consistently outperform heuristic baselines across diverse compute regimes.

\section{Conclusion}\label{sec:conclusion}
We introduced an optimal control framework for designing noise schedules, utilizing a Fisher information ODE to bypass intractable PDE constraints.
Our approach guarantees state-of-the-art $\tilde{\mathcal{O}}(d/n)$ KL divergence bounds and yields a novel class of Affine-Coupled Schedules (ACS) that dynamically adapt to compute budgets, achieving superior FID scores on standard benchmarks.
Future work will explore extending our theoretical characterizations of noise schedules that achieve the $\tilde{\mathcal{O}}(d/n)$ bound to broader classes of complex data distributions.

\bibliographystyle{abbrvnat}

\bibliography{bibliography}

\newpage
\appendix

\ifpreprint
\else
\section{Limitations and Broader Impact}\label{app:limitations}
In our experiments, we focus on optimizing the noise schedule during the inference (generation) phase using pre-trained score networks.
Our theoretical framework naturally separates the schedule parameters into structural inference biases $(\theta, \omega)$ and fundamental data constants $(K, \gamma)$.
While we tune $K$ and $\gamma$ post-hoc in our experiments, these parameters theoretically reflect intrinsic properties of the data distribution, such as the Fisher information and condition number.
A natural limitation of our empirical setup is that it treats these as tunable inference hyperparameters rather than learned quantities.
Future work could explore incorporating $K$ and $\gamma$ into the training objective itself, allowing the model to jointly learn the score function and its optimal continuous-time sampling parameters from scratch.

Broader Impact:
Diffusion models are widely used for image generation, offering benefits in creative workflows, such as assisting users with advanced photo editing, accelerating design prototyping, and democratizing digital art creation.
However, the ability to synthesize highly realistic images also carries inherent risks, including the potential to generate malicious deepfakes, infringe on intellectual property, and facilitate the spread of visual misinformation.
Although this paper focuses primarily on theoretical analysis and algorithm design, foundational improvements ultimately advance the broader capabilities of these generative systems.
Consequently, the theoretical nature of our work makes it no less susceptible to these downstream harms than applied research, though its potential negative societal impacts are no more significant than those of other foundational contributions in this area.
These risks can be mitigated through responsible deployment strategies, such as embedding watermarks into commercial models to ensure AI-generated content remains identifiable.
 \fi

\section{Additional Related Works}\label{app:related_works}

Early theoretical analyses of diffusion models \citep{blockGenerativeModelingDenoising2022,bortoliConvergenceDenoisingDiffusion2023,chen2022sampling} focused predominantly on SDE samplers utilizing first-order discretization schemes under constant noise schedules.
A primary objective in this line of work has been to obtain a bound on the sampling error, typically measured in the total variation (TV) distance.
Because their analysis is restricted to constant noise schedules, their bounds scale sub-optimally with respect to the data dimension $d$ and problem constants.
\citet{hchen_improved_score,conforti2024klconvergenceguaranteesscore} strengthened these results by establishing bounds on the KL divergence, which implies a similar scaling in the TV distance through Pinsker's inequality.

The effect of time-varying noise schedules have been explored more recently.
\citet{benton2024nearly,conforti2024klconvergenceguaranteesscore} showed that by utilizing time-varying noise schedules, the KL error bound's dependence on the dimension $d$ and problem constants can be improved significantly such that it grows linearly with $d$.
Their results imply that the iteration complexity $n \asymp d/\epsilon^2$ is sufficient to achieve $\mathrm{KL}(p_\star \| \hat{p}_\star) \lesssim \epsilon^2$ error bounds.
These improved error bounds are achieved by considering constant $(f, g)$ and time-varying discretization step size $\{h_k\}$.
As mentioned in \citep[Remark 3]{hchen_improved_score}, such schedules can be viewed as fixing a constant-in-time $f$ and using $g$ that increases exponentially with time.
While these works establish $\tilde{\mathcal{O}}(d/n)$ rates, the specific schedule constructions required by their analyses depart from the standard affine-coupled schedules commonly favored in empirical implementations.

Beyond tuning the standard time-reversal schedule, a conceptually related problem explored by \citet{chen2025lipschitz} seeks to design a reverse process that differs from the exact time-reversal of the forward process.
To design this mismatched schedule, \citet{chen2025lipschitz} propose to minimize the Lipschitz constant of the resulting reverse drift.
However, such generation processes also depart from standard empirical pipelines.
In contrast, our framework aligns with practice by maintaining the exact time-reversal of the true forward SDE and directly minimizes a rigorous upper bound on the KL sampling error, rather than relying on a regularity proxy.

Lastly, noise schedule design was considered in the context of Probability Flow ODEs.
\citet{gao2024convergence} analyze the Wasserstein-2 error, requiring the data distribution be strongly log-concave.
Under this regime, they conclude that VE schedules exhibit worse iteration complexity than VP schedules.
In contrast, we analyze stochastic SDE samplers using KL divergence and establish an optimal control framework based on Fisher information dynamics.
We require only mild regularity conditions to prove $\tilde{\mathcal{O}}(d/n)$ bounds, where a strongly log-concave or a Gaussian mixture model is assumed only to derive closed-form parametrizations.
Under this setting, we prove that both VE-exponential and VP-sigmoid schedules for SDE samplers can achieve the $\tilde{\mathcal{O}}(d/n)$ error rate.

\section{Supplement to \Cref{sec:preliminaries}}\label{app:preliminaries}
\subsection{Statistical Equivalence of Score Functions}\label{app:score_function_statistical_equivalence}
We describe how to map a score function trained for one noise schedule to the score function corresponding to another noise schedule using their statistical equivalence as discussed in \citep{chen2025lipschitz}.
In this section, we use $\tau_t$ to be a map from one time-scale $t$ to another time-scale $\tau_t$.

Let $X_t^{(1)} = \alpha_t^{(1)} X_\star + \sigma_t^{(1)} Z$ and $X_t^{(2)} = \alpha_t^{(2)} X_\star + \sigma_t^{(2)} Z$ be two random variables parameterized by $(\alpha_t^{(i)}, \sigma_t^{(i)})$.
We first identify the equivalent time $\tau_t$ in the first schedule that matches the SNR of the second schedule at time $t$:
\begin{align*}
    \frac{\alpha_{\tau_t}^{(1)}}{\sigma_{\tau_t}^{(1)}} = \frac{\alpha_t^{(2)}}{\sigma_t^{(2)}}
\end{align*}
Because the SNRs are identical, $X_t^{(2)}$ and $X_{\tau_t}^{(1)}$ differ strictly by a scaling factor.
Specifically,
\begin{align*}
    X_t^{(2)}
    &=
    \sigma_t^{(2)} \left(\frac{\alpha_t^{(2)}}{\sigma_t^{(2)}} X_\star + Z \right)
    =
    \sigma_t^{(2)} \left(\frac{\alpha_{\tau_t}^{(1)}}{\sigma_{\tau_t}^{(1)}} X_\star + Z \right)
    =
    \frac{\sigma_t^{(2)}}{\sigma_{\tau_t}^{(1)}}
    X_{\tau_t}^{(1)}
    .
\end{align*}
Applying the change of variables formula for probability densities, $p_t^{(2)}(x) = \left( \frac{\sigma_{\tau_t}^{(1)}}{\sigma_t^{(2)}} \right)^d p_{\tau_t}^{(1)}\left( \frac{\sigma_{\tau_t}^{(1)}}{\sigma_t^{(2)}} x \right)$. Taking the gradient of the logarithm of this density with respect to $x$ yields the translation:
\begin{equation}\label{eq:score_equivalence}
    s_t^{(2)}(x) = \frac{\sigma_{\tau_t}^{(1)}}{\sigma_t^{(2)}} s_{\tau_t}^{(1)}\left( \frac{\sigma_{\tau_t}^{(1)}}{\sigma_t^{(2)}} x \right)
\end{equation}

\textbf{Score Function from Noise Prediction Model:}
It is common to train a noise prediction model
\begin{align*}
    S_t (x) = \mathbb{E}[ Z \mid X_t = x]
\end{align*}
in place of the score function $s_t$.
The score of the conditional distribution of $(X_t \mid X_\star) \sim \mathcal{N}(\alpha_t X_\star, \sigma_t^2 I)$ is given by
\begin{align*}
    \nabla_{x_t} \log p(x_t \mid x_\star) = -\frac{x_t - \alpha_t x_\star}{\sigma_t^2} = -\frac{Z}{\sigma_t} .
\end{align*}
The marginal score is the expectation of the conditional score, which gives the general relationship
\begin{align*}
    s_t (x) = -\frac{1}{\sigma_t} S_t(x)
\end{align*}

\subsection{Non-Zero Score Matching Error}\label{app:score_matching}
For any noise schedule $(f, g)$, define the score matching error $\epsilon_{sc}$ as
\begin{align*}
    \epsilon_{sc}^2  \coloneqq \int_0^T g(t) \lVert s_t - \hat{s}_t \rVert_{L_2}^2 dt.
\end{align*}
Applying the Girsanov theorem and the data processing inequality as in \citep{chen2022sampling,benton2024nearly,conforti2024klconvergenceguaranteesscore} and following the analysis in \Cref{sec:proof_sketch}, the bound \eqref{eq:girsanov} is replaced by
\begin{equation}\label{eq:girsanov_sm}
        \mathrm{KL}(p_\star \| \hat{p}_\star)
        \leq
        \epsilon_{sc}^2
        + \frac{\alpha_T^2}{2 \sigma_T^2} \lVert X_\star \rVert_{L_2}^2
        +
        \frac{1}{2} \sum_{k=0}^{n-1} \int_{\tau_k}^{\tau_{k+1}} g(t) \left\lVert
        s_{t} (X_\tau^\leftarrow) - s_{t_k} (X_{\tau_k}^\leftarrow)\right\rVert_{L_2 (p_{t})}^2 d\tau
        .
\end{equation}
To address the impact of the score matching error $\epsilon_{sc}^2$, we evaluate two distinct paradigms: utilizing an existing pre-trained score network, and training a new model from scratch.

\textbf{Pre-Trained Statistical Equivalence:}
Let $(f_1, g_1)$ and $(f_2, g_2)$ denote two distinct noise schedules with corresponding marginal distributions characterized by parameters $(\alpha_t^{(1)}, \sigma_t^{(1)})$ and $(\alpha_t^{(2)}, \sigma_t^{(2)})$. Suppose there exists a monotonically increasing time-map $\tau_t$ such that the Signal-to-Noise Ratios (SNRs) of the two schedules are strictly equivalent, i.e.,
\begin{equation}
    \frac{\alpha_{\tau_t}^{(1)}}{\sigma_{\tau_t}^{(1)}} = \frac{\alpha_t^{(2)}}{\sigma_t^{(2)}}, \quad \forall t \in [0, T_2].
\end{equation}
As established in \Cref{app:score_function_statistical_equivalence}, under this SNR equivalence, the random variables $X_t^{(2)}$ and $X_{\tau_t}^{(1)}$ differ only by a deterministic spatial scaling.
Consequently, the score network $\hat{s}^{(2)}$ corresponding to schedule $(f_1, g_1)$ can be derived from a pre-trained score function $\hat{s}^{(1)}$ as in \eqref{eq:score_equivalence}.
By applying this change of variables, the score matching error $\epsilon_{sc}^2$ evaluated under $(f_2, g_2)$ is fundamentally governed by the pre-trained approximation error under $(f_1, g_1)$.
Thus, implementing our optimal optimal noise schedule design natively preserves the learned capabilities of an existing pre-trained model without incurring unaccounted optimization penalties.

\textbf{Universal Approximation:}
Alternatively, when training a score network $\hat{s}_t(x)$ from scratch under noise schedule $(f, g) \in \Theta$, the score matching error can be driven arbitrarily close to zero.
To formalize this, we rewrite the score matching objective as an $L_2$ distance with respect to a joint space-time measure.
Let us define the measure $d\mu(x, t) = g(t)p_t(x)dxdt$ on $\mathbb{R}^d \times [0, T]$.
The score matching error is then equivalently given by:
\begin{equation}
    \epsilon_{sc}^2 = \int_0^T \int_{\mathbb{R}^d} \|\hat{s}_t(x) - s_t(x)\|^2 d\mu(x, t).
\end{equation}
Because $p_t(x)$ integrates to $1$ over $\mathbb{R}^d$, the total measure is $\mu(\mathbb{R}^d \times [0, T]) = \int_0^T g(t) dt$.
For any schedule $(f, g) \in \Theta$, this integral is finite, making $\mu$ a finite measure.
Furthermore, the target score function $s_t(x)$ resides within the space $L_2(\mu)$, as its norm is strictly bounded by the integrated Fisher information:
\begin{equation}
    \| s_t \|_{L_2(\mu)}^2 = \int_0^T g(t) \mathcal{J}_t dt < \infty.
\end{equation}
The finiteness of this integral follows directly from the integrability of $g(t)$ and our previous derivation that the Fisher information $\mathcal{J}_t$ remains bounded over the finite interval $t \in [0, T]$ (under \Cref{ass:fisher_information}).
Under these conditions, the universal approximation theorem (e.g., \citep{hornik1991approximation}) guarantees that the class of neural networks is dense in $L_2(\mu)$.
Therefore, given sufficient model capacity and adequate samples from the forward process, there exists a parameterization such that the score matching error $\epsilon_{sc}^2$ can be rendered arbitrarily small, regardless of the noise schedule chosen.

\subsection{Constant Step Size Suffices for Time-Varying Noise Schedules}\label{app:constant_step_size}
We show that choosing $(f, g)$ with constant step size is without loss of generality.
Consider a choice of noise schedule $(f_1, g_1)$ and time-varying step size with time grid $\{\tau_k^{(1)}\}$.
Denote by $\{\hat{X}_{\tau_k^{(1)}}^{\leftarrow, (1)}\}$ the EI discretized process according to this noise schedule and time grid.
We show that there exists a noise schedule $(f_2, g_2)$ and time grid $\{\tau_k^{(2)}\}$ generated by constant step size $h$, i.e., $\tau_k^{(2)} = k h$, such that the EI discretized process $\{\hat{X}_{\tau_k^{(2)}}^{\leftarrow, (2)}\}$ admits the same distribution as $\{\hat{X}_{\tau_k^{(1)}}^{\leftarrow, (1)}\}$.

Fix $(f_1, g_1)$, $\{\tau_k^{(1)}\}$, and $\{\tau_k^{(2)}\}$ where $\tau_k^{(2)} = k h$.
Let $\phi: \tau_k^{(2)} \mapsto \tau_k^{(1)}$ be a continuous differentiable map between the two time-scale grids.
We consider the construction
\begin{equation}\label{eq:schedule_tv_ti_map}
    f_2 (T - \tau^{(2)}) = f_1 (T - \phi(\tau^{(2)})) \cdot \phi'(\tau^{(2)}),
    \quad
    g_2 (T - \tau^{(2)}) = g_1 (T - \phi(\tau^{(2)})) \cdot \phi'(\tau^{(2)}) ,
\end{equation}
for all $\tau^{(1)} \in \{\tau_k^{(1)}\}$ and $\tau^{(2)} \in \{\tau_k^{(2)}\}$, where $\phi'$ is the temporal derivative of $\phi$.
The SDE governing $\{\hat{X}_{\tau_k^{(1)}}^{\leftarrow, (1)}\}$ is
\begin{equation}\label{eq:EI_tv_step}
    d\hat{X}_{\tau^{(1)}}^{\leftarrow, (1)}
        =
    \left[ f_1(T - \tau^{(1)}) X_{\tau^{(1)}}^{(1)} + g_1(T - \tau^{(1)}) s_{T-\tau_k^{(1)}}(X_{\tau_k^{(1)}}^{(1)}) \right] d\tau^{(1)} + \sqrt{g_1(T - \tau^{(1)})} dB_{\tau^{(1)}}
\end{equation}
Substituting the differentials
\begin{align*}
    d\tau^{(1)} = \phi'(\tau^{(2)}) d\tau^{(2)},
    \quad
    dB_{\phi(\tau^{(2)})} = \sqrt{\phi'(\tau^{(2)})} dB_{\tau^{(2)}}
\end{align*}
for \eqref{eq:EI_tv_step}, we obtain
\begin{align*}
    d\hat{X}_{\tau^{(2)}}^{\leftarrow, (2)}
        &=
    \Big[ f_1(T - \phi(\tau^{(2)})) \hat{X}_{\tau^{(2)}}^{\leftarrow, (2)}
    \\ &
    + g_1(T - \phi(\tau^{(2)})) s_{T-\tau_k^{(1)}}(\hat{X}_{\tau_k^{(2)}}^{(2)}) \Big] v'(\tau^{(2)}) d\tau^{(2)} + \sqrt{g_1(T - \phi(\tau^{(2)})) \phi'(\tau^{(2)})} dB_{\tau^{(2)}}
    .
\end{align*}
Substituting \eqref{eq:schedule_tv_ti_map} for the above equation, we obtain
\begin{align*}
    d\hat{X}_{\tau^{(2)}}^{\leftarrow, (2)}
    &=
    \left[ f_2(T - \tau^{(2)}) \hat{X}_{\tau^{(2)}}^{\leftarrow, (2)} + g_2(T - \tau^{(2)}) s_{T-\phi(\tau_k^{(2)})}(X_{\tau_k^{(2)}}^{\leftarrow, (2)}) \right] d\tau^{(2)}
    \\ &
    + \sqrt{g_2(T - \tau^{(2)})} dB_{\tau^{(2)}}
    .
\end{align*}
Since $\hat{X}_{\tau^{(2)}}^{\leftarrow, (2)} \overset{d}{=} \hat{X}_{\tau^{(1)}}^{\leftarrow, (1)}$, the sequence of variables at the respective grid points $\{\tau_k^{(1)}\}$ and $\{\tau_k^{(2)}\}$ have the same distribution.

\section{Preliminaries}

Throughout the paper, we use the notation $(\alpha_t, \sigma_t)$ to denote the coefficients of $X_t = \alpha_t X_\star + \sigma_t Z$, which is the solution to the forward SDE~\eqref{eq:dXt}.
The coefficients can be evaluated as $\alpha_t = \exp(-\int_0^t f(\tilde{t}) d\tilde{t})$ and $\sigma_t^2$ as a solution to the ODE
\begin{equation}\label{eq:dsigmat}
    \frac{d}{dt} \sigma_t^2 = - 2f(t) \sigma_t^2 + g(t), \quad \sigma_0 = 0 .
\end{equation}
The name ``variance preserving'' is used to refer to schedules satisfying $f(t) > 0$ and $g(t) = 2f(t)$ because they preserve the variance when $\mathrm{Var}(X_\star) \coloneqq \lVert X_\star \rVert_{L_2}^2 = \lVert Z \rVert_{L_2}^2$.

\textbf{Notations:}
For the proofs in the appendix, we use $t$ and $\tau = T - t$ to denote the corresponding forward and reverse time indices.
The notation $\otimes$ is used to denote the tensor product, and $\lVert \cdot \rVert$ is used to denote the Hilbert-Schmidt norm for order $r$ tensors in $(\mathbb{R}^d)^{\otimes r}$.
The divergence of a vector-valued function $\phi$ is denoted by $\nabla \cdot \phi$.

\subsection{Noise Schedules}\label{app:noise_schedule}
In this section, we describe the coefficients $(f, g)$ that correspond to various noise schedules used in practice.

\textbf{Exponential:}
Variance exploding models were used in \citep{song2020generativemodelingestimatinggradients}, where the corresponding SDE form \citep{song2021scorebasedgenerativemodelingstochastic} is given by
\begin{align*}
    dX_t = \sqrt{g(t)} dB_t,
\end{align*}
where $g(t)$ is chosen to grow exponentially with $t$.

\textbf{Sigmoid:}
The sigmoid schedule \citep{chen2023importancenoiseschedulingdiffusion} corresponds to the forward process \eqref{eq:baralpha} with
\begin{align*}
    \bar{\alpha}_t &= \frac{\sigma(h(1)) - \sigma(h(t))}{\sigma(h(1)) - \sigma(h(0))},
    \quad
    h(t) = \frac{t (\theta_{\max} - \theta_{\min}) + \theta_{\min}}{\tau_{sig}} ,
\end{align*}
where $\sigma$ is the sigmoid function.
The map $t \mapsto h(t)$ is interpreted as the corresponding timescale.
Using the relation $f(t) = - (2 \bar{\alpha}_t)^{-1} \frac{d}{dt} \bar{\alpha}_t$ and the evaluation
\begin{align*}
    \frac{d}{dt} \bar{\alpha}_t
    &=
    - \frac{\theta_{\max} - \theta_{\min}}{\tau_{sig}} \cdot \frac{\sigma'(h(t))}{\sigma(h(1)) - \sigma(h(0))} ,
\end{align*}
we have
\begin{equation}\label{eq:jabri_sigmoid}
    f(t) = \frac{\theta_{\max} - \theta_{\min}}{2\tau_{sig}} \left[
        \frac{\sigma(h(t)) (1 - \sigma(h(t)))}{\sigma(h(1)) - \sigma(h(t))}
    \right]
    .
\end{equation}
Using the approximation $\sigma(h(1)) \approx 1$, the above becomes
\begin{equation}\label{eq:sigmoid_approx}
    f(t) \approx \frac{\theta_{\max} - \theta_{\min}}{2\tau_{sig}} \sigma(h(t)) .
\end{equation}
In \citep{chen2023importancenoiseschedulingdiffusion}, $\theta_{\max}$ is set to $3$, which yields $\sigma(h(1)) \approx 0.95$.

The sigmoid schedule used in \citep{xu2022geodiff} is the sigmoid parametrization \eqref{eq:sigmoid_approx}, and not the form in \eqref{eq:jabri_sigmoid}.

\textbf{Linear:}
The linear schedule, proposed in \citep{ho2020denoising}, parametrizes the sample path solution
\begin{equation}\label{eq:baralpha}
    X_t = \sqrt{\bar{\alpha}_t} X_\star + \sqrt{1 - \bar{\alpha}_t} Z ,
    \quad
    \bar{\alpha}_{t_k} = \prod_{j=0}^{k-1} (1 - \beta_{t_j}) .
\end{equation}
The sequence $\{\beta_{\tilde{t}}\}$ is a linear function $\beta_t = \beta_{\min} + (\beta_{\max} - \beta_{\min}) \cdot (t/T)$.
As shown in \citep{song2021scorebasedgenerativemodelingstochastic}, the expression for $f$ in the limit $n \to \infty$ is
\begin{align*}
    f_{lin}(t) \coloneqq \frac{1}{2} \left( \beta_{\min} + (\beta_{\max} - \beta_{\min}) \frac{t}{T} \right) .
\end{align*}
The parameters are set with $\beta_{\min} = 0.1/T, \beta_{\max} = 20/T$.
While $(f_{lin}, g_{lin})$ are obtained in the limit $n \to \infty$, we refer to the pair as linear schedules.

\textbf{Cosine:}
The cosine schedule \citep{nichol2021improveddenoisingdiffusionprobabilistic} refers to the choice
\begin{align*}
    \sqrt{\bar{\alpha}_t} = \frac{\cos \left(\frac{t/T + s}{1+s} \cdot \frac{\pi}{2} \right)}{\cos \left(\frac{s}{1+s} \cdot \frac{\pi}{2} \right)}, s = 0.008 .
\end{align*}
Let $\theta(t) = (t/T + s) \pi / (2(1 + s))$ so that $\sqrt{\bar{\alpha}_t} = \cos \theta(t) / \cos \theta(0)$.
In the SDE \eqref{eq:dXt}, this is equivalently written as the VP constraint $f = g/2$ with $    \bar{\alpha}_t = \exp \left(- 2 \int_0^t f(\tilde{t}) d\tilde{t} \right)$, which yields
\begin{align*}
    f(t) = -\frac{d}{dt} \log \sqrt{\bar{\alpha}_t}
    &=
    - \frac{d}{dt} \left(\log \cos \theta (t) - \log \cos \theta (0) \right)
    \\ &
    = \frac{\sin \theta (t)}{\cos \theta (t)}  \theta'(t)
    \\ &
    = \frac{\pi}{2T (1 + s)} \tan \left(\frac{t/T + s}{1+s} \cdot \frac{\pi}{2} \right)
    .
\end{align*}
As a result, the cosine schedule corresponds to $g(t) = (\pi / T(1 + s)) \tan ( (t/T + s)/(1+s) \cdot (\pi / 2))$.

\subsection{Generalized Blachman-Stam Inequality}\label{proof:generalized_stam}

Recall the Blachman-Stam inequality \citep{stamInequalitiesSatisfiedQuantities1959,blachmanConvolutionInequalityEntropy1965} which states that for $X_t = \alpha_t X_\star + \sigma_t Z$,
\begin{align*}
    \frac{1}{\fisher(\alpha_t X_\star + \sigma_t Z)}
    \geq
    \frac{1}{\fisher(\alpha_t X_\star)} + \frac{1}{\fisher(\sigma_t Z)}
    = \frac{\alpha_t^2}{\fisher_\star} + \frac{\sigma_t^2}{d} .
\end{align*}
Denote by $\mathrm{PS}(a, b)$ for two positive scalars $a, b$ to be the parallel sum $(x^{-1} + y^{-1})^{-1}$.
Rearranging and using the notation $\fisher_t = \fisher(\alpha_t X_\star + \sigma_t Z)$, we obtain the Blachman-Stam inequality
\begin{align*}
    \fisher_t
    \leq
    \mathrm{PS}\left(\frac{\fisher_\star}{\alpha_t^2}, \frac{d}{\sigma_t^2} \right)
    .
\end{align*}
We prove a generalization of Stam's inequality that holds for higher order derivatives.
\begin{lemma}\label{lem:generalized_stam}
    Let $p_t$ and $p_{t|\star}$ be the density and (Gaussian) transition kernels of $X_t = \alpha_t X_\star + \sigma_t Z$ with $X_\star \perp Z$.
    Suppose $\nabla^m p_\star \in L_1 (\mathrm{Leb}^d)$.
    Then,
    \begin{equation}\label{eq:generalized_stam}
        \left\lVert \frac{\nabla^m p_t}{p_t}\right\rVert_{L_2}
        \leq
        \mathrm{PS}\left(\alpha_t^{-m} \left\lVert \frac{\nabla^m p_\star}{ p_\star}\right\rVert_{L_2}, \frac{\lVert H_m \rVert_{L_2(\gamma^d)}}{\sigma_t^m} \right) .
    \end{equation}
\end{lemma}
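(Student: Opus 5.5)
The plan is to mimic the classical proof of Stam's inequality. That proof rests on two representations of the $m$-th derivative of the convolved density as a conditional expectation, followed by a convex combination of them.

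Write $p_t(x)=\int p_\star(x_\star)\,\varphi_{t}(x-\alpha_t x_\star)\,dx_\star$, where $\varphi_t$ is the density of $\mathcal{N}(0,\sigma_t^2 I)$. Equivalently, $p_t$ is the convolution of the density of $\alpha_t X_\star$, namely $q(y)=\alpha_t^{-d}p_\star(y/\alpha_t)$, with $\varphi_t$. Since $\nabla^m$ commutes with convolution, the derivative can be placed on either factor. The hypothesis $\nabla^m p_\star\in L_1$ justifies moving it onto $q$, and it can always be moved onto the Gaussian. This gives two identities, each holding pointwise in $x$:
\begin{align*}
\frac{\nabla^m p_t(x)}{p_t(x)} &= \alpha_t^{-m}\,\mathbb{E}\!\left[\frac{\nabla^m p_\star(X_\star)}{p_\star(X_\star)}\,\Big|\, X_t=x\right],\\
\frac{\nabla^m p_t(x)}{p_t(x)} &= \sigma_t^{-m}\,\mathbb{E}\!\left[(-1)^m H_m(Z)\,\Big|\, X_t=x\right].
\end{align*}
Here $H_m$ is the order-$m$ Hermite tensor, defined by $\nabla^m\varphi_1=(-1)^m H_m\varphi_1$. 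In both identities the pair $(X_\star,Z)$ is taken under its joint law conditioned on $X_t=x$, and the $\sigma_t^{-m}$ comes from the scaling $\nabla^m\varphi_t(z)=\sigma_t^{-m}(-1)^mH_m(z/\sigma_t)\varphi_t(z)/\ldots$ (up to normalisation). These are the generalisations of the score identities $s_t=\alpha_t^{-1}\mathbb{E}[s_\star(X_\star)\mid X_t]=-\sigma_t^{-1}\mathbb{E}[Z\mid X_t]$.

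Next, for any $\beta\in[0,1]$, take the convex combination $\beta\cdot(\text{first})+(1-\beta)\cdot(\text{second})$. This writes $\nabla^m p_t/p_t$ as a single conditional expectation, $\mathbb{E}[\beta U+(1-\beta)V\mid X_t]$, with
\[
U=\alpha_t^{-m}\frac{\nabla^m p_\star(X_\star)}{p_\star(X_\star)},\qquad V=\sigma_t^{-m}(-1)^mH_m(Z).
\]
Conditional Jensen, equivalently the $L_2$ contraction of conditional expectation, gives
\[
\left\lVert \frac{\nabla^m p_t}{p_t}\right\rVert_{L_2}\le \lVert \beta U+(1-\beta)V\rVert_{L_2}.
\]
How the right-hand side is handled is the delicate point. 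Stam's original argument expands the square and uses independence together with a mean-zero cross term, which yields the stronger squared-harmonic form $1/\fisher\ge\ldots$. For general $m$, however, the cross term $\mathbb{E}\langle U,V\rangle$ vanishes only if $\mathbb{E}[\nabla^m p_\star/p_\star]=\int\nabla^m p_\star=0$. That holds by the integrability assumption when $m\ge1$, and $\mathbb{E}H_m(Z)=0$ for $m\ge1$. Even so, the stated bound uses the plain parallel sum of norms, not of squared norms, so the triangle inequality suffices:
\[
\left\lVert \frac{\nabla^m p_t}{p_t}\right\rVert_{L_2}\le \beta A+(1-\beta)B,\qquad A=\alpha_t^{-m}\left\lVert\frac{\nabla^m p_\star}{p_\star}\right\rVert_{L_2},\quad B=\sigma_t^{-m}\lVert H_m\rVert_{L_2(\gamma^d)}.
\]
The triangle inequality, though, only bounds the left side by $\min(A,B)$, which falls short of the target: the parallel sum $AB/(A+B)$ is smaller than that minimum. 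To reach the parallel sum I will use the orthogonality instead. Since $U$ and $V$ are independent and $\mathbb{E}V=0$, the cross term vanishes and
\[
\lVert\beta U+(1-\beta)V\rVert_{L_2}^2=\beta^2A^2+(1-\beta)^2B^2.
\]
Minimising over $\beta$ gives $A^2B^2/(A^2+B^2)$, whose square root is $\le\min(A,B)$. One caveat remains. By itself this minimum is $\ge AB/(A+B)$, so the claimed bound $\mathrm{PS}(A,B)=AB/(A+B)$ is stronger than what Jensen plus orthogonality delivers. I would therefore re-examine whether the intended statement is the squared version, namely $\mathrm{PS}$ applied to $A^2$ and $B^2$ with $\lVert\cdot\rVert^2$ on the left. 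That reading matches the $m=1$ Blachman–Stam inequality stated just above, since there $A^2=\fisher_\star/\alpha_t^2$ and $B^2=d/\sigma_t^2$. I would prove that squared form.

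The main obstacle is the rigorous justification of the two conditional-expectation identities for general $m$: differentiating under the integral, and integrating by parts $m$ times in the first identity. This is exactly where $\nabla^m p_\star\in L_1$ enters, with dominated convergence handling the boundary terms. The case of a non-smooth $p_t$ does not arise, because the Gaussian convolution smooths everything.
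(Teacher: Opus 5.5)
Your argument is the paper's own proof. It uses the same two conditional-expectation identities (the Hermite form and $\alpha_t^{-m}\mathbb{E}[\nabla^m p_\star(X_\star)/p_\star(X_\star)\mid X_t=x]$), a weighted combination, conditional Jensen, a cross term killed by $X_\star\perp Z$ and $\mathbb{E}H_m(Z)=0$, and finally an optimization over the weight. The cross term needs only those two facts; $\int\nabla^m p_\star=0$ is not required, contrary to your ``only if'' remark.

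Your caveat is right and you can state it without hedging. Optimizing gives $\lVert \nabla^m p_t/p_t\rVert_{L_2}^2\le \mathrm{PS}(A^2,B^2)$, and this is also exactly what the paper's last step (``minimizing over $\theta$'') delivers. The displayed unsquared bound is in fact false: take $X_\star\sim\mathcal{N}(0,s^2 I)$ and $m=1$. The left side equals $\sqrt{d}/\sqrt{\alpha_t^2 s^2+\sigma_t^2}$, while $\mathrm{PS}(A,B)=\sqrt{d}/(\alpha_t s+\sigma_t)$ is strictly smaller. So the squared form is the correct statement to prove. It reduces to Blachman--Stam at $m=1$, and it is all the paper's later finiteness arguments for $\lVert\nabla s_t\rVert_{L_2}$ actually use.
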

Importantly, this bound interpolates the $L_2$ norm of $\nabla^m p_\star / p_\star$ and that of the limit $\alpha_t/\sigma_t \to 0$.
\begin{proof}
First, we show that \Cref{ass:higher_order} implies that the boundary condition vanishes sufficiently fast for the integration by parts formula
\begin{equation}\label{eq:vanishing_boundary}
\int
\nabla_{x_\star}^m p_{t | \star} (x | x_\star) p_\star (x_\star)  dx_\star
= - \int \nabla^{m-j}p_{t | \star} (x | x_\star) \otimes \nabla^j p_\star (x_\star)  dx_\star
, \quad \forall j \in \{1, \cdots, m\}
\end{equation}
to hold.
Denote by $\phi$ the Gaussian transition kernel.
The case for $j = 1$ simplifies to
\begin{align*}
    \int (\nabla^2 \phi) p_\star dx = - \int \nabla \phi \otimes \nabla p_\star dx
    \Leftrightarrow
    \int \nabla (p_\star \nabla \phi) dx_\star = 0
    .
\end{align*}
The second identity above holds if $p_\star \nabla \phi$ and $\nabla  (p_\star \nabla \phi)$ are in $L_1$.
Since $\nabla \phi$ is bounded,
\begin{align*}
    \int \lVert p_\star \nabla \phi \rVert dx_\star
    \leq \lVert \nabla \phi \rVert_\infty \int p_\star dx_\star < \infty .
\end{align*}
Therefore, $p_\star \nabla \phi \in L_1(\mathrm{Leb}^d)$.
Observe $\nabla \cdot (p_\star \nabla \phi) = p_\star \nabla^2 \phi + \langle \nabla \phi, \nabla p_\star \rangle$.
Since $\nabla^2 \phi$ is bounded, the first term is in $L_1$.
The second term is bounded in $L_1$ since
\begin{align*}
    \int \lVert \nabla p_\star \rVert dx_\star
    =
    \int \frac{\lVert \nabla p_\star \rVert}{p_\star} p_\star dx_\star
    \leq \left(\lVert \nabla \log p_\star \rVert^2 p_\star d x_\star\right)^{1/2}
    < \infty .
\end{align*}
The case for $j = 2$ follows by observing that
\begin{align*}
    - \int \nabla \phi \otimes \nabla p_\star dx_\star = \int \phi \nabla^2 p_\star dx_\star
    \Leftrightarrow
    \int \nabla  (\phi \nabla p_\star) = 0 ,
\end{align*}
and repeating the same steps above, observing that $\nabla \cdot (\phi \nabla p_\star)$ involves $\nabla \log p_\star$ and $\nabla^2 \log p_\star$.
The same argument applies to any $m$ and $j \in \{1, \cdots, m\}$.

Let $H_m (x) = (-1)^m e^{x^2/2} \nabla^m e^{-x^2/2}$ be the $m$-th order Hermite polynomial.
It holds by the marginalization $p_t (x) = \int p_{t|x} (x | x_\star) p_\star (x_\star) dx_\star$ and the identity
\begin{equation}\label{eq:gradm_pt_conditional}
    \nabla^m p_{t | 0} (x | x_\star)
    = \frac{(-1)^m}{\sigma_t^m} H_m \left(\frac{x - \alpha_t x_\star}{\sigma_t} \right) p_{t|0} (x | x_\star)
\end{equation}
that
\begin{align*}
    \frac{\nabla^m p_t (x)}{p_t (x)}
    &=
    \int \frac{(-1)^m}{\sigma_t^m}
    H_m (z) \frac{p_{t|0} (x | x_\star)}{p_t (x)} p_\star (x_\star) dx_\star
    \\ & =
    \frac{(-1)^m}{\sigma_t^m} \mathbb{E} \left[H_m (Z) | X_t = x \right] ,
    \numberthis \label{eq:hermite_identity}
\end{align*}
where $m = 1$ recovers Tweedie's formula \citep{robbins56,esposito}
\begin{equation}\label{eq:tweedie}
    s_t (x)
    = \frac{\alpha_t \mathbb{E}[X_\star | X_t = x] - x}{\sigma_t^2}
    .
\end{equation}
Observe the identity
\begin{align*}
    \nabla_x p_{t | \star} (x | x_\star)
    = - \frac{1}{\alpha_t} \nabla_{x_\star} p_{t | \star} (x | x_\star) ,
\end{align*}
which holds by $p_{t | \star} (x | x_\star)$ being a Gaussian transition probability $\mathcal{N}(\alpha_t x_\star; \sigma_t I)$.
Applying the same derivatives $m$ times, we obtain
\begin{align*}
    \nabla_x^m p_{t | \star} (x | x_\star)
    = \frac{(-1)^m}{\alpha_t^m} \nabla_{x_\star}^m p_{t | \star} (x | x_\star) .
\end{align*}
Marginalizing, we obtain
\begin{align*}
    \nabla_x^m p_t (x)
    = \nabla_x^m \int p_{t | \star} (x | x_\star) p_\star (x_\star) dx_\star
    &=
    \int
    \frac{(-1)^m}{\alpha_t^m} \nabla_{x_\star}^m p_{t | \star} (x | x_\star) p_\star (x_\star) dx_\star
\end{align*}
Using the integration by parts property \eqref{eq:vanishing_boundary} repeatedly, we obtain
\begin{align*}
    \frac{(-1)^m}{\alpha_t^m}     \int
\nabla_{x_\star}^m p_{t | \star} (x | x_\star) p_\star (x_\star)  dx_\star
    &= \frac{(-1)^{m-1}} {\alpha_t^m}\int \nabla^{m-1}p_{t | \star} (x | x_\star) \otimes \nabla p_\star (x_\star)  dx_\star
    \\ &= \frac{1}{\alpha_t^m} \int p_{t | \star} (x | x_\star) \nabla^m p_\star (x_\star) dx_\star .
\end{align*}

Dividing by $p_t (x)$, we obtain the identity
\begin{align*}
    \frac{\nabla^m p_t (x)}{p_t (x)}
    &= \frac{1}{\alpha_t^m} \int \frac{\nabla^m p_\star (x_\star)}{p_\star (x_\star)} \frac{p_{t|\star} (x | x_\star) p_\star (x_\star)}{p_t (x)} dx_\star
    \\ &
    = \alpha_t^{-m} \mathbb{E}\left[ \frac{\nabla^m p_\star (X_\star)}{p_\star (X_\star)} \mid X_t = x\right] .
\end{align*}
To summarize, we obtained above and in \eqref{eq:hermite_identity} the two identities
\begin{align*}
    \frac{\nabla^m p_t (x)}{p_t (x)}
    & = \alpha_t^{-m}\mathbb{E} \left[ \frac{\nabla^m p_\star (X_\star)}{p_\star (X_\star)} \mid X_t = x \right] ,
    \\
    \frac{\nabla^m p_t (x)}{p_t (x)}
    & = \sigma_t^{-m} (-1)^m \mathbb{E} \left[ H_m (Z) \mid X_t = x \right] .
\end{align*}
For any $\theta$, it holds that their linear combination is equal to $\nabla^m p_t (x) / p_t (x)$.
By Jensen's inequality, we obtain
\begin{align*}
    \left\| \frac{\nabla^m p_t}{p_t} \right\|_{L_2}^2
    &\leq
    \mathbb{E}\left\| (1 - \theta) \alpha_t^{-m}\mathbb{E} \left[ \frac{\nabla^m p_\star (X_\star)}{p_\star (X_\star)} \mid X_t = x \right]
    + \theta  \sigma_t^{-m} (-1)^m \mathbb{E} \left[ H_m (Z) \mid X_t = x \right]
    \right\|^2
    \\ &
    \leq
    (1 - \theta)^2 \alpha_t^{-2m} \left\| \frac{\nabla^m p_\star}{p_\star} \right\|_{L_2}^2
    + \theta^2 \sigma_t^{-2m} \left\| H_m \right\|_{L_2(\gamma^d)}^2 ,
\end{align*}
where the last step used $Z \perp X_\star$ and $\mathbb{E} H_m (Z) = 0$.
Minimizing over $\theta$, we obtain \eqref{eq:generalized_stam}.
\end{proof}

A consequence of Stam's inequality is that $\alpha_T^2 \lVert X_\star \rVert^2_{L_2} \ll \sigma_T^2 d$ implies $\fisher_T \approx \sigma_T^{-2} d$.
Recall Tweedie's formula \eqref{eq:tweedie}, which implies
\begin{align*}
    \fisher_T = \frac{1}{\sigma_T^2} \mathbb{E} \lVert \mathbb{E}[Z | X_T] \rVert^2
    &= \frac{1}{\sigma_T^2}\left( d - \mathbb{E} \mathrm{Tr} \mathrm{Cov}(Z | X_T) \right) .
\end{align*}
Using $Z = \sigma_T^{-1}(X_T - \alpha_T X_\star)$, we have
\begin{align*}
    \tr \mathrm{Cov}(Z | X_T) = \frac{\alpha_T^2}{\sigma_T^2} \tr \mathrm{Cov}(X_\star | X_T)
    \leq
    \frac{\alpha_T^2}{\sigma_T^2} \tr \mathrm{Cov}(X_\star)
    .
\end{align*}
Substituting for Tweedie's formula above, we have
\begin{equation}\label{eq:JT_squeeze}
    \frac{1}{\sigma_T^2} \left(d - \frac{\alpha_T^2}{\sigma_T^2} \lVert X_\star \rVert_{L_2}^2 \right)
    \leq
    \fisher_T
    \leq
    \frac{d}{\sigma_T^2}
    ,
\end{equation}
which implies $\fisher_T \approx \sigma_T^{-2} d$ when $\alpha_T^2 \lVert X_\star \rVert^2_{L_2} \ll \sigma_T^2 d$.

\subsection{Score Function Dynamics Along the Reverse SDE}\label{proof:dst}
The SDE describing the score function $s_t (X_\tau^\leftarrow) = \nabla \log p_t (X_\tau^\leftarrow)$ along the reverse process $(X_\tau^\leftarrow)$ with constant schedules was established in \citep[Proposition 3.2]{conforti2024klconvergenceguaranteesscore}.
Assuming the regularity conditions in \Cref{ass:fisher_information}, we derive the corresponding SDE for time-varying schedules.
\begin{lemma}\label{prop:dst}
    Under \Cref{ass:fisher_information}, it holds that
    \begin{equation}\label{eq:dst}
    ds_t (X_\tau^\leftarrow)
    =
    - f(t) s_t(X_\tau^\leftarrow)
    + \sqrt{g(t)} \nabla s_t (X_\tau^\leftarrow) dB_\tau
    \end{equation}
    is well-defined, and $\lVert s_t \rVert_{L_2}, \lVert \nabla s_t \rVert_{L_2} < \infty$ for all $t \in [0, T]$.
\end{lemma}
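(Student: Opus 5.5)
I will derive the SDE by applying It\^o's formula to the time-dependent function $(\tau, x) \mapsto s_{T-\tau}(x)$ along the reverse process \eqref{eq:reverse_sde}. First I need a PDE for the score. Write $t = T-\tau$. The forward Fokker--Planck equation for \eqref{eq:dXt} is
\begin{equation*}
\partial_t p_t = \nabla\cdot\bigl(f(t)\, x\, p_t\bigr) + \tfrac{g(t)}{2}\Delta p_t .
\end{equation*}
Dividing by $p_t$ gives the equation for $\log p_t$:
\begin{equation*}
\partial_t \log p_t = d f(t) + f(t)\langle x, s_t\rangle + \tfrac{g(t)}{2}\bigl(\nabla\cdot s_t + \lVert s_t\rVert^2\bigr).
\end{equation*}
Taking the gradient yields
\begin{equation*}
\partial_t s_t = f(t) s_t + f(t)(\nabla s_t)^\top x + \tfrac{g(t)}{2}\bigl(\nabla \Delta \log p_t + 2(\nabla s_t) s_t\bigr),
\end{equation*}
where I use the symmetry of $\nabla s_t = \nabla^2\log p_t$.

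Next I apply It\^o's formula to $s_{T-\tau}(X^\leftarrow_\tau)$. This produces four contributions. The time derivative $-\partial_t s_t$ contributes a $d\tau$ term. The drift contributes $\nabla s_t\,[f X + g s_t]\,d\tau$. The It\^o correction contributes $\tfrac{g}{2}\Delta s_t\,d\tau$, where $\Delta s_t = \nabla\Delta\log p_t$ by commuting derivatives. The martingale part is $\sqrt{g}\,\nabla s_t\, dB_\tau$. Collecting the $d\tau$ terms, the $f(\nabla s_t) x$ terms cancel, as do the $g(\nabla s_t)s_t$ terms and the $\tfrac g2\nabla\Delta\log p_t$ terms. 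What remains is $-f(t)s_t\,d\tau$ plus the martingale term, which is exactly \eqref{eq:dst}. This computation is a direct time-inhomogeneous analogue of \citep[Proposition 3.2]{conforti2024klconvergenceguaranteesscore}. The only new ingredient is that $f,g$ depend on $t$, and they enter multiplicatively, so the cancellations go through pointwise in $t$.

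The integrability claims come from \Cref{lem:generalized_stam}, applied with $m=1$ and $m=2$. With $m=1$ it gives $\lVert s_t\rVert_{L_2} \le \mathrm{PS}(\alpha_t^{-1}\sqrt{\fisher_\star}, \sqrt d/\sigma_t) < \infty$ for all $t$. For the second order, I write
\begin{equation*}
\nabla s_t = \frac{\nabla^2 p_t}{p_t} - s_t\otimes s_t .
\end{equation*}
\Cref{lem:generalized_stam} with $m=2$ bounds $\lVert \nabla^2 p_t/p_t\rVert_{L_2}$ by a quantity involving $\lVert \nabla^2 p_\star/p_\star\rVert_{L_2}$. This in turn is controlled by $\lVert\nabla s_\star\rVert_{L_2}$ and a fourth moment of $s_\star$.

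The fourth-moment term $\lVert s_t\rVert_{L_4}$ is the main obstacle. \Cref{ass:fisher_information} only gives second moments, so this term is not directly controlled. I would try to handle it in one of two ways. The first is to use the Hermite/Tweedie representation, $s_t = -\sigma_t^{-1}\mathbb E[Z\mid X_t]$, together with conditional Jensen. This gives $\lVert s_t\rVert_{L_4}^4 \le 3d^2\sigma_t^{-4}$ for $t>0$. Near $t=0$, I would then argue continuity from $\nabla s_\star\in L_2$. The second is to bound $\lVert \nabla s_t\rVert_{L_2}$ directly by $\alpha_t^{-2}\lVert \nabla s_\star\rVert_{L_2}$, via the representation
\begin{equation*}
\nabla s_t = \alpha_t^{-2}\,\mathbb E[\nabla s_\star\mid X_t] + \alpha_t^{-2}\,\mathrm{Cov}(s_\star\mid X_t).
\end{equation*}

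Finally, to justify It\^o's formula I need the integrand to be regular enough. Smoothness of $p_t$ for $t>0$ follows from Gaussian convolution. Uniform $L_2$ control of $\nabla s_t$ and $\nabla^2 s_t$ on $[0,T]$ makes the stochastic integral a true martingale; for $\nabla^2 s_t$ this is where the assumption $\nabla^2 s_\star\in L_2$ is used. A localization argument with stopping times, followed by removing the localization, completes the well-definedness.
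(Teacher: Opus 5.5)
Your derivation of the SDE (Fokker--Planck, the PDE for $\log p_t$, its gradient, It\^o's formula along the reverse process, and the three cancellations) is exactly the paper's argument. You are also right that the paper's one-line appeal to \Cref{lem:generalized_stam} via $\nabla s_t=\nabla^2p_t/p_t-s_t\otimes s_t$ silently needs $\lVert s_t\rVert_{L_4}$, and on the input side $\lVert s_\star\rVert_{L_4}$.

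Your Tweedie/conditional-Jensen estimate repairs this for every fixed $t>0$, provided you use the Gaussian side for the second-order term as well. The cleanest way is $\nabla s_t=\sigma_t^{-2}(\mathrm{Cov}(Z\mid X_t)-I)$, which gives $\lVert\nabla s_t\rVert_{L_2}^2\lesssim d^2/\sigma_t^4$. At $t=0$, finiteness is literally \Cref{ass:fisher_information}, so the pointwise claim needs no continuity argument. (As an aside, the parallel-sum bound holds for squared norms, $\lVert s_t\rVert_{L_2}^2\le\mathrm{PS}(\fisher_\star/\alpha_t^2,d/\sigma_t^2)$, not for the norms as you quote it; the standard Gaussian with $\alpha_t=\sigma_t$ violates your version. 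Only finiteness matters here.)

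The gap is your treatment of $t$ near $0$, which you need for well-definedness. The stochastic integral must be square-integrable up to $\tau=T$, because \Cref{thm:ub} applies It\^o's isometry on the last step. The paper is silent on this, but your plan explicitly rests on ``uniform $L_2$ control on $[0,T]$'', and neither of your routes delivers it.

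\begin{itemize}
\item The Tweedie bound gives $g(t)\lVert\nabla s_t\rVert_{L_2}^2\lesssim g(t)\,d^2\sigma_t^{-4}\asymp d^2/(g(0)t^2)$ as $t\to0$, which is not integrable at $0$.
\item The bound $\lVert\nabla s_t\rVert_{L_2}\le\alpha_t^{-2}\lVert\nabla s_\star\rVert_{L_2}$ does not follow from $\nabla s_t=\alpha_t^{-2}(\mathbb E[\nabla s_\star\mid X_t]+\mathrm{Cov}(s_\star\mid X_t))$. The covariance term is positive semidefinite and cannot be dropped from a Frobenius-norm bound. The natural estimate $\mathbb E\lVert\mathrm{Cov}(s_\star\mid X_t)\rVert^2\le\mathbb E\lVert s_\star\rVert^4$ brings back exactly the fourth moment you wanted to avoid.
\item ``Continuity from $\nabla s_\star\in L_2$'' is not an argument.
\item $\nabla^2 s_t$ plays no role in the martingale property: It\^o's formula on $[0,T-\epsilon]$ only needs smoothness of $s_t$ for $t\ge\epsilon$.
\end{itemize}

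What you need is integrability, not a uniform bound. Apply \Cref{lem:fisher_ode} on $[\epsilon,T]$ together with Stam's bound $\fisher_t\le\fisher_\star/\alpha_t^2\le\fisher_\star/\alpha_T^2$:
\[
\int_\epsilon^T g(t)\lVert\nabla s_t\rVert_{L_2}^2\,dt=\fisher_\epsilon-\fisher_T+2\int_\epsilon^T f(t)\fisher_t\,dt\le\frac{\fisher_\star}{\alpha_T^2}\Bigl(1+2\int_0^T f(t)\,dt\Bigr).
\]
This holds uniformly in $\epsilon$, so monotone convergence gives $\int_0^T g(t)\lVert\nabla s_t\rVert_{L_2}^2\,dt<\infty$ with no fourth-moment assumption. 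Equivalently, $\alpha_t s_t(X^\leftarrow_\tau)=\mathbb E[s_\star(X^\leftarrow_T)\mid X^\leftarrow_u,\,u\le\tau]$ is a martingale bounded in $L_2$ by $\fisher_\star^{1/2}$. By your SDE, its quadratic variation is $\int\alpha_t^2 g(t)\lVert\nabla s_t\rVert^2\,d\tau$, which gives the same conclusion.
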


\begin{proof}
The Fokker-Planck equation governing the evolution of the density $p_t$ along the SDE \eqref{eq:dXt} is
\begin{equation}\label{eq:fokker_planck}
    \frac{\partial}{\partial t} p_t = \nabla \cdot (f(t) x p_t) + \frac{1}{2} g(t) \nabla \cdot \nabla p_t .
\end{equation}
By the chain rule, we obtain the point-wise evaluation
\begin{align*}
    \frac{\partial}{\partial t} \log p_t = f(t) d + f(t)\langle x, s_t \rangle + \frac{1}{2} g(t) (\nabla \cdot s_t + \lVert s_t \rVert^2) .
\end{align*}
Using the identities $\nabla (x \cdot s_t) = s_t + (\nabla s_t) x$, $\nabla \lVert s_t \rVert^2 = 2 (\nabla s_t) s_t$ and taking the gradient on both sides of the above equation, we obtain
\begin{equation}\label{eq:partialt_st}
    \partial_t s_t
    =
    f(t) s_t + f(t) (\nabla s_t) x + \frac{1}{2} g(t) \nabla(\nabla \cdot s_t) + g(t) (\nabla s_t) s_t ,
\end{equation}
where $(\nabla s_t) x$ is understood as the matrix-vector product of the Jacobian $\nabla s_t$ and the vector $x$.
Using the \ito formula, we obtain that the drift driving $ds_t (X_\tau^\leftarrow)$ along the reverse process is
\begin{align*}
    & \quad  \left(-\partial_t s_t (X_\tau^\leftarrow) \right) (X_\tau^\leftarrow)
    +
    \nabla s_t(X_\tau^\leftarrow)\left[ f(t) X_\tau^\leftarrow + g(t) s_t (X_\tau^\leftarrow)
    \right]
    +
    \frac{g(t)}{2} \nabla (\nabla \cdot s_t) (X_\tau^\leftarrow)
    \\ &=
    - f(t) s_t (X_\tau^\leftarrow) .
\end{align*}
Substituting for the \ito formula, we obtain
\begin{align*}
    ds_t (X_\tau^\leftarrow)
    &=
    - f(t) s_t(X_\tau^\leftarrow)
    + \sqrt{g(t)} (\nabla s_t) (X_\tau^\leftarrow) dB_\tau .
\end{align*}
Since $\nabla s_t =  \frac{\nabla^2 p_t}{p_t} - \frac{\nabla p_t \nabla p_t^\dagger}{p_t}$, the finiteness of $\lVert \nabla s_t \rVert_{L_2}$ follows from \Cref{lem:generalized_stam}.
\end{proof}

\subsection{Regularity Properties of the Score Function}\label{app:higher_order}
Recall the definition $\Psi(t) = g(t) \lVert \nabla s_t \rVert_{L_2}^2$.
\begin{lemma}\label{lem:psip_bounded}
    Under \Cref{ass:fisher_information}, it holds for any $(f, g) \in \Theta$ such that
    \begin{equation}
        \lvert \Psi'(t) \rvert < \infty , \quad \forall t \in [0, T] .
    \end{equation}
\end{lemma}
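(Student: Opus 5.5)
We need to show $\Psi(t)=g(t)\lVert\nabla s_t\rVert_{L_2}^2$ has a finite derivative at every $t\in[0,T]$. By the product rule,
\[
\Psi'(t)=g'(t)\lVert \nabla s_t\rVert_{L_2}^2+g(t)\frac{d}{dt}\lVert\nabla s_t\rVert_{L_2}^2 .
\]
The first term is finite because $g\in\mathcal{G}$ gives $\lvert g'(t)\rvert<\infty$, and \Cref{prop:dst} gives $\lVert\nabla s_t\rVert_{L_2}<\infty$. So the whole task is to show that $t\mapsto\lVert\nabla s_t\rVert_{L_2}^2=\mathbb{E}\lVert\nabla s_t(X_t)\rVert^2$ is differentiable with finite derivative. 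The derivative should be expressible through $L_2$ norms of $s_t$, $\nabla s_t$ and $\nabla^2 s_t$, each controlled by \Cref{lem:generalized_stam}.

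\textbf{Step 1: an It\^o SDE for $\nabla s_t$.} Mimic the proof of \Cref{prop:dst} one derivative higher.
\begin{itemize}
\item Differentiate \eqref{eq:partialt_st} in $x$ to get $\partial_t\nabla s_t$.
\item Apply It\^o's formula to $\nabla s_t(X_\tau^\leftarrow)$ along the reverse SDE \eqref{eq:reverse_sde}.
\item Expect most terms to cancel, as in \Cref{prop:dst}, leaving
\[
d\,\nabla s_t(X_\tau^\leftarrow)=\bigl[-2f(t)\nabla s_t-g(t)(\nabla s_t)^2\bigr](X_\tau^\leftarrow)\,d\tau+\sqrt{g(t)}\,\nabla^2 s_t(X_\tau^\leftarrow)\,dB_\tau .
\]
\end{itemize}
The extra $(\nabla s_t)^2$ term comes from differentiating $g(\nabla s_t)s_t$ and is only partially cancelled by the It\^o correction.

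\textbf{Step 2: differentiate the second moment.} Apply It\^o to $\lVert\nabla s_t(X_\tau^\leftarrow)\rVert^2$ and take expectations; the martingale term vanishes once its integrand is square integrable. This gives
\[
\frac{d}{dt}\lVert\nabla s_t\rVert_{L_2}^2=4f(t)\lVert\nabla s_t\rVert_{L_2}^2+2g(t)\,\mathbb{E}\,\mathrm{tr}\bigl((\nabla s_t)^3\bigr)-g(t)\lVert\nabla^2 s_t\rVert_{L_2}^2,
\]
with the sign flip coming from $\tau=T-t$. Equivalently, differentiate $\int\lVert\nabla s_t\rVert^2p_t\,dx$ directly using the Fokker--Planck equation \eqref{eq:fokker_planck} and integrate by parts. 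The vanishing boundary terms are justified by the same $L_1$ argument used for \eqref{eq:vanishing_boundary}.

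\textbf{Step 3: finiteness of each term.} Write the derivatives of $s_t$ as polynomials in the ratios $\nabla^m p_t/p_t$:
\begin{itemize}
\item $\nabla s_t=\nabla^2p_t/p_t-s_ts_t^\top$;
\item $\nabla^2 s_t$ is a combination of $\nabla^3p_t/p_t$, $(\nabla^2p_t/p_t)\otimes s_t$ and $s_t^{\otimes3}$.
\end{itemize}
\Cref{lem:generalized_stam} bounds each $\lVert\nabla^mp_t/p_t\rVert_{L_2}$ for $m\le3$ by the parallel sum of $\alpha_t^{-m}\lVert\nabla^mp_\star/p_\star\rVert_{L_2}$ and $\sigma_t^{-m}\lVert H_m\rVert$. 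The first entry is finite under \Cref{ass:higher_order}, since $\nabla^2 s_\star\in L_2$ controls $\nabla^3p_\star/p_\star$ together with lower-order terms. The parallel sum is finite even as $\sigma_t\to0$ at $t=0$. This also makes the martingale integrand square integrable, closing the gap left in Step 2.

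\textbf{Main obstacle.} The products such as $\mathbb{E}\,\mathrm{tr}((\nabla s_t)^3)$ and $\lVert s_t^{\otimes 3}\rVert_{L_2}$ require $L_3$, or higher-moment, control rather than $L_2$. My first attempt is to rewrite them via integration by parts against $p_t$ so that only second moments appear. If that fails, I would apply \Cref{lem:generalized_stam}'s conditional-expectation representation $\nabla^mp_t/p_t=\sigma_t^{-m}(-1)^m\mathbb{E}[H_m(Z)\mid X_t]$. For $t>0$ this gives all moments, since Gaussian Hermite moments are finite. At $t=0$ I would rely on \Cref{ass:higher_order} directly, interpreting the assumption on $\nabla s_\star$ and $\nabla^2 s_\star$ as enough to give the needed higher-moment control.
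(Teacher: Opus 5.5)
Your derivative identity is correct, and your route is cleaner than the paper's. In the direct Fokker--Planck computation all transport terms of the form $\mathbb{E}\langle x,\nabla\lVert\nabla s_t\rVert^2\rangle$ and $\mathbb{E}\langle s_t,\nabla\lVert\nabla s_t\rVert^2\rangle$ cancel exactly, leaving
\[
\frac{d}{dt}\lVert\nabla s_t\rVert_{L_2}^2=4f(t)\lVert\nabla s_t\rVert_{L_2}^2+2g(t)\,\mathbb{E}\,\mathrm{tr}\big((\nabla s_t)^3\big)-g(t)\lVert\nabla^2 s_t\rVert_{L_2}^2 ,
\]
which also checks out on Gaussians. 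The paper instead differentiates $\int\lVert\nabla s_t\rVert^2p_t\,dx$ using \eqref{eq:partialt_st} and \eqref{eq:fokker_planck}. It removes the fourth derivative of $\log p_t$ by one integration by parts and bounds the remaining terms by Cauchy--Schwarz, without carrying out these cancellations. Your formula isolates the single term that needs more than second moments.

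For $t\in(0,T]$, your conditional-Jensen fallback, $\nabla^m p_t/p_t=(-1)^m\sigma_t^{-m}\mathbb{E}[H_m(Z)\mid X_t]$, does close the argument. It gives finite moments of every order for $s_t,\nabla s_t,\nabla^2 s_t$, which also covers the martingale term. Step 3 as written would not suffice. \Cref{lem:generalized_stam} controls only $L_2$ norms of the ratios, whereas $\nabla^2 s_t$ contains products of them. Also, $\nabla^2 s_\star\in L_2$ does not control $\nabla^3p_\star/p_\star$, since the latter contains $s_\star^{\otimes 3}$.

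The genuine gap is at $t=0$, where $\sigma_0=0$ and the Hermite representation is unavailable. There you need $\mathbb{E}\lvert\mathrm{tr}((\nabla s_\star)^3)\rvert<\infty$. To identify the one-sided derivative you also need the right-hand side to be continuous as $t\downarrow 0$, which the Hermite bounds cannot give because they blow up as $\sigma_t\to0$. \Cref{ass:fisher_information} only bounds second moments of $\nabla s_\star$ and $\nabla^2 s_\star$. $L_2$ bounds on a function and its gradient under a weight do not imply an $L_3$ bound in general; this already fails for the Gaussian weight.

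Neither of your fallbacks repairs this. Writing one factor as $\partial_k s_i$ and integrating by parts gives $\mathbb{E}\,\mathrm{tr}\big((\nabla s)^3\big)=-\mathbb{E}\lVert(\nabla s)s\rVert^2-R$, where $R$ pairs $\nabla^2 s$ against products of $\nabla s$ and $s$. This trades the cubic moment for mixed fourth-order moments, not second moments. ``Interpreting'' \Cref{ass:fisher_information} as giving higher-moment control is an extra hypothesis, not an argument.

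The paper's own proof has exactly this hole. It arrives at $\lVert(\nabla s_t)s_t\rVert_{L_2}$ and $\mathbb{E}\langle s_t,\nabla\lVert\nabla s_t\rVert^2\rangle$ and asserts, without justification, that these are polynomials in $\lVert s_t\rVert_{L_2},\lVert\nabla s_t\rVert_{L_2},\lVert\nabla^2 s_t\rVert_{L_2}$; bounding such products actually requires higher moments. To make either argument rigorous you must either strengthen the hypothesis or restrict the lemma to $t\in(0,T]$. A suitable strengthening is $\nabla s_\star\in L_3(p_\star)$ together with uniform control of $\mathbb{E}\lvert\mathrm{tr}((\nabla s_t)^3)\rvert$ near $t=0$. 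This holds automatically when $\nabla s_t$ is uniformly bounded, as in both cases of \Cref{ass:surrogate}.
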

\begin{proof}
    We first show that $\Psi'(t)$ is a polynomial of $\lVert s_t \rVert_{L_2}, \lVert \nabla s_t \rVert_{L_2}$, and $\lVert \nabla \tr \nabla s_t \rVert_{L_2}$ which in turn depends on $\nabla^2 s_t$.

Using $\tr \nabla s_t = \nabla \cdot s_t$, \eqref{eq:partialt_st}, and the Fokker-Planck equation, we evaluate $\Psi'$ as
\begin{equation}\label{eq:psi_prime}
    \Psi'(t) =
    g'(t) \lVert \nabla s_t \rVert_{L_2}^2
    + g(t) \left[2 \mathbb{E} \tr \nabla s_t \partial_t \nabla s_t + \int \lVert \nabla s_t \rVert^2 \partial_t p_t dx \right]
    .
\end{equation}
The expression for $\partial_t s_t$ was evaluated in \eqref{eq:partialt_st}, and we have
\begin{align*}
    \mathbb{E} (\nabla s_t) \nabla (\partial_t s_t)
    &= \mathbb{E}(\nabla s_t) \nabla \left[
        f(t) s_t + f(t) (\nabla s_t) x + \frac{1}{2} g(t) \nabla (\nabla \cdot s_t) + g(t) (\nabla s_t) s_t
    \right] .
\end{align*}
Only the third term on the RHS depends on a third order derivative, which is resolved using integration by parts
\begin{align*}
    \mathbb{E}\tr (\nabla s_t) (\nabla^2 (\nabla \cdot s_t) )
    =
    \int p_t \tr \nabla s_t \nabla^2 (\nabla \cdot s_t) dx
    &=
    - \int \left\langle \nabla \cdot (p_t \nabla s_t), \nabla (\nabla \cdot s_t) \right\rangle  dx
    .
\end{align*}
By evaluating the divergence in the first argument, we obtain that the above equates to
\begin{align*}
    \mathbb{E}\tr (\nabla s_t) (\nabla^2 (\nabla \cdot s_t) )
    &=
    - \int p_t \left\langle (\nabla s_t) s_t + \nabla \cdot (\nabla s_t), \nabla (\nabla \cdot s_t) \right\rangle  dx
    \\ & \leq
    \left(\left\lVert (\nabla s_t) s_t \right\rVert_{L_2} + \lVert \nabla \cdot (\nabla s_t) \rVert_{L_2} \right)
    \lVert \nabla (\nabla \cdot s_t) \rVert_{L_2}
    .
\end{align*}
Therefore, we conclude that $\tr \mathbb{E} (\nabla s_t) \nabla (\partial_t s_t)$ is a polynomial function of $\lVert s_t \rVert_{L_2}, \lVert \nabla s_t \rVert_{L_2}$, and $\lVert \nabla^2 s_t \rVert_{L_2}$.
By \Cref{ass:higher_order} and \Cref{lem:generalized_stam}, all terms above are bounded for $(f, g) \in \Theta$.

The last term of \eqref{eq:psi_prime} is evaluated as
\begin{align*}
    \int \lVert \nabla s_t \rVert^2 \partial_t p_t dx
    &=
    \int \lVert \nabla s_t \rVert^2 \nabla \cdot \left(f x p_t + \frac{g}{2} \nabla p_t\right) dx
    \\ &
    =
    - \int \left\langle f x p + \frac{g}{2} p_t s_t, \nabla \lVert \nabla s_t \rVert^2 \right\rangle dx
    \\ &
    = - f(t) \mathbb{E}\langle x, \nabla \lVert \nabla s_t \rVert^2 \rangle
    - \frac{g(t)}{2} \mathbb{E} \langle s_t, \nabla \lVert \nabla s_t \rVert^2 \rangle
    ,
\end{align*}
where the second equality used integration by parts.
Again, this quantity is a polynomial of $\lVert s_t \rVert_{L_2}, \lVert \nabla s_t \rVert_{L_2}$, and $\lVert \nabla^2 s_t \rVert_{L_2}$, which were all shown to be finite by \Cref{ass:higher_order,lem:generalized_stam}.
Because $\Psi'(t)$ is a polynomial on a bounded domain with finite coefficients, $\lvert \Psi' (t)\rvert < \infty$ for all $t$.

\end{proof}

Gaussian mixture models form an important class of distributions that is often used to model real-world data.
We show that \Cref{ass:higher_order} is satisfied by a Gaussian mixture model.
Consider $p_0 (x) = \sum_{i=1}^\ell w_i \mathcal{N}(x; 0, \Sigma_i)$, where we assume centered components without loss of generality.
The score function $s_0 (x)$ can be evaluated as in \citep{chen2024learning}, where
\begin{align*}
    s_0 (x) = - \sum_{i=1}^\ell \pi_i (x) \Sigma_i^{-1} x ,
    \quad
    \pi_i (x) = \frac{w_i \mathcal{N}(x; 0, \Sigma_i)}{p_0 (x)} .
\end{align*}
Taking the gradient on both sides, we obtain
\begin{align*}
    \nabla s_0 (x)
    = - \sum_{i=1}^\ell \pi_i (x) \Sigma_i^{-1}
    + \sum_{i=1}^\ell
    \pi_i (x) (- \Sigma_i^{-1} x - s_0 (x)) (-\Sigma_i^{-1} x - s_0 (x))^\dagger .
\end{align*}
There exists a single Gaussian component that dominates as $\lVert x \rVert \to \infty$, with $\pi_i (x)$ decaying to zero exponentially fast for all other $i$.
Therefore, $\lVert \nabla s_0 (x) \rVert$ is bounded in $L_\infty$.
The same can be deduced for higher order derivatives.
By \Cref{lem:generalized_stam}, the same holds for $t > 0$.

\subsection{Fisher Information Dynamics}\label{proof:fisher_ode}
The following states the Fisher information dynamics in \eqref{eq:fisher_ode}.
\begin{lemma}\label{lem:fisher_ode}
    Suppose \Cref{ass:fisher_information} holds.
    For any $(f, g) \in \Theta$,
    \begin{equation*}
        \dot{\fisher}_t = 2f(t) \fisher_t - g(t)
        \lVert \nabla s_t \rVert^2_{L_2}
        .
    \end{equation*}
\end{lemma}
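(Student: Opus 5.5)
Our approach is to differentiate $\fisher_t = \int \lVert \nabla \log p_t \rVert^2 p_t\,dx$ directly in time. We use the Fokker--Planck equation \eqref{eq:fokker_planck} for $\partial_t p_t$ and the score evolution \eqref{eq:partialt_st} for $\partial_t s_t$, and integrate by parts so that only the quantities $\fisher_t$ and $\lVert \nabla s_t\rVert_{L_2}^2$ remain. The regularity needed to differentiate under the integral and discard boundary terms comes from \Cref{ass:fisher_information} together with \Cref{lem:generalized_stam} and \Cref{prop:dst}: the latter gives $\lVert s_t\rVert_{L_2},\lVert\nabla s_t\rVert_{L_2}<\infty$ on $[0,T]$, and the integration-by-parts justification follows \eqref{eq:vanishing_boundary}.

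The computation splits by linearity in $(f,g)$ into a drift part and a heat part, each handled separately.

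\textbf{Drift part ($g=0$).} The transport $\dot x = -f(t)x$ rescales the density, so $p_t(x)=\alpha^{-d}p(x/\alpha)$ and hence $\fisher(p_t)=\alpha^{-2}\fisher(p)$. Differentiating in time gives the contribution $2f(t)\fisher_t$.

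As a cross-check, compute directly. Write $\partial_t \fisher_t = \int 2\langle s_t,\partial_t s_t\rangle p_t + \int \lVert s_t\rVert^2 \partial_t p_t$, with $\partial_t s_t \supset f s_t + f(\nabla s_t)x$ and $\partial_t p_t \supset \nabla\cdot(f x p_t)$. Then:
\begin{itemize}
\item the term $2f\fisher_t$ comes from $f s_t$;
\item $2f\,\mathbb{E}\langle s_t,(\nabla s_t)x\rangle = f\,\mathbb{E}\langle x,\nabla\lVert s_t\rVert^2\rangle$;
\item the last term integrates by parts to $-f\,\mathbb{E}\langle x,\nabla \lVert s_t\rVert^2\rangle$.
\end{itemize}
The second and third terms cancel, leaving exactly $2f\fisher_t$.

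\textbf{Heat part ($f=0$).} Here $\partial_t p = \tfrac g2\Delta p$, and $\partial_t s = \tfrac g2\nabla(\nabla\cdot s) + g(\nabla s)s = \tfrac g2 \nabla(\Delta \log p + \lVert s\rVert^2)$. This is the classical de Bruijn-type computation of the Fisher information derivative under the heat flow.

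Substituting, we get
\[
\tfrac{2}{g}\,\partial_t\fisher \;=\; 2\,\mathbb{E}\langle s,\nabla(\Delta\log p)\rangle + 2\,\mathbb{E}\langle s,\nabla\lVert s\rVert^2\rangle + \int \lVert s\rVert^2\Delta p .
\]
Integrating the last term by parts gives $-\mathbb{E}\langle s,\nabla\lVert s\rVert^2\rangle$. For the first term, use $\int p\langle s, \nabla \phi\rangle = \int \langle\nabla p,\nabla\phi\rangle = -\int \phi\,\Delta p$ with $\phi=\Delta\log p$, together with $\Delta p/p = \nabla\cdot s + \lVert s\rVert^2$. Bochner's formula then enters through
\[
\tfrac12\Delta\lVert s\rVert^2 = \lVert\nabla s\rVert^2 + \langle s,\nabla(\nabla\cdot s)\rangle,
\]
valid because $\nabla s$ is symmetric (it is a Hessian). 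Combining, all cubic terms in $s$ cancel and we should obtain $\partial_t \fisher = -g\,\mathbb{E}\lVert\nabla s\rVert^2$.

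The main obstacle is this bookkeeping: many $s$--$\nabla s$ cross terms must be verified to cancel after integration by parts. To keep it manageable, the plan is to use the standard identity $\frac{d}{dt}\fisher = -g\int p\lVert\nabla^2\log p\rVert^2$ for $\partial_t p=\frac g2\Delta p$. Its derivation writes $\fisher = -\int p\,\Delta\log p$ and applies Bochner's formula once. Secondary care is needed for the decay conditions, which follow from the finiteness of $\lVert\nabla^2 s_t\rVert_{L_2}$ via \Cref{lem:generalized_stam}.

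Finally, since the two parts are linear in $(f,g)$, adding them gives $\dot\fisher_t = 2f(t)\fisher_t - g(t)\lVert\nabla s_t\rVert_{L_2}^2$ for general $(f,g)\in\Theta$.
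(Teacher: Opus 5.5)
Your argument is correct and uses the same two ingredients as the paper, but combines them differently. The first ingredient is the scaling law for Fisher information under $x\mapsto \alpha x$, which produces $2f(t)\fisher_t$. The second is the heat-flow Bochner/de Bruijn identity, which produces $-g(t)\lVert\nabla s_t\rVert_{L_2}^2$.

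The paper never splits the generator. It conjugates the whole process by $Y_t=X_t/\alpha_t$, which removes the drift exactly and leaves the pure heat flow $\partial_t p^Y_t=\frac{g(t)}{2\alpha_t^2}\Delta p^Y_t$. It applies the heat-flow identity once, then transfers back through $\fisher(p^X_t)=\alpha_t^{-2}\fisher(p^Y_t)$ and $\nabla^2\log p^X_t(x)=\alpha_t^{-2}\nabla^2\log p^Y_t(x/\alpha_t)$. The $2f\fisher_t$ term is just the product rule on $\alpha_t^{-2}$.

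You instead differentiate $\fisher_t$ directly from the Fokker--Planck equation and \eqref{eq:partialt_st}, and separate the $f$- and $g$-contributions. That separation is valid, but state it as an instantaneous fact rather than as a superposition of flows: $\dot\fisher_t=\int 2\langle s_t,\partial_t s_t\rangle p_t+\int\lVert s_t\rVert^2\partial_t p_t$ is linear in $(\partial_t p_t,\partial_t s_t)$, and each of these is linear in $(f(t),g(t))$ for the given $p_t$. Your route is more self-contained; the explicit cancellation of the two $\mathbb{E}\langle x,\nabla\lVert s_t\rVert^2\rangle$ terms is a check that the paper's conjugation bypasses. The paper's route avoids all $x$-dependent cross terms and needs the heat-flow identity only in its textbook form.

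In your heat computation, apply Bochner's formula directly to $2\mathbb{E}\langle s,\nabla(\nabla\cdot s)\rangle$. It equals $\mathbb{E}\Delta\lVert s\rVert^2-2\mathbb{E}\lVert\nabla s\rVert^2=-\mathbb{E}\langle s,\nabla\lVert s\rVert^2\rangle-2\mathbb{E}\lVert\nabla s\rVert^2$, which cancels the leftover $+\mathbb{E}\langle s,\nabla\lVert s\rVert^2\rangle$ at once. First rewriting it as $-\int(\nabla\cdot s)\Delta p$, as you suggest, introduces $(\nabla\cdot s)^2$ and quartic terms that Bochner must then undo.

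Your constant $\frac{d}{dt}\fisher=-g\,\mathbb{E}\lVert\nabla^2\log p\rVert^2$ for $\partial_t p=\frac g2\Delta p$ is the right one; check it on a Gaussian. The paper's display \eqref{eq:debrujin} carries a spurious factor $\tfrac12$, which is silently dropped in the next line, so the lemma itself is unaffected.
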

\begin{proof}
In this proof, we use the notation $p_t^X = p_t$ to distinguish between the density $p_t^X$ of the process $(X_t)$ from the process $(Y_t)$ defined as
\begin{equation}\label{eq:XvsY}
    X_t = \alpha_t Y_t.
\end{equation}
By the \ito formula, the two processes are related through the SDE $dX_t = \dot{\alpha}_t Y_t dt + \alpha_t dY_t$.
Substituting $\dot{\alpha}_t = - f(t) \alpha_t$ and using \eqref{eq:XvsY}, we obtain
\begin{align*}
    dY_t = - \frac{\dot{\alpha}_t}{\alpha_t} Y_t dt + dX_t
    &=
    f(t) Y_t dt - f(t) \frac{X_t}{\alpha_t} dt + \frac{\sqrt{g(t)}}{\alpha_t} dB_t
    = \frac{\sqrt{g(t)}}{\alpha_t} dB_t .
    \numberthis \label{eq:dYt}
\end{align*}
The density $p_t^Y$ of $Y_t$ is a convolution between $p_0^Y$ and a Gaussian kernel, and differentiating under the integral yields the well-known identity (e.g., \citep[Section 5]{stamInequalitiesSatisfiedQuantities1959})
\begin{align*}
    \partial_t p_t^Y = \frac{g(t)}{2 \alpha_t^2} \Delta p_t^Y .
\end{align*}
The Bochner formula \citep{villaniShortProofConcavity2000} yields the identity
\begin{equation}\label{eq:debrujin}
    \frac{d}{dt} \fisher(p_t^Y)
    = - \frac{g(t)}{2\alpha_t^2} \lVert \nabla^2 \log p_t^Y \rVert_{L_2}^2 .
\end{equation}
Moreover, using the change of variables $p_t^X (x) = \alpha_t^{-d} p_t^Y (x/\alpha_t) \Rightarrow \nabla \log p_t^X (x) = \alpha_t^{-1} \nabla \log p_t^Y (x/\alpha_t)$, we obtain the relation between the Fisher information of the two laws:
\begin{equation}\label{eq:IXIy}
    \fisher(p_t^X) = \alpha_t^{-2} \fisher(p^Y_t)    .
\end{equation}
Differentiating $\fisher(p_t^X)$ in \eqref{eq:IXIy}, we obtain
\begin{align*}
    \frac{d}{dt} \fisher(p^X_t)
    &= \left[-2 \alpha^{-3} \dot{\alpha}_t \right] \fisher(p_t^Y) + \alpha_t^{-2} \frac{d}{dt} \fisher(p_t^Y)
    \\ &
    =
    2 f(t) \alpha_t^{-2} \fisher(p_t^Y) + \alpha_t^{-2}\frac{d}{dt} \fisher(p_t^Y)
    .
\end{align*}
Substituting \eqref{eq:IXIy} and $\nabla_x^2 \log p_t^X (x) = \alpha_t^{-2} \nabla^2 \log p_t^Y (x/\alpha_t)$, we obtain
\begin{align*}
    \frac{d}{dt} \fisher(p^X_t)
    =
    2f(t) \fisher(p_t^X)
    - \frac{g(t)}{\alpha_t^{4}(t)}\lVert \nabla^2 \log p_t^Y \rVert_{L_2}^2
    =
    2f(t) \fisher(p_t^X)
    - g(t)\lVert \nabla^2 \log p_t^X \rVert_{L_2}^2  .
\end{align*}
\end{proof}

We use the following ``concavity of entropy power'' theorem \citep{demboSimpleProofConcavity1989,villaniShortProofConcavity2000}.
\begin{lemma}\label{lem:st_Fisher_lb}
    For any $X_\star$ with density $p_\star$ satisfying \Cref{ass:fisher_information}, it holds that
    \begin{align*}
        \fisher_t^2 \leq d \lVert \nabla s_t \rVert^2_{L_2}.
    \end{align*}
\end{lemma}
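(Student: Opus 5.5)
The plan is to derive the inequality from the Cauchy--Schwarz inequality applied to the trace of the Hessian of $\log p_t$, using integration by parts to identify the Fisher information with an expected trace. Fix $t\in[0,T]$ and write $s_t=\nabla\log p_t$. The first step is the identity
\begin{equation*}
    \fisher_t = \mathbb{E}\lVert s_t(X_t)\rVert^2 = -\mathbb{E}\,\tr \nabla s_t(X_t),
\end{equation*}
which follows from $\int \langle s_t, s_t\rangle p_t\,dx = \int \langle \nabla p_t, s_t\rangle dx = -\int p_t\, \nabla\cdot s_t\, dx$. The boundary term vanishes by the same argument as in the proof of \Cref{lem:generalized_stam}: under \Cref{ass:fisher_information}, both $\lVert s_t\rVert_{L_2}$ and $\lVert \nabla s_t\rVert_{L_2}$ are finite for every $t$ (\Cref{prop:dst}), so $p_t s_t = \nabla p_t$ and its divergence lie in $L_1$, and the integral of the divergence is zero.

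The second step is a pointwise Cauchy--Schwarz bound for the trace of a $d\times d$ matrix against the Frobenius norm:
\begin{equation*}
    \lvert \tr A\rvert = \lvert \langle A, I\rangle_F\rvert \le \lVert A\rVert\,\lVert I\rVert = \sqrt{d}\,\lVert A\rVert .
\end{equation*}
Apply this with $A=\nabla s_t(X_t)$, and then use Jensen (or Cauchy--Schwarz in $L_2(p_t)$). This gives
\begin{equation*}
    \fisher_t \le \mathbb{E}\lvert \tr\nabla s_t(X_t)\rvert \le \sqrt{d}\,\mathbb{E}\lVert \nabla s_t(X_t)\rVert \le \sqrt{d}\,\lVert \nabla s_t\rVert_{L_2}.
\end{equation*}
Squaring yields $\fisher_t^2\le d\lVert\nabla s_t\rVert_{L_2}^2$. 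This route is exactly the mechanism behind Costa's and Villani's proofs of the concavity of entropy power. In those proofs one bounds $\lVert \nabla^2\log p\rVert^2 \ge (\tr\nabla^2\log p)^2/d$ inside the Bochner identity \eqref{eq:debrujin}, which is why the paper names that theorem. Equivalently, the inequality is $\frac{d^2}{dt^2}N\le 0$ for the heat flow of $Y_t$, rewritten at each fixed time.

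The only potential obstacle is justifying the integration by parts for general $p_\star$ at $t=0$. For $t>0$ this is automatic, since $p_t$ is a Gaussian convolution with the $L_1$ controls from \Cref{lem:generalized_stam}. At $t=0$, the finiteness assumptions in \Cref{ass:fisher_information} on $\lVert s_\star\rVert_{L_2}$ and $\lVert\nabla s_\star\rVert_{L_2}$ give $\nabla p_\star\in L_1$ and $\nabla\cdot\nabla p_\star = p_\star(\nabla\cdot s_\star+\lVert s_\star\rVert^2)\in L_1$. A standard cutoff argument then shows $\int\Delta p_\star=0$. If that fails, one can alternatively pass to the limit $t\downarrow 0$ using continuity of $\fisher_t$ and $\lVert\nabla s_t\rVert_{L_2}$.
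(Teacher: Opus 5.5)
Your proposal is correct and follows the paper's proof essentially step for step: integration by parts gives $\fisher_t = -\mathbb{E}\,\tr\nabla s_t$, and then Jensen together with the Cauchy--Schwarz bound $(\tr A)^2 \le d\lVert A\rVert^2$ yields the claim. Your extra justification of the boundary term at $t=0$ via the cutoff argument is a small but worthwhile addition, since the paper only asserts the integration by parts for $t\in(0,T]$.
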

\begin{proof}

Recall that the solution $(X_t)$ to \eqref{eq:dXt} can be written as \eqref{eq:Xt}.
Because the density $p_t$ is obtained as a convolution of $p_\star$ with a Gaussian density, $p_t$ is infinitely differentiable and the integration by parts formula
\begin{align*}
    \mathbb{E} \nabla \log p_t \nabla \log p_t^\dagger = -\mathbb{E} \nabla^2 \log p_t , \quad \forall t \in (0, T]
\end{align*}
holds.
The relation $\lVert \nabla s_t \rVert^2_{L_2, F} \geq \fisher_t^2/d$ is shown next:
\begin{align*}
    \fisher_t = \mathbb{E} \lVert \nabla \log p_t \rVert^2
    = \mathrm{Tr} \mathbb{E} \nabla \log p_t \nabla \log p_t^\dagger
    &\underset{(*)}{=} - \mathrm{Tr} \mathbb{E} \nabla^2 \log p_t \\
    &= - \mathrm{Tr} \mathbb{E} \nabla s_t ,
\end{align*}
where $(*)$ follows from integration by parts.
By Jensen's inequality and the Cauchy-Schwartz inequality,
\begin{align*}
    \fisher_t^2 = (\mathbb{E} \mathrm{Tr} \nabla s_t)^2
    \leq \mathbb{E} (\mathrm{Tr} \nabla s_t )^2
    &\leq d \mathrm{Tr} \mathbb{E} (\nabla s_t)^2
    \coloneqq  d \lVert \nabla s_t \rVert^2_{L_2}
    .
\end{align*}
\end{proof}

\section{Proofs of \Cref{thm:ub,thm:sampling_error} and Practical Insights (\Cref{sec:opt-control})}\label{app:main_results}

\subsection{Proof of \Cref{thm:ub}}\label{proof:ub}
We first prove the bound on the initialization error.
When $(f, g)$ are constants, the log-Sobolev inequality \citep{grossLogarithmicSobolevInequalities1975} and the contraction of the KL divergence along the Ornstein-Uhlenbeck process \citep{bakry2013analysis} can be used to obtain a bound as in \citep{conforti2024klconvergenceguaranteesscore}.
A different bound can be obtained as in \citep[Proposition 4]{benton2024nearly} and \citep[Lemma 9]{hchen_improved_score} based on the convexity of the KL divergence.
For time-varying schedules, we find it to be more useful to utilize a small modification to the bound emerging from the latter argument.

By \eqref{eq:Xt}, we have $(X_T | X_\star) \sim \mathcal{N}(\alpha_T X_\star, \sigma_T^2 I)$.
Let $\mi$ denote the mutual information so that $\mi(X_\star ; X_T) = \mathbb{E}_{p_\star} [\mathrm{KL}(\mathcal{N}(\alpha_T X_\star, \sigma_T^2 I) \| p_T)]$.
Using the chain rule, we obtain the bound
\begin{align*}
    \mathrm{KL}(\mathrm{Law}(X_T) \| \mathcal{N}(0, \sigma_T^2 I))
    &= \mathbb{E}_{p_\star} \left[ \mathrm{KL}( \mathrm{Law}(X_T | X_0) \| \mathcal{N}(0, \sigma_T^2 I)) \right] - \mi(X_0 ; X_T)
    \\ &\leq
    \mathbb{E}_{p_\star} \left[\mathrm{KL}(\mathcal{N}(\alpha_T X_\star, \sigma_T^2 I) \| \mathcal{N}(0, \sigma_T^2 I))\right]
    \\ &
    = \frac{\alpha_T^2}{2 \sigma_T^2} \lVert X_\star \rVert^2_{L_2 (p_\star)}
    .
\end{align*}
where the inequality follows from non-negativity of $\mi$.

Recall the Girsanov bound \eqref{eq:girsanov}.
We use \Cref{prop:dst} to obtain the identity
\begin{equation}
    s_t (X_\tau^\leftarrow) - s_{t_k} (X_{\tau_k}^\leftarrow)
    =
    -\int_{\tau_k}^\tau \left[f(\tilde{t}) s_{\tilde{t}} (X_{\tilde{\tau}}^\leftarrow) \right] d\tilde{\tau}
    +
    \int_{\tau_k}^\tau \sqrt{g(\tilde{t})} \nabla s_{\tilde{t}} (X_{\tilde{\tau}}^\leftarrow) dB_{\tilde{\tau}} ,
\end{equation}
where $\tilde{t} = T - \tilde{\tau}$.
Evaluating the squared $L_2$ norm, we have the bound
\begin{align*}
    \left\lVert s_t (X_\tau^\leftarrow) - s_{t_k} (X_{\tau_k}^\leftarrow) \right\rVert_{L_2}^2
    &\leq
    2 (\tau - \tau_k)
    \int_{\tau_k}^\tau f(\tilde{t})^2 \left\lVert s_{\tilde{t}} (X_{\tilde{\tau}}^\leftarrow) \right\rVert_{L_2}^2 d\tilde{\tau}
    +
    2 \int_{\tau_k}^\tau
    g(\tilde{t}) \left\lVert \nabla s_{\tilde{t}} (X_{\tilde{\tau}}^\leftarrow) \right\rVert_{L_2}^2 d\tilde{\tau}
    \\ &
    \leq 2 h \int_{\tau_k}^{\tau} f(\tilde{t})^2 \fisher_{\tilde{t}} d\tilde{\tau}
    +
    2 \int_{\tau_k}^\tau
    g(\tilde{t}) \left\lVert \nabla s_{\tilde{t}} (X_{\tilde{\tau}}^\leftarrow) \right\rVert_{L_2}^2 d\tilde{\tau}
    \\ &
    \lesssim
    \int_{\tau_k}^\tau
    g(\tilde{t}) \lVert \nabla s_{\tilde{t}} (X_{\tilde{\tau}}^\leftarrow) \rVert_{L_2}^2
    d\tilde{\tau} ,
\end{align*}
where the last holds when $h$ is sufficiently small such that $h < 1$ and satisfies
\begin{equation}\label{eq:stepsize}
h
\leq
\min_{k \leq n - 1} \min_{\tau \in [\tau_k, \tau_{k+1}]}
\frac{\int_{\tau_k}^\tau g(\tilde{t}) \lVert \nabla s_{\tilde{t}} (X_{\tilde{\tau}}^\leftarrow) \rVert_{L_2}^2
d\tilde{\tau}}{    \int_{\tau_k}^\tau f(\tilde{t})^2 \fisher_{\tilde{t}} d\tilde{\tau}}
.
\end{equation}
We show that $h \lesssim 1/n$ satisfies \eqref{eq:stepsize} in \Cref{prop:stepsize}.

The derivative of $\Psi(t)$ is bounded (\Cref{lem:psip_bounded}), which implies $\Psi(\tilde{\tau}) = \Psi(\tau) + \mathcal{O}(h)$ for $\lvert \tilde{\tau} - \tau \rvert \leq h$.
For any $\tau \in [\tau_k, \tau_{k+1}]$, it therefore holds that
\begin{align*}
    \left\lVert s_t (X_\tau^\leftarrow) - s_{t_k} (X_{\tau_k}^\leftarrow) \right\rVert_{L_2}^2
    \lesssim
    \int_{\tau_k}^\tau
    g(\tilde{t}) \lVert \nabla s_{\tilde{t}} (X_{\tilde{\tau}}^\leftarrow) \rVert_{L_2}^2
    d\tilde{\tau}
    &= \int_{\tau_k}^\tau \Psi(\tilde{\tau}) d\tilde{\tau}
    \\ &
    = (\tau - \tau_k) \Psi(\tau) + \mathcal{O}(h^2)
    \\ &\lesssim
    h g(t) \lVert \nabla s_{t} (X_{\tilde{\tau}}^\leftarrow) \rVert_{L_2}^2
    \numberthis \label{eq:score_matching_bound}
    .
\end{align*}

By $X_{\tilde{t}} \overset{d.}{=} X_{\tilde{\tau}}^\leftarrow$, it holds that $\lVert \nabla s_t (X_{\tau}^\leftarrow) \rVert_{L_2} = \lVert \nabla s_t (X_t) \rVert_{L_2}$ and we denote this quantity by $\lVert \nabla s_t \rVert_{L_2}$.
Recall \eqref{eq:fisher_ode}, where
\begin{align*}
    \dot{\fisher}_t = 2 f(t) \fisher_t - g(t) \lVert \nabla s_t \rVert^2_{L_2}
    .
\end{align*}
Substituting $g(t) \lVert \nabla s_t \rVert^2_{L_2} = 2f(t) \fisher_t - \dot{\fisher}_t$ for \eqref{eq:score_matching_bound}, we obtain
\begin{align*}
    \int_{\tau_k}^{\tau_{k+1}} g(t) \left\lVert s_t (X_\tau^\leftarrow) - s_{t_k} (X_{\tau_k}^\leftarrow) \right\rVert_{L_2}^2
    &\lesssim
    h \int_{\tau_k}^{\tau_{k+1}} g(t) \left(2f(t) \fisher_t - \dot{\fisher}_t    \right) d \tau
    \\ &
    =
    h \int_{\tau_k}^{\tau_{k+1}} \frac{\left(2f(t) \fisher_t - \dot{\fisher}_t    \right)^2}{\lVert \nabla s_t \rVert_{L_2}^2} d \tau
    \\ &
    \leq
    h d \int_{\tau_k}^{\tau_{k+1}}
    \left(2f(t) - \frac{\dot{\fisher}_t}{\fisher_t} \right)^2 d\tau
    .
\end{align*}
Summing over all $k$, we obtain
\begin{align*}
    \int_{0}^{T} g(t) \left\lVert s_t (X_\tau^\leftarrow) - s_{t_k} (X_{\tau_k}^\leftarrow) \right\rVert_{L_2}^2
    &\lesssim
    h d \int_{0}^T
    \left(2f(t) - \frac{\dot{\fisher}_t}{\fisher_t} \right)^2 d t
\end{align*}

\subsection{Solving the Euler-Lagrange Equation}\label{proof:euler_lagrange}
The discretization error bound in \Cref{thm:ub} is minimized over $g$.
Under a change of variables $u(t) = \log \fisher_t$, the second term's integral in \eqref{eq:sampling_error} is given by
\begin{align*}
    \int_0^T \left[2f(t) - \dot{u}(t) \right]^2 dt .
\end{align*}
To find the optimal $g^*$, we first solve for the optimal trajectory:
\begin{equation}
    \min_{u(t)} \int_0^T \left(2 f(t) - \dot{u}(t) \right)^2 dt ,
    \quad u(0) = \log \fisher_\star ,
    \quad u(T) = \log \fisher_T .
\end{equation}
The integrand $\mathcal{L}(t, u, \dot{u}) = (2 f(t) - \dot{u})^2$ is jointly convex in $(u, \dot{u})$, and the optimality condition is given by the Euler--Lagrange equation
\begin{align*}
    \frac{\partial \mathcal{L}}{\partial u} - \frac{d}{dt} \left(\frac{\partial \mathcal{L}}{\partial \dot{u}} \right) = 0 .
\end{align*}
Evaluating $\partial_u \mathcal{L} = 0, \partial_{\dot{u}} \mathcal{L} = -2 (2f(t) - \dot{u}) = 2 \dot{u} - 4f(t)$, we obtain the optimality condition
\begin{align*}
    \frac{d}{dt} (\dot{u}_t - 2 f(t)) = 0 .
\end{align*}
In other words, the optimality condition is given by \eqref{eq:optimal_fisher}, which we re-state:
\begin{equation*}
    2 f(t) - \frac{\dot{\fisher}_t}{\fisher_t} = \lambda ,
\end{equation*}
where the value of $\lambda$ is determined by the boundary condition
\begin{equation}\label{eq:lambda_boundary_condition}
    \lambda T = 2 \int_0^T f(t) dt + \log \fisher_{\star} - \log \fisher_T .
\end{equation}
The corresponding discretization error is obtained by substituting $\lambda$ in \eqref{eq:optimal_fisher} for the integrand in \eqref{eq:sampling_error}, where
\begin{equation}\label{eq:sampling_error_lambda}
    \mathrm{KL}(p_\star \| \hat{p}_{\star})
    \lesssim \frac{\alpha_T^2}{2 \sigma_T^2} \lVert X_\star \rVert^2_{L_2 (p_\star)}
    + hd \lambda^2 T .
\end{equation}
We substitute $\int_0^T f(t) dt = -\log \alpha_T$ for \eqref{eq:lambda_boundary_condition}, which yields
\begin{align*}
    \lambda = \frac{1}{T}  \log \frac{\fisher_\star}{\fisher_T} \cdot \frac{1}{\alpha_T^2} .
\end{align*}
By setting $\fisher_T \asymp d/\sigma_T^2$ and substituting $h = T/n$, we obtain
\begin{align*}
    h d \lambda^2 T
    \lesssim
    \frac{d}{n} \left(\log \frac{\fisher_\star}{d} \cdot \frac{\sigma_T^2}{\alpha_T^2}\right)^2 .
\end{align*}
Combining, we obtain
\begin{equation}\label{eq:sampling_error_1}
    \mathrm{KL}(p_\star \| \hat{p}_\star) \lesssim
    \frac{\alpha_T^2}{2 \sigma_T^2}  \lVert X_\star \rVert_{L_2}^2
    + \frac{d}{n}
    \left(\log \frac{\fisher_\star}{d} \cdot \frac{\sigma_T^2}{\alpha_T^2}\right)^2
    .
\end{equation}
In particular, if $\alpha_T^2/\sigma_T^2 = C d / \lVert X_\star \rVert^2_{L_2} n$ for some universal constant $C > 0$, then \eqref{eq:sampling_error_2} holds.
The choice of terminal SNR $\alpha_T^2/\sigma_T^2$ translates to a scaling of $g^*$.
This is seen more easily through \eqref{eq:gstar_def} and the optimization of $\lambda$ in \Cref{proof:optimal_lambda}.

\textbf{Existence:}
While the Euler-Lagrange optimization was solved with respect to $u_t = \log \fisher_t$, we must veirfy that the optimal trajectory $\fisher_t^*$ corresponds to a valid noise schedule $g^* \in \mathcal{G}$.
We establish this mapping using the process $Y_t = X_t / \alpha_t$ introduced in \Cref{proof:fisher_ode}.

The above proof delineates an optimal trajectory $(\fisher_t^*)_{t \in [0, T]}$.
We show that there exists a bijection $\fisher_t^* \mapsto g^*(t)$, which implies that the trajectory $(\fisher_t^*)$ can be controlled by a choice of $g^*$.

Using the relation $V_t \coloneqq \sigma_t^2 / \alpha_t^2 = \int_0^t g(\tilde{t})/\alpha_{\tilde{t}}^2 d \tilde{t}$ which implies $\dot{V}_t = g(t) / \alpha_t^2$, the identity \eqref{eq:debrujin} can be written as
\begin{align*}
    \frac{d}{d V_t} \fisher (p_t^Y) = - \frac{1}{2} \lVert \nabla^2 \log p_t^Y \rVert_{L_2}^2 .
\end{align*}
Because the RHS is strictly negative, $\fisher(p_t^Y)$ is strictly decreasing as a function of the variance $V_t$.
Consequently, the map $V_t \mapsto \fisher(p_t^Y)$ is a bijection.

From \eqref{eq:IXIy}, the optimal trajectory $\fisher_t^*$ defines a corresponding trajectory $\fisher(p_t^Y) = \alpha_t^2 \fisher_t^*$.
Since $\alpha_t$ is a deterministic function of a fixed schedule $f(t)$, the optimal target $\fisher_t^*$ uniquely maps to a continuous variance trajectory $V_t^*$.
The noise schedule is then uniquely defined by differentiating this variance:
\begin{align*}
    g^* (t) = \alpha_t^2 \frac{d}{dt} V_t^* .
\end{align*}

Finally, we verify that $g^* (t) > 0$ so that it satisfies the constraints in \eqref{eq:schedule_class}.
From the Fisher ODE \eqref{eq:fisher_ode} and the optimality condition \eqref{eq:optimal_fisher}, we obtain the explicit expression for $g^*$:
\begin{equation}\label{eq:gstar_def}
    g^* (t)
    = \frac{2 f(t) \fisher_t^* - \dot{\fisher}_t^*}{\lVert \nabla s_t \rVert^2_{L_2}}
    = \frac{\lambda \fisher_t^*}{\lVert \nabla s_t \rVert^2_{L_2}}
    .
\end{equation}
By \Cref{lem:generalized_stam}, $\lVert \nabla s_t \rVert_{L_2}^2 < \infty$ for all $t \in [0, T]$.
Since $\fisher_t > 0$ (e.g., via the CRLB \eqref{eq:crlb}), we conclude $g^* (t) > 0$.

This concludes the proof of \Cref{thm:sampling_error}.

\subsection{Practical Insights from \Cref{thm:sampling_error}}\label{proof:optimal_lambda}
While the choice of $\alpha_T$ in \Cref{thm:sampling_error} establishes the $\tilde{\mathcal{O}}(d/n)$ bound, implementing this condition directly poses a challenge.
The parameters $\alpha_T$ and $\sigma_T$ are related through $\sigma_T^2 = \alpha_T^2 \int_0^T g(t) / \alpha_t^2 dt$, and one cannot na\"{i}vely tune the schedules to satisfy the target boundary condition for $\alpha_T$ without inadvertently altering $\sigma_T$.
To circmuvent this implicit coupling, we compute the exact minimizer of the error bound \eqref{eq:sampling_error_1} with respect to $\lambda$ defined in \eqref{eq:optimal_fisher}.
Consequently, we obtain an explicit, decoupled parametrization for the optimal continuous trajectory.

The initialization error bound can be expressed in terms of $\lambda$ defined in \eqref{eq:optimal_fisher}.
By first relating $\lambda$ to the boundary conditions as in \eqref{eq:lambda_boundary_condition}, we have
\begin{align*}
    \lambda T
    = 2 \int_0^T f(t) dt + \log \frac{\fisher_\star}{\fisher_T}
    = \log \frac{1}{\alpha_T^2} + \log \frac{\fisher_\star}{\fisher_T}
    \Leftrightarrow
    &\alpha_T^2 = \frac{\fisher_\star}{\fisher_T} e^{-\lambda T}
    .
\end{align*}
Therefore,
\begin{equation}\label{eq:initialization_error_lambda}
    \frac{\alpha_T^2}{2 \sigma_T^2} \lVert X_\star \rVert^2_{L_2(p_\star)}
    = \frac{\lVert X_\star \rVert^2_{L_2 (p_\star)}}{2 \sigma_T^2} \frac{\fisher_\star}{\fisher_T} e^{-\lambda T}
    .
\end{equation}
The discretization error bound in \eqref{eq:sampling_error} can be expressed in terms of $\lambda$ as in \eqref{eq:sampling_error_lambda}.
Combining with \eqref{eq:initialization_error_lambda}, we obtain
\begin{align*}
    \mathrm{KL}(p_\star \| \hat{p}_\star)
    \leq
    \frac{\lVert X_\star \rVert^2_{L_2 (p_\star)}}{2 \sigma_T^2} \frac{\fisher_\star}{\fisher_T} e^{-\lambda T}
    + h d \lambda^2 T .
\end{align*}
The first order optimality condition is evaluated by differentiating with respect to $\lambda$, whose condition expressed using $n = T /h$ is
\begin{align*}
    (\lambda^* T) e^{\lambda^* T} = \frac{C}{\sigma_T^2} \cdot \frac{\fisher_\star}{\fisher_T} \cdot \frac{\lVert X_\star \rVert^2_{L_2}}{d} \cdot n.
\end{align*}
The solution $\lambda^*$ can be expressed using the Lambert-$\mathcal{W}$ function $\mathcal{W}$.
Recall $\mathcal{W}(z)$ is the solution $w$ to $w e^w = z$.
We solve for $\lambda^*$, whose expression evaluates to
\begin{equation}\label{eq:lambda_lambert_W}
    \lambda^* = \frac{1}{T} \mathcal{W} \left(\frac{C}{\sigma_T^2} \cdot \frac{\fisher_\star}{\fisher_T} \cdot \frac{\lVert X_\star \rVert^2_{L_2}}{d} \cdot n \right) .
\end{equation}

Let $z$ be defined as
\begin{equation}\label{eq:z_definition}
    z = \frac{C}{\sigma_T^2} \cdot \frac{\fisher_\star}{\fisher_T} \cdot \frac{\lVert X_\star \rVert^2_{L_2}}{d} \cdot n
\end{equation}
so that $\lambda^* = T^{-1} \mathcal{W}(z)$, or equivalently $e^{-\lambda^* T} = z^{-1} \mathcal{W}(z)$.
When $z \geq e$ so that $\mathcal{W}(z) \leq \log z$,
\begin{equation}\label{eq:kl_z_bound}
    \mathrm{KL}(p_\star \| \hat{p}_\star)
    \lesssim
    e^{-\lambda^* T} +  h d (\lambda^*)^2 T
    \leq \frac{\mathcal{W} (z)}{z} + \frac{d}{n} \log^2 z
    \leq \frac{\log z}{z} + \frac{d}{n} \log^2 z
    .
\end{equation}
Instead of the $\fisher_T \asymp d / \sigma_T^2$ used to obtain \eqref{eq:sampling_error_2}, we may instead use the lower bound
\begin{align*}
    \fisher_T \geq \frac{1}{\sigma_T^2} \left(d - \frac{\alpha_T^2}{\sigma_T^2} \lVert X_\star \rVert_{L_2}^2  \right) ,
\end{align*}
from \eqref{eq:JT_squeeze}.
Substituting for \eqref{eq:z_definition}, we obtain
\begin{align*}
    z
    \leq
    C \frac{\lVert X_\star \rVert_{L_2}^2}{d} \cdot \frac{1}{1 - \frac{\alpha_T^2}{\sigma_T^2} \cdot \frac{\lVert X_\star \rVert_{L_2}^2}{d}}
    \cdot \frac{\fisher_\star}{d}n
    .
\end{align*}
For any $\alpha_T$ such that
\begin{align*}
    \alpha_T^2 < \frac{\sigma_T^2}{2} \cdot \frac{d}{\lVert X_\star \rVert^2_{L_2}} ,
\end{align*}
we have by the inequality $1/(1-x) \leq 1 + 2x$ which holds for $x \in (0, 1/2)$ the bound
\begin{align*}
    z \leq 2C \left(2 + \frac{\alpha_T^2}{\sigma_T^2} \cdot \frac{\lVert X_\star \rVert_{L_2}^2}{d} \right) \cdot \frac{\lVert X_\star \rVert_{L_2}^2}{d} \cdot \frac{\fisher_\star}{d} n
    .
\end{align*}
In particular, we obtain the compute budget-dependent $\lambda^*$:
\begin{equation}\label{eq:lambda_n}
    \lambda^*_n = \frac{1}{T} \mathcal{W} \left(K n \right),
    \quad
    K = 2C \left(2 + \frac{\alpha_T^2}{\sigma_T^2} \cdot \frac{\lVert X_\star \rVert_{L_2}^2}{d} \right) \cdot \frac{\lVert X_\star \rVert_{L_2}^2}{d} \cdot \frac{\fisher_\star}{d} .
\end{equation}
Any bound (e.g., $1$) on the terminal SNR $\alpha_T^2 / \sigma_T^2$ can be used to further simplify $K$.
From the expression~\eqref{eq:lambda_n}, it is seen that tuning $K$ can be understood as estimating quantities associated with the data distribution. %

\subsection{Discretization Step Size}\label{proof:stepsize}
\begin{proposition}\label{prop:stepsize}
    Under \Cref{ass:fisher_information}, it holds for any $(f, g) \in \Theta$ that any $h$ satisfying
    \begin{equation}\label{eq:stepsize_sufficient}
    \begin{aligned}
        h \leq
        \frac{g_{\min}}{f_{\max}^2} \min_{t \in [0, T]}  \frac{1}{\alpha_t^2 \lVert X_\star \rVert_{L_2}^2 /d + \sigma_t^2}
    \end{aligned}
    \end{equation}
    necessarily satisfies for any $\{\tau_k\}_{k=0}^{n}$ with $h = \tau_{k+1} - \tau_k$ and $\tau_n = T$ the inequality \eqref{eq:stepsize}.
\end{proposition}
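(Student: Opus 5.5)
We need to show that condition \eqref{eq:stepsize_sufficient} implies \eqref{eq:stepsize}. The approach is to lower bound the ratio appearing in \eqref{eq:stepsize} uniformly in $\tau$ and $k$, by comparing the two integrands pointwise. Here $g_{\min}=\min_t g(t)>0$ and $f_{\max}=\max_t f(t)$ on $[0,T]$. If $f\equiv 0$ on an interval the denominator vanishes and the condition is vacuous, so assume $f_{\max}>0$.

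\textbf{Step 1: pointwise ratio bound.} The ratio of integrals $\int_{\tau_k}^\tau A/\int_{\tau_k}^\tau B$ is at least $\min_{\tilde\tau} A(\tilde\tau)/B(\tilde\tau)$. It therefore suffices to show that for every $t\in[0,T]$
\[
\frac{g(t)\lVert \nabla s_t\rVert_{L_2}^2}{f(t)^2\fisher_t}\;\ge\;\frac{g_{\min}}{f_{\max}^2}\cdot\frac{\lVert \nabla s_t\rVert_{L_2}^2}{\fisher_t}.
\]
This is immediate from $g(t)\ge g_{\min}$ and $f(t)^2\le f_{\max}^2$.

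\textbf{Step 2: lower bound $\lVert\nabla s_t\rVert_{L_2}^2/\fisher_t$.} By \Cref{lem:st_Fisher_lb}, $\lVert \nabla s_t\rVert_{L_2}^2\ge \fisher_t^2/d$, so $\lVert\nabla s_t\rVert_{L_2}^2/\fisher_t\ge \fisher_t/d$. It remains to lower bound $\fisher_t$. Here I will use the Cram\'er--Rao bound $\fisher_t\ge d^2/\mathbb{E}\lVert X_t-\mathbb{E}X_t\rVert^2$. (This follows from Cauchy--Schwarz applied to $\mathbb{E}\langle s_t(X_t), X_t\rangle=-d$, obtained by integration by parts.) Since $\mathbb{E}\lVert X_t\rVert^2=\alpha_t^2\lVert X_\star\rVert_{L_2}^2+\sigma_t^2 d$ by \eqref{eq:Xt} and independence of $Z$, this gives
\[
\frac{\fisher_t}{d}\ge \frac{d}{\alpha_t^2\lVert X_\star\rVert_{L_2}^2+\sigma_t^2 d}=\frac{1}{\alpha_t^2\lVert X_\star\rVert_{L_2}^2/d+\sigma_t^2}.
\]
For the integration by parts I will justify vanishing boundary terms using the smoothness of $p_t$ for $t>0$ (Gaussian convolution) and \Cref{ass:fisher_information} at $t=0$. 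The noncentered second moment suffices, since it dominates the variance.

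\textbf{Step 3: combine.} Chaining Steps 1 and 2, each ratio in \eqref{eq:stepsize} is at least $\frac{g_{\min}}{f_{\max}^2}\min_t(\alpha_t^2\lVert X_\star\rVert_{L_2}^2/d+\sigma_t^2)^{-1}$. Hence any $h$ obeying \eqref{eq:stepsize_sufficient} satisfies \eqref{eq:stepsize}. Since the right side is a fixed positive constant independent of $n$, $h=T/n\lesssim 1/n$ suffices.

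The main obstacle is Step 2's Cram\'er--Rao step, specifically justifying the integration by parts at $t=0$. If that is delicate, I will restrict to $t>0$ and use continuity of $\fisher_t$.
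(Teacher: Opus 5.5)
Your proof is correct and takes essentially the same route as the paper: both combine \Cref{lem:st_Fisher_lb} with the Cram\'er--Rao bound $\fisher_t \ge d^2/\lVert X_t\rVert_{L_2}^2$, where $\lVert X_t\rVert_{L_2}^2=\alpha_t^2\lVert X_\star\rVert_{L_2}^2+\sigma_t^2 d$, and then apply the crude bounds $g\ge g_{\min}$ and $f^2\le f_{\max}^2$. The differences are only cosmetic. You compare the integrands pointwise and derive Cram\'er--Rao from Cauchy--Schwarz on $\mathbb{E}\langle s_t(X_t),X_t\rangle=-d$. The paper instead splits $\lVert\nabla s_t\rVert_{L_2}^2$ between numerator and denominator before pulling out $\min\fisher_t$, and gets Cram\'er--Rao from the matrix form together with the AM--HM trace inequality.
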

\begin{proof}
Recall the inequality \eqref{eq:stepsize}:
\begin{align*}
    h
    \leq
    \min_{k} \min_{\tau \in [\tau_k, \tau_{k+1}]}
    \frac{\int_{\tau_k}^\tau g(\tilde{t}) \lVert \nabla s_{\tilde{t}} (X_{\tilde{\tau}}^\leftarrow) \rVert_{L_2}^2
    d\tilde{\tau}}{    \int_{\tau_k}^\tau f(\tilde{t})^2 \fisher_{\tilde{t}} d\tilde{\tau}}
\end{align*}
We obtain a lower bound on
\begin{align*}
    \frac{\int_{\tau_k}^\tau
    g(\tilde{t}) \lVert \nabla s_{\tilde{t}} (X_{\tilde{\tau}}^\leftarrow) \rVert_{L_2}^2
    d\tilde{\tau}}{    \int_{\tau_k}^\tau f(\tilde{t})^2 \fisher_{\tilde{t}} d\tilde{\tau}} ,
\end{align*}
such that any $h$ chosen to be smaller than the lower bound necessarily satisfies \eqref{eq:stepsize}.

By \Cref{lem:st_Fisher_lb}, it holds that $\lVert \nabla s_t \rVert_{L_2} \geq \frac{\fisher_t}{\sqrt{d}} $, and we have
\begin{align*}
    \frac{\int_{\tau_k}^\tau
    g(\tilde{t}) \lVert \nabla s_{\tilde{t}} \rVert_{L_2}^2
    d\tilde{\tau}}{    \int_{\tau_k}^\tau f(\tilde{t})^2 \fisher_{\tilde{t}} d\tilde{\tau}}
    &\geq
    \frac{1}{d}\frac{\int_{\tau_k}^\tau
    g(\tilde{t}) \lVert \nabla s_{\tilde{t}} \rVert_{L_2} \fisher_{\tilde{t}}
    d\tilde{\tau}}{    \int_{\tau_k}^\tau f(\tilde{t})^2 \lVert \nabla s_{\tilde{t}} \rVert_{L_2} d\tilde{\tau}}
    \\ &
    \geq
    \left(\frac{\min_{\tilde{t} \in [\tau_k, \tau]} \fisher_{\tilde{t}}}{d} \right)
    \frac{\int_{\tau_k}^\tau
    g(\tilde{t}) \lVert \nabla s_{\tilde{t}} \rVert_{L_2}
    d\tilde{\tau}}{    \int_{\tau_k}^\tau f(\tilde{t})^2 \lVert \nabla s_{\tilde{t}} \rVert_{L_2} d\tilde{\tau}}
    \numberthis \label{eq:grads_fisher_ratio}
    .
\end{align*}
The Cram\'{e}r-Rao lower bound (CRLB) states that a random variable $X$ whose density $p$ is defined against the Lebesgue measure satisfies
\begin{align*}
    \mathrm{Cov}(X) \succeq
    \left(\mathbb{E} \nabla \log p \nabla \log p^\dagger\right)^{-1} .
\end{align*}
Let $A \in \mathbb{R}^{d \times d}$ be a positive definite matrix with eigenvalues $\{\lambda_i\}$.
By the arithmetic-harmonic mean (AM-HM) inequality,
\begin{align*}
    \frac{1}{d} \sum_{i=1}^d \lambda_i \geq \frac{d}{\sum_{i=1}^d \frac{1}{\lambda_i}} .
\end{align*}
Rearranging, we obtain that $(\tr A)(\tr A^{-1}) \geq d^2$.
Using the above property, we have the inequality
\begin{align*}
    \lVert X \rVert_{L_2}^2 \fisher(p)
    = \tr \mathrm{Cov}(X) \tr \mathbb{E} \nabla \log p \nabla \log p^\dagger
    \geq d^2 .
\end{align*}
Rearranging, we obtain the following form of the CRLB:
\begin{equation}\label{eq:crlb}
    \fisher(p) \geq \frac{d^2}{\lVert X \rVert_{L_2}^2}.
\end{equation}
Substituting for the denominator in \eqref{eq:grads_fisher_ratio}, we obtain
\begin{align*}
    \frac{\int_{\tau_k}^\tau
    g(\tilde{t}) \lVert \nabla s_{\tilde{t}} \rVert_{L_2}^2
    d\tilde{\tau}}{    \int_{\tau_k}^\tau f(\tilde{t})^2 \fisher_{\tilde{t}} d\tilde{\tau}}
    &\geq
    \left(\min_{\tilde{t} \in [\tau_k, \tau]}  \frac{d}{\lVert X_t \rVert_{L_2}^2} \right)
    \frac{\int_{\tau_k}^\tau
    g(\tilde{t}) \lVert \nabla s_{\tilde{t}} \rVert_{L_2}
    d\tilde{\tau}}{    \int_{\tau_k}^\tau f(\tilde{t})^2 \lVert \nabla s_{\tilde{t}} \rVert_{L_2} d\tilde{\tau}}
    \\ &
    \geq
    \frac{g_{\min}}{f_{\max}^2}
    \min_{\tilde{t} \in [\tau_k, \tau]}  \frac{d}{\lVert X_t \rVert_{L_2}^2}
    .
\end{align*}
Minimizing both sides over $k$ and $\tau$, we obtain that a sufficient condition for $h$ to satisfy \eqref{eq:stepsize} is
\begin{align*}
    h \leq
    \frac{g_{\min}}{f_{\max}^2} \min_{t \in [0, T]}  \frac{d}{\lVert X_t \rVert_{L_2}^2}
    .
\end{align*}

\end{proof}

\section{Proofs of \Cref{prop:optimal_schedules_practice,cor:optimal_schedules_practice} (\Cref{sec:schedules})}\label{app:schedules}

\subsection{Higher Order Score and Fisher Information}\label{proof:st_Fisher_ub}
Under the strong log-concavity (SLC) condition in \Cref{ass:logconcave}, we have the reverse inequality of \Cref{lem:st_Fisher_lb},
\begin{lemma}\label{lem:st_Fisher_ub}
    Under \Cref{ass:logconcave}, it holds that
    \begin{align*}
        \lVert \nabla s_\star \rVert^2_{L_2} \leq \frac{1}{d} \left(\frac{M_\star}{m_\star} \right)^2\fisher_\star^2 .
    \end{align*}
\end{lemma}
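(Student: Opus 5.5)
Under \Cref{ass:logconcave} both sides of the inequality are controlled by the eigenvalues of the Hessian. Write $H(x) = -\nabla s_\star(x) = -\nabla^2 \log p_\star(x)$, which is symmetric with eigenvalues $\mu_1(x),\dots,\mu_d(x) \in [m_\star, M_\star]$ by \eqref{eq:slc}. The plan is to bound the Frobenius norm $\lVert \nabla s_\star \rVert_{L_2}^2 = \mathbb{E}\,\tr H^2$ from above, bound $\fisher_\star$ from below via its trace representation, and compare the two.

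\textbf{Step 1 (upper bound on the Frobenius norm).} Pointwise,
\[
\lVert \nabla s_\star(x)\rVert^2 = \sum_{i=1}^d \mu_i(x)^2 \le d M_\star^2,
\]
so $\lVert \nabla s_\star \rVert_{L_2}^2 \le d M_\star^2$.

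\textbf{Step 2 (lower bound on the Fisher information).} As in the proof of \Cref{lem:st_Fisher_lb}, integration by parts gives
\[
\fisher_\star = \tr \mathbb{E}\, s_\star s_\star^\dagger = \mathbb{E}\,\tr H \ge d\, m_\star .
\]
The boundary terms vanish because \Cref{ass:fisher_information} guarantees $s_\star \in L_2$ and $\nabla s_\star \in L_2$, and strong log-concavity makes $p_\star$ decay like a Gaussian. This is the same integrability argument already used for \eqref{eq:vanishing_boundary}. Hence $\fisher_\star^2 \ge d^2 m_\star^2$.

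\textbf{Step 3 (combine).} From Steps 1 and 2,
\[
\lVert \nabla s_\star \rVert_{L_2}^2 \le d M_\star^2 = \frac{1}{d}\left(\frac{M_\star}{m_\star}\right)^2 d^2 m_\star^2 \le \frac{1}{d}\left(\frac{M_\star}{m_\star}\right)^2 \fisher_\star^2,
\]
which is the claim.

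The only delicate point is justifying the integration by parts in Step 2. For this we use that an SLC density satisfies $p_\star(x) \le C e^{-m_\star \lVert x - x_0\rVert^2/2}$ and $\lVert s_\star(x)\rVert \le M_\star \lVert x - x_0\rVert$, where $x_0$ is the mode. These give $\lvert p_\star s_\star \rvert \to 0$ fast enough for the boundary flux to vanish. Everything else is elementary eigenvalue bookkeeping.
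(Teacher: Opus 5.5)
Your proof is correct and follows the paper's argument: the pointwise eigenvalue bound gives $\lVert \nabla s_\star \rVert_{L_2}^2 \le d M_\star^2$, integration by parts gives $\fisher_\star = -\mathbb{E}\,\tr \nabla s_\star \ge d\, m_\star$, and combining the two yields the claim. You also justify the vanishing boundary flux explicitly, using the Gaussian tail of an SLC density and the growth bound $\lVert s_\star(x) \rVert \le M_\star \lVert x - x_0 \rVert$. That is more careful than the paper, which simply recalls the integration-by-parts identity.
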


Recall the integration by parts formula $\mathbb{E} \lVert s_\star \rVert^2 = - \mathbb{E} \mathrm{Tr} \nabla  s_\star$
By the log-concavity assumption, it holds that $-\mathrm{Tr} \mathbb{E} \nabla s_\star  \geq m_\star d$.
Moreover, the upper bound implies $\lVert \nabla s_\star \rVert^2_{L_2} = \mathbb{E} \mathrm{Tr} (\nabla s_\star)^2 \leq M_\star^2 d$.
Combining, we obtain
\begin{align*}
    \frac{\lVert \nabla s_\star \rVert^2_{L_2}}{\fisher_{\star}^2}
    =
    \frac{\mathbb{E} \mathrm{Tr} (\nabla s_\star)^2}{(\mathrm{Tr} \mathbb{E} \nabla s_\star)^2}
    \leq \frac{M_\star^2 d}{(m_\star d)^2}
    = \frac{1}{d} \left(\frac{M_\star}{m_\star} \right)^2
    .
\end{align*}

\subsection{Pr\'{e}kopa--Leindler inequality}\label{proof:pl_inequality}
The constants $(m_t, M_t)$ of the SLC condition in \Cref{ass:logconcave} are evaluated along the SDE \eqref{eq:dXt}.
For the following, denote by $p_Z$ the standard Gaussian density (of the random variable $Z$).
\begin{lemma}\label{lem:pl_inequality}
    Suppose the density $p_\star$ satisfies \eqref{eq:slc}.
    For any positive constants $(\alpha_t, \sigma_t)$, it holds that the density $p_t$ of $X_t = \alpha_t X_\star + \sigma_t Z$ satisfies
    \begin{equation}\label{eq:pl_inequality_quantitative}
        \mathrm{PS}\left(\frac{m(p_\star)}{\alpha_t^2}, \frac{m(p_Z)}{\sigma_t^2} \right) I  \preceq - \nabla^2 \log p_t \preceq
        \mathrm{PS}\left(\frac{M(p_\star)}{\alpha_t^2}, \frac{M(p_Z)}{\sigma_t^2} \right) I .
    \end{equation}
\end{lemma}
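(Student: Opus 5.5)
The plan is to split $X_t=\alpha_t X_\star+\sigma_t Z$ into a scaling step and a convolution step, and treat each separately.

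\textbf{Step 1 (scaling).} The density of $\alpha_t X_\star$ is $q(y)=\alpha_t^{-d}p_\star(y/\alpha_t)$. Its Hessian is
\begin{equation*}
-\nabla^2\log q(y)=-\alpha_t^{-2}\nabla^2\log p_\star(y/\alpha_t),
\end{equation*}
so by \eqref{eq:slc} it lies between $(m_\star/\alpha_t^2) I$ and $(M_\star/\alpha_t^2) I$. Likewise $\sigma_t Z$ has Hessian exactly $\sigma_t^{-2}I$, and $m(p_Z)=M(p_Z)=1$.

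\textbf{Step 2 (convolution, via Tweedie/Hessian identities).} I will use the representation already derived in the proof of \Cref{lem:generalized_stam}. Write $X_t=U+V$ with $U=\alpha_t X_\star$ and $V=\sigma_t Z$. The score satisfies $\nabla\log p_t(x)=\mathbb{E}[\nabla\log q(U)\mid X_t=x]$, and symmetrically with $V$. Differentiating once more gives the standard second-order identity
\begin{equation*}
-\nabla^2\log p_t(x)=\mathbb{E}[-\nabla^2\log q(U)\mid X_t=x]-\mathrm{Cov}(\nabla\log q(U)\mid X_t=x).
\end{equation*}
The same identity holds with $V$ in place of $U$.

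\textbf{Step 3 (upper bound).} I split the upper bound into two estimates and then combine them.
\begin{itemize}
\item Dropping the covariance term in the identity for $U$ gives $-\nabla^2\log p_t\preceq (M_\star/\alpha_t^2) I$.
\item Dropping it in the identity for $V$ gives $-\nabla^2\log p_t\preceq \sigma_t^{-2}I$.
\end{itemize}
These two bounds alone only give the minimum, not the parallel sum. To obtain the parallel sum, I instead argue through the conditional law of $U$ given $X_t=x$. Its density is proportional to
\begin{equation*}
q(u)\exp\!\big(-\|x-u\|^2/(2\sigma_t^2)\big).
\end{equation*}
A cleaner route is the identity
\begin{equation*}
-\nabla^2\log p_t(x)=\sigma_t^{-2}I-\sigma_t^{-4}\mathrm{Cov}(U\mid X_t=x).
\end{equation*}
The conditional law of $U$ is $(a+\sigma_t^{-2})$-strongly log-concave with $a=m_\star/\alpha_t^2$. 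Brascamp--Lieb (or the Poincaré inequality for SLC measures) then gives $\mathrm{Cov}(U\mid x)\preceq (a+\sigma_t^{-2})^{-1}I$. This yields
\begin{equation*}
-\nabla^2\log p_t\succeq \big(\sigma_t^{-2}-\sigma_t^{-4}(a+\sigma_t^{-2})^{-1}\big)I=\mathrm{PS}(a,\sigma_t^{-2})\,I,
\end{equation*}
which is in fact the \emph{lower} bound.

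\textbf{Step 4 (lower bound from the conditional precision upper bound).} For the upper bound I need a matching lower bound on the covariance. The conditional potential has Hessian at most $(A+\sigma_t^{-2})I$ with $A=M_\star/\alpha_t^2$. A covariance lower bound $\mathrm{Cov}(U\mid x)\succeq (A+\sigma_t^{-2})^{-1}I$ holds for measures whose potential is semi-convex from above. I would justify this via the Cramér--Rao bound \eqref{eq:crlb} in matrix form, $\mathrm{Cov}\succeq(\mathbb{E}[-\nabla^2\text{potential}])^{-1}$. Plugging this into the identity gives
\begin{equation*}
-\nabla^2\log p_t\preceq \mathrm{PS}(A,\sigma_t^{-2})\,I.
\end{equation*}

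\textbf{Main obstacle.} The delicate step is justifying the Hessian--covariance identity, i.e.\ differentiating under the integral, together with the two covariance comparisons. The lower bound rests on Brascamp--Lieb and the upper bound on the matrix Cramér--Rao bound. I would check both directions carefully. As a consistency check, in the Gaussian case both bounds are tight and reduce to $(\alpha_t^2/m_\star+\sigma_t^2)^{-1}$, the parallel sum in \eqref{eq:pl_inequality_quantitative}.
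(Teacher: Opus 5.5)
Your argument is correct, and it takes a genuinely different route from the paper.

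\textbf{The paper's route.} The paper works on the dual side. It uses additivity of cumulant generating functions, $\Lambda_{X_t}(\theta)=\Lambda_{X_\star}(\alpha_t\theta)+\tfrac{\sigma_t^2}{2}\lVert\theta\rVert^2$. It combines this with a claimed equivalence: ``$m_\star I\preceq\nabla^2V_\star\preceq M_\star I$ if and only if $M_\star^{-1}I\preceq\nabla^2\Lambda_{X_\star}\preceq m_\star^{-1}I$''. It then adds $\sigma_t^2 I$ to the covariance bounds and dualizes back. This makes the parallel sum transparent: covariances add under convolution, so precisions combine harmonically.

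However, only one direction of that equivalence holds in general. The forward direction is exactly Brascamp--Lieb plus Cram\'er--Rao applied to the tilts $e^{\langle\theta,x\rangle-V_\star(x)}$. The converse, which the paper needs in its final ``apply duality once more'' step, can fail. For example, in $d=1$ the potential $V(x)=x^2/2+\epsilon\cos(kx)$ has every tilted variance in $[e^{-2\epsilon},e^{2\epsilon}]$, yet $V''$ takes negative values once $\epsilon k^2>1$.

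\textbf{Your route.} You use the same two inequalities, but on the posterior of $U=\alpha_tX_\star$ given $X_t=x$. Its potential $-\log q(u)+\lVert x-u\rVert^2/(2\sigma_t^2)$ inherits the pointwise Hessian bounds $a+\sigma_t^{-2}$ and $A+\sigma_t^{-2}$. You then read off $-\nabla^2\log p_t$ from the second-order Tweedie identity $\sigma_t^{-2}I-\sigma_t^{-4}\mathrm{Cov}(U\mid X_t=x)$, which the paper itself uses in the GMM case of \Cref{lem:score_fisher_equivalence}. Because you never pass from a cumulant generating function back to a potential, your version is the rigorous one. It also shows the two halves decouple: the lower bound uses only $m_\star$, and the upper bound uses only $M_\star$.

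\textbf{Fixes to the write-up.}
\begin{itemize}
\item Step~3 is titled ``upper bound'' but proves the lower bound, and Step~4 does the reverse. The opening bullets of Step~3, which only give a minimum, can be dropped.
\item In Step~4, with $W$ the conditional potential, the matrix Cram\'er--Rao bound should read $\mathrm{Cov}(U\mid X_t=x)\succeq(\mathbb{E}[\nabla W\nabla W^\dagger])^{-1}=(\mathbb{E}[\nabla^2 W])^{-1}$, with no minus sign.
\item The relevant hypothesis there is the Hessian upper bound $\nabla^2W\preceq(A+\sigma_t^{-2})I$, not ``semi-convexity from above''.
\item You need the matrix form of Cram\'er--Rao, not the trace form \eqref{eq:crlb}. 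The matrix form is stated in the proof of \Cref{prop:stepsize}.
\end{itemize}
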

\begin{proof}

We recall some properties of the cumulant generating function and duality theory from \citep{bgm_concentration}.
Let $\Lambda_X (\theta)$ denote the cumulant generating function $\Lambda_X (\theta) = \log \mathbb{E} e^{\langle \theta, X \rangle}$ so that $X_t = \alpha_t X_\star + \sigma_t Z$ has the CGF
\begin{equation}\label{eq:cgf_convolution}
    \Lambda_{X_t} (\theta)
    = \Lambda_{\alpha_t X_\star} (\theta) + \Lambda_{\sigma_t Z} (\theta)
    = \Lambda_{X_\star} (\alpha_t \theta) + \frac{\sigma_t^2}{2} \lVert \theta \rVert^2
    .
\end{equation}
For a density $p_X (x) \propto e^{-V(x)}$, recall that $\nabla^2 \Lambda_X (\theta)$ is the covariance matrix associated with the density proportional to $e^{\langle \theta, x \rangle - V(x)}$, which equates to $\nabla^2 V(x)$.
Therefore, $\Lambda_X$ and $V$ are related through the Fenchel-Legendre transform.
By duality, the potential $V_\star$ defined as $p_\star (x) \propto e^{-V_\star (x)}$ is $(m_\star, M_\star)$-SLC if and only if
\begin{align*}
    \frac{1}{M_\star} I \preceq \nabla^2 \Lambda_{X_\star} (\theta) \preceq \frac{1}{m_\star} I  .
\end{align*}
Taking the second derivatives of \eqref{eq:cgf_convolution}, we obtain
\begin{align*}
    \left( \frac{\alpha_t^2}{M_\star} + \sigma_t^2 \right) I
    \preceq \nabla^2 \Lambda_{X_t} (\theta) \preceq \left(\frac{\alpha_t^2}{m_\star} + \sigma_t^2 \right) I
\end{align*}
Applying duality once more, we obtain
\begin{align*}
    \left(\frac{\alpha_t^2}{m_\star} + \sigma_t^2 \right)^{-1} I
    \preceq
    - \nabla^2 \log p_t (x)
    \preceq
    \left(\frac{\alpha_t^2}{M_\star} + \sigma_t^2 \right)^{-1} I
    .
\end{align*}
\end{proof}

\subsection{Equivalence Between Score Function's Jacobian and Fisher Information}\label{app:score_fisher_equivalence}
\begin{lemma}\label{lem:score_fisher_equivalence}
    Suppose \Cref{ass:fisher_information,ass:surrogate} hold.  Then
    \begin{equation}\label{eq:st_Fisher_equivalence_lemma}
    \frac{d}{\kappa^2} \leq  \frac{\fisher_t^2}{\lVert\nabla s_t\rVert^2_{L_2}} \leq d, \quad \forall t \in [0, T],
    \end{equation}
    where the constant $\kappa$ is defined as follows:
    \begin{itemize}[leftmargin=1.6em]
        \item When $p_\star$ is SLC under \Cref{ass:logconcave}: $\kappa = \frac{M_\star}{m_\star}$;
        \item When $p_\star$ is a GMM under \Cref{ass:gmm}:
        \begin{equation}
        \kappa = \left(1+ \frac{\Delta_\mu^2}{4 \nu^2} \right)^{1/2}
        \left(1 + \frac{1}{\nu^2} \cdot \frac{1}{d} \sum_{i=1}^L \pi_i \lVert \mu_i - \bar{\mu}\rVert^2 \right),
        \end{equation}
        where $\bar{\mu} = \sum_{i=1}^L \pi_i \mu_i$ and $\Delta_\mu = \max_{1 \leq i, \ell \leq L} \lVert \mu_i - \mu_\ell \rVert$.

        In particular, if $\max_{1 \leq i, \ell \leq L}\lVert \mu_i - \bar{\mu} \rVert \leq R_\mu$ for some $R_\mu > 0$, then
        \begin{align}
            \kappa \leq \left(1+ \frac{R_\mu^2}{\nu^2} \right) \left(1 + \frac{1}{d} \frac{R_\mu^2}{\nu^2} \right) .\label{eq:kappa-upper-gmm}
        \end{align}
    \end{itemize}
\end{lemma}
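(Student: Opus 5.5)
The upper bound $\fisher_t^2/\lVert \nabla s_t\rVert_{L_2}^2 \le d$ is exactly \Cref{lem:st_Fisher_lb}, which holds for every $t$ and every $p_\star$ satisfying \Cref{ass:fisher_information}. So the whole task is the lower bound $\lVert\nabla s_t\rVert_{L_2}^2 \le \kappa^2 \fisher_t^2/d$.

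The strategy in both cases is the same. We write $\fisher_t = -\mathbb{E}\,\tr \nabla s_t$ by integration by parts, as in \Cref{lem:st_Fisher_lb}. We then bound $\lVert\nabla s_t\rVert_{L_2}^2 = \mathbb{E}\,\tr(\nabla s_t)^2$ from above and $\fisher_t$ from below in terms of constants of $p_t$ whose ratio does not depend on $t$.

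\textbf{SLC case.} By \Cref{lem:pl_inequality}, $p_t$ is $(m_t,M_t)$-SLC with
\begin{align*}
m_t = \left(\frac{\alpha_t^2}{m_\star}+\sigma_t^2\right)^{-1}, \qquad M_t = \left(\frac{\alpha_t^2}{M_\star}+\sigma_t^2\right)^{-1}.
\end{align*}
Applying the argument of \Cref{lem:st_Fisher_ub} to $p_t$ gives $\lVert\nabla s_t\rVert^2_{L_2}\le (M_t/m_t)^2\fisher_t^2/d$. It remains to note that
\begin{align*}
\frac{M_t}{m_t} = \frac{\alpha_t^2/m_\star+\sigma_t^2}{\alpha_t^2/M_\star+\sigma_t^2} \le \frac{M_\star}{m_\star}.
\end{align*}
This holds because a ratio of the form $(a/m+s)/(a/M+s)$ is maximized at $s=0$. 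Hence $\kappa = M_\star/m_\star$ works uniformly in $t$.

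\textbf{GMM case.} $X_t$ is again an isotropic GMM, with means $\alpha_t\mu_i$ and variance $\nu_t^2=\alpha_t^2\nu^2+\sigma_t^2$. The first step is the standard formula for the Hessian,
\begin{align*}
-\nabla s_t(x) = \nu_t^{-2} I - \nu_t^{-4}\,\mathrm{Cov}(\alpha_t\mu_I\mid X_t=x),
\end{align*}
where $I$ is the latent component label. This gives
\begin{align*}
\fisher_t = \frac{d}{\nu_t^2} - \frac{1}{\nu_t^4}\,\mathbb{E}\,\tr\mathrm{Cov}(\alpha_t\mu_I\mid X_t).
\end{align*}
For the lower bound on $\fisher_t$ there are two options. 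One is to use the CRLB \eqref{eq:crlb}, $\fisher_t\ge d^2/\lVert X_t - \mathbb{E}X_t\rVert_{L_2}^2$, with
\begin{align*}
\lVert X_t-\mathbb{E}X_t\rVert_{L_2}^2 = d\nu_t^2 + \alpha_t^2\sum_i\pi_i\lVert\mu_i-\bar\mu\rVert^2.
\end{align*}
This yields
\begin{align*}
\fisher_t \ge \frac{d}{\nu_t^2}\left(1+\frac{\alpha_t^2}{\nu_t^2}\cdot\frac1d\sum_i\pi_i\lVert\mu_i-\bar\mu\rVert^2\right)^{-1},
\end{align*}
and since $\alpha_t^2/\nu_t^2\le 1/\nu^2$ this produces the second factor of $\kappa$. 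The other option is the variance form of Stam's inequality applied to the centered mixture.

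For the upper bound on $\lVert\nabla s_t\rVert_{L_2}$, we use the triangle inequality in $L_2$:
\begin{align*}
\lVert\nabla s_t\rVert_{L_2} \le \frac{\sqrt d}{\nu_t^2} + \frac{1}{\nu_t^4}\,\lVert \mathrm{Cov}(\alpha_t\mu_I\mid X_t)\rVert_{L_2}.
\end{align*}
The conditional covariance is supported on $\mathrm{span}\{\mu_i-\mu_\ell\}$ and satisfies the operator bound $\lVert\mathrm{Cov}\rVert_{op}\le \alpha_t^2\Delta_\mu^2/4$, by Popoviciu's inequality on each direction. Its trace is controlled by the same quantity. Combining the two gives a factor $(1+\alpha_t^2\Delta_\mu^2/(4\nu_t^2))$, or its square root after a more careful Frobenius estimate. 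Again $\alpha_t^2/\nu_t^2\le1/\nu^2$ removes the $t$-dependence.

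\textbf{Main obstacle.} The main obstacle is matching the exact form $\kappa = (1+\Delta_\mu^2/4\nu^2)^{1/2}(\cdots)$, specifically the square root on the first factor. A crude Frobenius bound loses a $\sqrt d$, because $\lVert\mathrm{Cov}\rVert_F$ is compared against $\sqrt d/\nu_t^2$. The first thing I would try is to bound $\mathbb{E}\,\tr(\nabla s_t)^2$ directly by expanding the square, which gives
\begin{align*}
\frac{d}{\nu_t^4} - \frac{2}{\nu_t^6}\,\mathbb{E}\,\tr C + \frac{1}{\nu_t^8}\,\mathbb{E}\,\tr C^2,
\end{align*}
where $C$ denotes the conditional covariance. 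Then use $\tr C^2\le\lVert C\rVert_{op}\tr C \le \frac{\alpha_t^2\Delta_\mu^2}{4}\tr C$. This makes the cross term absorb the quadratic term up to the factor $(1+\Delta_\mu^2/4\nu^2)$. Finally, \eqref{eq:kappa-upper-gmm} follows from $\Delta_\mu\le 2R_\mu$ and $\lVert\mu_i-\bar\mu\rVert\le R_\mu$.
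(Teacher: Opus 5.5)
Your final route is essentially the paper's. For SLC you combine \Cref{lem:pl_inequality} with the trace-versus-Frobenius argument of \Cref{lem:st_Fisher_ub} and the bound $M_t/m_t\le M_\star/m_\star$. For the GMM you expand $\mathbb{E}\tr(\nabla s_t)^2$ in terms of the conditional covariance $C$ and use $\tr C^2\le\lVert C\rVert_{\mathrm{op}}\tr C\le\frac{\alpha_t^2\Delta_\mu^2}{4}\tr C$. You also need $\mathbb{E}\tr C\le d\nu_t^2$, which your absorption step uses implicitly and the paper reads off from $\fisher_t\ge 0$; you then lower bound $\fisher_t$ by the CRLB.

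The square root you flag as the main obstacle comes out automatically, because $\fisher_t$ enters squared. These bounds give $d\lVert\nabla s_t\rVert_{L_2}^2/\fisher_t^2\le\bigl(1+\frac{\alpha_t^2\Delta_\mu^2}{4\nu_t^2}\bigr)\bigl(1+\frac{\alpha_t^2}{d\,\nu_t^2}\sum_i\pi_i\lVert\mu_i-\bar\mu\rVert^2\bigr)^2$, and its square root is exactly the stated $\kappa$ once you use $\alpha_t^2/\nu_t^2\le 1/\nu^2$.
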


\begin{proof}
The upper bound $\frac{\fisher_t^2}{\lVert\nabla s_t\rVert^2_{L_2}} \leq d$ directly follows from \Cref{lem:st_Fisher_lb}, and it does not require \Cref{ass:surrogate}.

In the remainder of the proof, we focus on proving the lower bound $\frac{\fisher_t^2}{\lVert\nabla s_t\rVert^2_{L_2}} \ge \frac{d}{\kappa^2}$, or equivalently, $\frac{\lVert\nabla s_t\rVert^2_{L_2}}{\fisher_t^2} \le \frac{\kappa^2}{d}$.

\textbf{Log-Concave Distributions.}
Because $p_t$ remains log-concave for all $t$, its Hessian can be bounded in the positive-definite (L\"{o}wner) order $\prec$: $m_t I \preceq -\nabla^2 \log p_t \preceq M_t I$.
We define $\kappa_t = M_t / m_t$ as the condition number of this log-concave density.
Adding Gaussian noise decreases the condition number and hence, the time-varying condition number is bounded by that associated with the target density: $\kappa = \sup_{t} M_t / m_t = M_\star / m_\star$.

The above remark is described formally.
By \Cref{lem:st_Fisher_ub}, there exists some $(m_\star, M_\star)$ such that
\begin{equation*}
    \lVert \nabla s_\star \rVert^2_{L_2} \leq \frac{1}{d} \left(\frac{M_\star}{m_\star}\right)^2 \fisher_\star^2 .
\end{equation*}
By \Cref{lem:pl_inequality}, for any $t \in [0, T]$, the condition number of the smoothed density $p_t$ can be bounded pointwise.
Using $m(\gamma^d) = M(\gamma^d) = 1$ and $m_t, M_t$ defined the lower and upper bounds of \eqref{eq:pl_inequality_quantitative}, the condition number $\kappa_t = M_t/m_t$ evaluates to
\begin{align*}
    \kappa_t \coloneqq \frac{M_t}{m_t}
    &= \left(\frac{M_\star}{m_\star} \right) \frac{1 + m_\star \sigma_t^2 / \alpha_t^2}{1 + M_\star \sigma_t^2 / \alpha_t^2} .
\end{align*}
The condition number $\kappa_t$ is bounded uniformly over time (over all values of SNR $\alpha_t^2/\sigma_t^2$), and consequently there exists a global constant $\kappa = M_\star / m_\star$ such that:
\begin{equation*}
        \lVert \nabla s_t \rVert^2_{L_2} \leq \kappa^2 \frac{\fisher_t^2}{d}  , \quad \forall t \in [0, T] .
\end{equation*}

\textbf{Gaussian Mixture Models.}
Recall that we assume that the data distribution is a mixture of $L$ Gaussian components.
Specifically, $p_\star(x)=\sum_{i=1}^L \pi_i \varphi(x;\mu_i, \nu^2 I)$,
where $\varphi(x;\mu_i, \nu^2 I)$ is the density of $\mathcal{N}(\mu_i, \nu^2 I)$.

From the solution of the forward process $X_t \overset{d}{=} \alpha_t X_\star + \sigma_t Z$, it holds that the law of $X_t$ is a GMM with density
\begin{equation}\label{eq:gmm_t_density}
    p_t (x) = \sum_{i=1}^L \pi_i \varphi(x; \alpha_t \mu_i, v_t^2 I),
    \quad
    v_t^2 = \alpha_t^2 \nu^2 + \sigma_t^2 .
\end{equation}
Define the random variable $M$ which takes the value $\mu_i$ with probability $\pi_i$.
Let $\tilde{Z}$ be a standard Gaussian vector independent of $Z$ and $M$, so that
\begin{equation}
    X_t \overset{d}{=} \alpha_t M + v_t \tilde{Z} .
\end{equation}
Observe that $(\alpha_t M \mid X_t = x)$ takes values in $\{\alpha_t \mu_1, \cdots, \alpha_t \mu_L\}$, where $(\alpha_t M \mid X_t = x)$ is $\alpha_t \mu_i$ with probability
\begin{equation}\label{eq:gmm_posterior_prob}
    \frac{\pi_i \varphi(x; \alpha_t \mu_i, v_t^2 I)}{\sum_{j=1}^L \pi_j \varphi(x; \alpha_t \mu_j, v_t^2 I)}
    .
\end{equation}
The score function for GMMs can be evaluated using Tweedie's formula as in \citep[Section 3.2]{gatmiry2024learning}, where
\begin{align*}
    \nabla \log p_{X_t} (x) = \frac{1}{v_t^2} \left(\mathbb{E}[\alpha_t M \mid X_t = x] - x\right) .
\end{align*}
The Jacobian of the conditional expectation with respect to $x$ is then
\begin{align*}
    \nabla \mathbb{E}[\alpha_t M \mid X_t = x]
    =
    \frac{1}{v_t^2} \mathrm{Cov}(\alpha_t M \mid X_t = x)
    =
    \frac{\alpha_t^2}{v_t^2} \mathrm{Cov}(M \mid X_t = x)
    .
\end{align*}
Therefore, the second order Tweedie's formula evaluates to
\begin{align*}
    - \nabla^2 \log p_t (x) = \frac{1}{v_t^2} I - \frac{\alpha_t^2}{v_t^2} \mathrm{Cov}(M \mid X_t = x)
\end{align*}
and we use integration by parts as in the proof of \Cref{lem:st_Fisher_lb} to obtain
\begin{align*}
    \fisher_t = - \mathbb{E} \tr  \nabla^2 \log p_t (X_t)
    = \frac{d}{v_t^2} - \frac{\alpha_t^2}{v_t^4} \mathbb{E} \tr \mathrm{Cov}(M \mid X_t) .
\end{align*}
From $\fisher_t \geq 0$, we have
\begin{equation}\label{eq:gmm_fisher_1}
    \frac{\alpha_t^2}{v_t^2} \mathbb{E} \tr \mathrm{Cov}(M \mid X_t) \leq d .
\end{equation}
Using the integration by parts $\mathbb{E} \lVert s_t \rVert^2 = - \mathbb{E} \tr \nabla s_t$, we obtain
\begin{align*}
    \lVert \nabla s_t \rVert^2_{L_2(p_t)}
    &=
    \mathbb{E} \tr \left( - \nabla^2 \log p_t \right)^2
    \\ &
    = \frac{d}{v_t^4} - \frac{2 \alpha_t^2}{v_t^6} \mathbb{E} \tr \mathrm{Cov}(M \mid X_t)
    + \frac{\alpha_t^4}{v_t^8} \mathbb{E} \tr \left(\mathrm{Cov}(M \mid X_t)\right)^2
    \\ &
    \leq
    \frac{d}{v_t^4} + \frac{\alpha_t^4}{v_t^8} \mathbb{E} \tr \left(\mathrm{Cov}(M \mid X_t)\right)^2
    .
    \numberthis \label{eq:gmm_score_ub_1}
\end{align*}

By the semi-positive definiteness of $\mathrm{Cov}(M \mid X_t = x)$, there exists some $\lambda_{\max}$ such that $\mathrm{Cov}(M \mid X_t = x) \preceq \lambda_{\max} I$.
Multiplying both sides by $\mathrm{Cov}(M \mid X_t = x)$ and taking the trace, we obtain
\begin{align*}
    \tr \left(\mathrm{Cov}(M \mid X_t = x)\right)^2 \leq \lambda_{\max} \tr \mathrm{Cov}(M \mid X_t = x) .
\end{align*}
which implies
\begin{equation}\label{eq:gmm_expected_trace_bound_1}
    \mathbb{E}\tr \left(\mathrm{Cov}(M \mid X_t = x)\right)^2
    \leq
    \lambda_{\max} d \frac{v_t^2}{\alpha_t^2}
    .
\end{equation}

We next obtain an upper bound on $\lambda_{\max}$.
Define $\bar{\mu}_x = \mathbb{E}[M \mid X_t = x]$.
For any unit vector $u \in \mathbb{R}^d$,
\begin{align*}
    u^\dagger \mathrm{Cov}(M \mid X_t = x) u
    &=
    \sum_{i=1}^L \mathbb{P}\left(M = \mu_i \mid X_t = x\right) u^\dagger (\mu_i - \bar{\mu}_x) (\mu_i  - \bar{\mu}_x)^\dagger u
    \\ &
    = \mathrm{Var}(u^\dagger M \mid X_t = x) .
\end{align*}
The random variable $u^\dagger M$ takes values in the set $\{u^\dagger \mu_1, u^\dagger \mu_2, \cdots, u^\dagger \mu_L\}$, and
\begin{align*}
    \lvert u^\dagger (\mu_i - \mu_\ell) \rvert \leq \lVert u \rVert \lVert \mu_i - \mu_\ell \rVert \leq \Delta_\mu .
\end{align*}
Using the definition $\lambda_{\max} = \sup_{\lVert u \rVert_2 = 1} u^\dagger \mathrm{Cov}(M \mid X_t = x) u$, we have
\begin{align*}
    \lambda_{\max} = \max_{u: \lVert u \rVert_2 = 1} u^\dagger \mathrm{Cov}(M \mid X_t = x) u
    \leq \frac{1}{4} \Delta_\mu^2 .
\end{align*}
Substituting for \eqref{eq:gmm_expected_trace_bound_1}, we obtain
\begin{align*}
    \mathbb{E}\tr \left(\mathrm{Cov}(M \mid X_t = x)\right)^2
    \leq
    \frac{d}{4} \frac{v_t^2}{\alpha_t^2} \Delta_\mu^2
    .
\end{align*}
Substituting for \eqref{eq:gmm_score_ub_1}, we obtain
\begin{equation}\label{eq:gmm_score_ub_2}
    \lVert \nabla s_t \rVert_{L_2}^2
    \leq
    \frac{d}{v_t^4}
    +
    \frac{\alpha_t^4}{v_t^8} \frac{d}{4} \frac{v_t^2}{\alpha_t^2} \Delta_\mu^2
    =
    \frac{d}{v_t^4} \left(1 + \frac{1}{4} \frac{\alpha_t^2}{v_t^2} \Delta_\mu^2 \right) .
\end{equation}
A lower bound on $\fisher_t$ is obtained using the CRLB \eqref{eq:crlb}:
\begin{align*}
    \fisher_t \geq \frac{d^2}{\tr \mathrm{Cov}(X_t)}
    &= \frac{d^2}{\tr \left\{ \alpha_t^2 \mathrm{Cov}(M) + v_t^2 I \right\}}
    .
\end{align*}
Substituting
\begin{align*}
    \mathrm{Cov}(M) = \sum_{i=1}^L \pi_i (\mu_i - \bar{\mu}) (\mu_i - \bar{\mu})^\dagger ,
\end{align*}
we obtain
\begin{equation}\label{eq:gmm_fisher_lower_bound}
    \fisher_t \geq
    \frac{d^2}{\alpha_t^2 \sum_{i=1}^L \pi_i \lVert \mu_i - \bar{\mu} \rVert^2 + v_t^2 d}
    .
\end{equation}
Combining \eqref{eq:gmm_score_ub_2} and \eqref{eq:gmm_fisher_lower_bound}, we obtain
\begin{align*}
    \frac{d \lVert \nabla s_t \rVert^2_{L_2}}{\fisher_t^2}
    &\leq
    \frac{d^2}{v_t^4} \left(1 + \frac{1}{4} \frac{\alpha_t^2}{v_t^2} \Delta_\mu^2 \right)
    \cdot
    \frac{\alpha_t^2 \sum_{i=1}^L \pi_i \lVert \mu_i - \bar{\mu} \rVert^2 + v_t^2 d}{d^2}
    \\ &
    =
    \left(1 + \frac{1}{4} \frac{\alpha_t^2}{v_t^2} \Delta_\mu^2 \right)
    \left(1 + \frac{\alpha_t^2}{v_t^2} \frac{1}{d} \sum_{i=1}^L \pi_i \lVert \mu_i - \bar{\mu} \rVert^2\right)
    .
\end{align*}
The bound is further simplified by using $v_t^2 \geq \alpha_t^2 \nu^2$, which yields
\begin{align*}
    \frac{d \lVert \nabla s_t \rVert^2_{L_2}}{\fisher_t^2}
    \leq
    \left(1 + \frac{1}{4 \nu^2} \Delta_\mu^2 \right)
    \left(1 + \frac{1}{\nu^2} \frac{1}{d} \sum_{i=1}^L \pi_i \lVert \mu_i - \bar{\mu} \rVert^2\right)
    .
\end{align*}
The upper bound in \eqref{eq:kappa-upper-gmm} is obtained by observing that $\max_{1 \leq i, \ell \leq L} \lVert \mu_i - \bar{\mu} \rVert \leq R_\mu$ implies $\Delta_\mu^2 \leq 4 R_\mu^2$.

\end{proof}

\subsection{Proof of \Cref{prop:optimal_schedules_practice}: Part 1}\label{proof:logconcave_optimality}
Recall from \eqref{eq:gstar_def} that the optimal schedule $g^*$ is such that $\lambda^* = 2 f(t) - \dot{\fisher}_t / \fisher_t$ is a constant for all $t$ and
\begin{align*}
    g^*(t) = \frac{\lambda^* \fisher_t}{\lVert \nabla s_t \rVert^2_{L_2}} .
\end{align*}
By \eqref{eq:st_Fisher_equivalence}, we obtain the certificate
\begin{equation}\label{eq:g_certificate_1}
    \frac{1}{\kappa^2} \frac{ \lambda^* d}{\fisher_t^*} \leq g^*(t) \leq \frac{\lambda^* d}{\fisher_t^*} .
\end{equation}
Recall the ODE \eqref{eq:fisher_ode}, where using the optimal control $g^*$ we obtain
\begin{align*}
    \dot{\fisher}_t^* = 2 f(t) \fisher_t - g^* (t) \lVert \nabla s_t \rVert_{L_2}^2
    = \left(2 f(t) - \lambda^*  \right) \fisher_t^* ,
\end{align*}
whose solution is
\begin{equation}
    \fisher_t^* = \fisher_{\star} \exp \left( \int_0^t \left(2 f(\tilde{t}) - \lambda^* \right) d\tilde{t} \right) .
\end{equation}
Substituting back for \eqref{eq:g_certificate_1}, we obtain
\begin{align*}
    \frac{1}{\kappa^2} \frac{\lambda^* d}{\fisher_{\star}}
    \exp \left(- \int_0^t \left(2 f(\tilde{t})  - \lambda^* \right) d\tilde{t} \right)
    \leq g(t) \leq
    \frac{\lambda^* d}{\fisher_{\star}} \exp \left(- \int_0^t \left(2 f(\tilde{t}) - \lambda^* \right) d\tilde{t} \right) .
\end{align*}
\subsection{Proof of \Cref{prop:optimal_schedules_practice}: Part 2}\label{proof:admissable_g_robustness}
We derive an error bound that holds for the approximated optimal schedule $\appx$ in \eqref{eq:admissable_g}.
\begin{proposition}\label{prop:admissable_g_robustness}
    Under \Cref{ass:surrogate}, it holds that the choice \eqref{eq:admissable_g} yields the error bound
    \begin{equation}\label{eq:order_optimal_bound}
        \mathrm{KL}(p_\star \| \hat{p}_\star)
        \lesssim
        \frac{\alpha_T^2}{2 \sigma_T^2} \lVert X_\star \rVert^2_{L_2 (p_\star)}
        +
        h d T \kappa^4
        \left(\max\left\{\lambda^*, \frac{g_0 \fisher_{\star}}{d} \right\} \right)^2
        ,
    \end{equation}
    where $\kappa$ depends on $f$ through \eqref{eq:pl_inequality_quantitative}.
\end{proposition}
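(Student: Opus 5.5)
Starting from \Cref{thm:ub}, the initialization term already appears in \eqref{eq:order_optimal_bound}, so everything reduces to bounding the discretization term. Rather than use the final form $hd\int_0^T(2f-\dot{\fisher}_t/\fisher_t)^2dt$, I will go back one step in its proof, to
\[
h\int_0^T \frac{\bigl(2f(t)\fisher_t-\dot{\fisher}_t\bigr)^2}{\lVert\nabla s_t\rVert_{L_2}^2}\,dt .
\]
Here $\fisher_t$ denotes the Fisher information along the forward process driven by $(f,\appx)$. By \Cref{lem:fisher_ode}, the numerator equals $(\appx(t)\lVert\nabla s_t\rVert_{L_2}^2)^2$, so the integrand is $\appx(t)^2\lVert\nabla s_t\rVert_{L_2}^2$. \Cref{lem:score_fisher_equivalence} then gives $\lVert\nabla s_t\rVert_{L_2}^2\le \kappa^2\fisher_t^2/d$, and the integrand is at most $\kappa^2\,\appx(t)^2\fisher_t^2/d$. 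The whole task is therefore to show that $r(t)\coloneqq \appx(t)\fisher_t/d$ stays of order $\kappa^2\max\{\lambda^*,g_0\fisher_\star/d\}$ uniformly in $t$. This yields a discretization term $\lesssim hdT\kappa^{6}\max\{\cdot\}^2$, and after tightening the constants, $hdT\kappa^4\max\{\cdot\}^2$.

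To control $r$, I will differentiate it. From \eqref{eq:admissable_g}, $\dot{\appx}=-(2f-\lambda^*)\appx$. From \Cref{lem:fisher_ode} together with the two-sided bound of \Cref{lem:score_fisher_equivalence}, $\dot{\fisher}_t$ lies between
\[
2f\fisher_t-\appx\kappa^2\fisher_t^2/d
\quad\text{and}\quad
2f\fisher_t-\appx\fisher_t^2/d .
\]
Adding the two logarithmic derivatives makes the $2f$ terms cancel, leaving the scalar differential inequalities
\[
\lambda^* r-\kappa^2 r^2\;\le\;\dot r\;\le\;\lambda^* r-r^2,\qquad r(0)=g_0\fisher_\star/d .
\]
This is a logistic-type comparison. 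A comparison argument for ODEs then traps $r(t)$ between the solutions of the two logistic equations. Each of those solutions lies between its initial value and its equilibrium: the equilibrium is $\lambda^*$ for the upper one and $\lambda^*/\kappa^2$ for the lower one. Hence
\[
\min\{\lambda^*/\kappa^2,\,r(0)\}\le r(t)\le \max\{\lambda^*,\,r(0)\}\quad\text{for all }t .
\]
In particular $r(t)\le\max\{\lambda^*,g_0\fisher_\star/d\}$, and plugging this into the integrand gives the bound $\kappa^2 d\max\{\cdot\}^2$. This is in fact better than the stated $\kappa^4$, so the stated bound follows a fortiori.

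The remaining work is mostly bookkeeping, with one real obstacle. The bookkeeping consists of two checks. First, $(f,\appx)\in\Theta$: clearly $\appx>0$, and $\appx$ has a bounded derivative when $f$ is bounded. Second, the step-size requirement of \Cref{thm:ub} holds via \Cref{prop:stepsize}.

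The main obstacle is that $\kappa$ must be valid along the trajectory generated by $(f,\appx)$ rather than by $g^*$, which is why the statement says "$\kappa$ depends on $f$ through \eqref{eq:pl_inequality_quantitative}". For the SLC case, I will apply \Cref{lem:pl_inequality} to whatever $(\alpha_t,\sigma_t)$ the schedule $(f,\appx)$ induces, and note that the resulting $\kappa_t$ is monotone in the SNR and hence bounded by $M_\star/m_\star$ for every schedule. For the GMM case, the $\kappa$ bound in \Cref{lem:score_fisher_equivalence} used only $v_t^2\ge\alpha_t^2\nu^2$, which holds for any schedule. If the sharper $\kappa^2$ step fails somewhere, for instance because the lower comparison is needed to prevent $\fisher_t$ from blowing up, I will fall back to the cruder bound giving $\kappa^4$. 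Finally, choosing $g_0\asymp\lambda^* d/\fisher_\star$ makes the maximum equal to $\lambda^*$, and the $\tilde{\mathcal{O}}(d/n)$ rate follows exactly as in \Cref{proof:optimal_lambda}.
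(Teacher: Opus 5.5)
Your argument is correct and is essentially the paper's. The upper logistic comparison for $r=\appx\fisher_t/d$ is exactly what the paper obtains by linearizing $\dot{\fisher}_t\le 2f(t)\fisher_t-\appx(t)\fisher_t^2/d$ via $v_t=1/\fisher_t$ and an integrating factor, and both routes give $\appx(t)\fisher_t\le\max\{\lambda^* d,\,g_0\fisher_\star\}$.

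The one real difference is that you apply $\lVert\nabla s_t\rVert_{L_2}^2\le\kappa^2\fisher_t^2/d$ directly to the integrand $\appx(t)^2\lVert\nabla s_t\rVert_{L_2}^2$ from the proof of \Cref{thm:ub}, before \Cref{lem:st_Fisher_lb} is used there. This legitimately gives $\kappa^2$ in place of the paper's $\kappa^4$. As a result, your tentative ``$\kappa^6$, then tighten'' remark, the lower comparison, and the fallback are all unnecessary.
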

\begin{proof}
    By \Cref{lem:fisher_ode}, we have the identity $
        2 f(t) - \frac{\dot{\fisher}_t}{\fisher_t}
        =
        g(t) \frac{\lVert \nabla s_t \rVert^2_{L_2}}{\fisher_t}$.
    For any choice of $g \in \mathcal{G}$, we have under \Cref{ass:surrogate} the upper bound in \eqref{eq:st_Fisher_equivalence}, and the discretization error in \Cref{thm:sampling_error} is bounded by
    \begin{equation}\label{eq:disc_error_slc}
        hd \int_0^T \left[ 2 f(t) - \frac{\dot{\fisher}_t}{\fisher_t} \right]^2 dt
        =
        hd \int_0^T \left[g(t) \frac{\lVert \nabla s_t \rVert^2_{L_2}}{\fisher_t} \right]^2 d t
        \leq
        \frac{h}{d} \kappa^4 \int_0^T \left[g(t) \fisher_t \right]^2 d t
        .
    \end{equation}
    Using $\lVert \nabla s_t \rVert^2_{L_2} \geq \fisher_t^2 / d$ from \Cref{lem:st_Fisher_lb}, we have the bound
    \begin{equation}\label{eq:fisher_ode_ub_slc}
        \dot{\fisher}_t \leq 2 f(t) \fisher_t - \frac{1}{d} g(t) \fisher_t^2 .
    \end{equation}
    Using the change of variables $v_t = 1/\fisher_t$ which satisfies $\dot{v}_t = - \dot{\fisher}_t / \fisher_t^2$ that
    \begin{align*}
        \dot{v}_t \geq - 2f(t) v_t + \frac{1}{d} g(t)
        \Rightarrow
        \frac{d}{dt} \left[v_t \exp \left(\int_0^t 2 f(\tilde{t}) d\tilde{t} \right) \right]
        \geq \frac{g(t)}{d} \exp \left( \int_0^t 2 f(\tilde{t}) d\tilde{t} \right)
        .
    \end{align*}
    When $g(t) = \appx (t) = g_0 e^{-\int_0^T (2 f(\tilde{t}) - \lambda^*) d\tilde{t}}$, we have that the corresponding $v_t^\circ$ satisfies
    \begin{equation}
        \frac{d}{dt} \left[v_t^\circ \exp \left(\int_0^t 2 f(\tilde{t}) d\tilde{t} \right) \right]
        \geq
        \frac{g_0}{d} e^{\lambda^* t}
        .
    \end{equation}
    Solving, we obtain that $\fisher_t^\circ$ induced by $\appx$ satisfies the inequality
    \begin{align*}
        \fisher_t^\circ \leq \frac{\exp \left(2 \int_0^t f(\tilde{t}) d\tilde{t} \right)}{\frac{1}{\fisher_{\star}} + \frac{g_0}{\lambda^* d} \left(e^{\lambda^* t} -1\right)}
    \end{align*}
    When $\lambda^* d / g_0 \fisher_\star - 1 < 0$, the upper bound is decreasing in $t$ and the maximum is $g_0 \fisher_\star$.
    When $\lambda^* d / g_0 \fisher_{\star} -1 \geq 0$, the bound is increasing and the supremum is $\lambda^* d$ (at $t \to \infty$).
    Therefore, we obtain
    \begin{align*}
        \appx (t) \fisher_t^\circ \leq \max \left\{\lambda^* d, g_0 \fisher_\star \right\}
        = d \max\left\{\lambda^*, g_0 \frac{\fisher_\star}{d} \right\}
        .
    \end{align*}
    Substituting for the discretization error \eqref{eq:disc_error_slc}, we obtain
    \begin{align*}
        \frac{h}{d} \kappa^4 \int_0^T \left[\appx(t) \fisher_t^\circ \right]^2 d t
        \leq
        h T d \kappa^4 \left(\max \left\{\lambda^*, g_0 \frac{\fisher_\star}{d} \right\}\right)^2
        .
    \end{align*}
    By setting $g_0 = d / \fisher_\star$, we obtain the bound $h T d (\lambda^*)^2$.
    Recalling that $\lambda^*$ was defined to balance the initialization error and the discretization error to obtain the $\tilde{\mathcal{O}}(d/n)$ rate in \Cref{thm:sampling_error}, we obtain that $\appx$ achieves the same rate.
    This concludes the proof of \Cref{prop:admissable_g_robustness}.
\end{proof}

\subsection{Proof of \Cref{cor:optimal_schedules_practice}}\label{proof:optimal_schedules_practice}
Observe that $g(t) = \appx(t)$ in \eqref{eq:admissable_g} satisfies the ODE
\begin{equation}
    \frac{d}{dt} g (t) = - (2f(t) - \lambda^*) g (t) .
\end{equation}
Substituting the ACS constraint $f(t) = \theta g(t) + \omega$, we obtain
\begin{align*}
    \frac{d}{dt} g (t) = - (2\theta g(t) + 2 \omega - \lambda^*) g (t)
    &\Leftrightarrow
    - \frac{g'(t)}{g(t)^2} - \frac{2 \omega - \lambda^*}{g(t)} = 2 \theta
    \\ &\Leftrightarrow
    \left(\frac{1}{g(t)} e^{- (2 \omega - \lambda^*) t} \right)'
    = 2 \theta e^{- (2 \omega - \lambda^*) t}
    .
\end{align*}
Integrating, we obtain for $2 \omega \neq \lambda^*$ that
\begin{align*}
    g(t) = \frac{g_0 (2 \omega - \lambda^*)}{(2 \theta g_0 + 2 \omega - \lambda^*) e^{(2\omega - \lambda^*) t} - 2 \theta g_0} .
\end{align*}
Substituting for $f(t) = \theta g(t) + \omega$, we obtain
\begin{align*}
    f(t) = \frac{ \theta g_0 (2 \omega - \lambda^*)}{(2 \theta g_0 + 2 \omega - \lambda^*) e^{(2 \omega - \lambda^*) t} - 2 \theta g_0} + \omega .
\end{align*}
When $2 \omega = \lambda^*$, the ODE for $g(t) = \appx (t)$ under the ACS constraint is
\begin{align*}
    \frac{d}{dt} g(t) = - 2 \theta g(t)^2
\end{align*}
whose solution is $g(t) = g_0 / [1 + 2 \theta g_0 t]$.
By $f(t) = \theta g(t) + \omega$, it holds that $f(t) = \theta g_0 / [1 + 2 \theta g_0 t] + \omega$.

\section{Error Bounds for Linear and Constant Schedules}\label{app:vp_linear_error_bound}
The linear schedule \citep{ho2020denoising} is widely used, but is not of the form \eqref{eq:admissable_g} and therefore \Cref{prop:admissable_g_robustness} does not apply.
For constant schedules, the bound in \Cref{prop:admissable_g_robustness} can be refined to obtain a tighter bound.
The corresponding error bounds for VP-linear and VP-constant schedules are established in the following statement by analyzing the discretization error \eqref{eq:disc_error_slc}.
\begin{proposition}\label{thm:linear_error_bound}
    Suppose \Cref{ass:logconcave} holds.
    \begin{enumerate}
        \item
        The VP-linear schedule $(f_{lin}, g_{lin})$ described by
        $f_{lin}(t) = g_{lin}(t)/2$ and $g_{lin}(t) = g_{\min} + (g_{\max} - g_{\min}) t/T$ achieves the error bound
        \begin{equation}
            \mathrm{KL}(p_\star \| \hat{p}_\star)
            \lesssim
            \frac{\alpha_T^2}{2 \sigma_T^2} \lVert X_\star \rVert^2_{L_2}
            +
            hd  \kappa^4 g_{\max} \left[T g_{\max} + 1\right] \max\left\{1, \frac{\fisher_\star}{d} \right\}
            .
        \end{equation}
        \item
        The VP-constant schedule $(f_{const}, g_{const})$ with $f_{const} = g_{const}/2$ achieves the error bound
        \begin{equation}
            \mathrm{KL}(p_\star \| \hat{p}_\star)
            \lesssim
            \frac{\alpha_T^2}{2 \sigma_T^2} \lVert X_\star \rVert^2_{L_2}
            +
            h d \kappa^4 g_{const}  \left[ \frac{\fisher_\star}{d} + \log \left(1 + \frac{\fisher_\star}{d} \left(e^{g_{const} T} - 1 \right) \right) \right]
        \end{equation}

    \end{enumerate}
\end{proposition}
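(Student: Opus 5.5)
The plan is to reuse the discretization bound \eqref{eq:disc_error_slc} from the proof of \Cref{prop:admissable_g_robustness}. Under \Cref{ass:logconcave} it gives, for any $g$,
\[
\text{discretization error} \lesssim \frac{h}{d}\kappa^4 \int_0^T \bigl[g(t)\fisher_t\bigr]^2 dt .
\]
The initialization term is the same as in \Cref{thm:ub}. The whole task is therefore to bound $\int_0^T (g\fisher_t)^2 dt$ for the two VP schedules. Rather than a pointwise bound squared, I will aim for the shape
\[
\int_0^T (g\fisher_t)^2 dt \le \sup_t\bigl(g(t)\fisher_t\bigr)\int_0^T g(t)\fisher_t\, dt .
\]
This matches the stated forms: a factor $g_{\max}$ or $g_{const}$ times a $d$-scaled quantity.

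\textbf{Fisher trajectory.} As in \Cref{prop:admissable_g_robustness}, combine \Cref{lem:fisher_ode} with \Cref{lem:st_Fisher_lb} to get the Riccati inequality $\dot\fisher_t \le 2f\fisher_t - g\fisher_t^2/d$. With $v_t=1/\fisher_t$ and the VP relation $2f=g$, set $G(t)=\int_0^t g$, so that $\alpha_t^2=e^{-G(t)}$. Integrating gives
\[
\fisher_t \le \frac{e^{G(t)}}{1/\fisher_\star + (e^{G(t)}-1)/d}
= \frac{d\,e^{G}}{d/\fisher_\star + e^{G}-1}.
\]
From this, $\fisher_t \le d\max\{1,\fisher_\star/d\}$ for all $t$: the expression is monotone in $e^G$, with endpoint values $\fisher_\star$ and $d$. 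This is essentially Blachman--Stam.

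\textbf{Integral of $g\fisher_t$.} The key observation is that $g\,e^G = \frac{d}{dt}e^G$. Hence
\[
\int_0^T g\fisher_t\,dt \le d\int \frac{d(e^G)}{d/\fisher_\star - 1 + e^G} = d\log\frac{d/\fisher_\star - 1 + e^{G(T)}}{d/\fisher_\star}
= d\log\Bigl(1+\frac{\fisher_\star}{d}(e^{G(T)}-1)\Bigr).
\]
For the constant schedule, $G(T)=g_{const}T$. Splitting $(g\fisher)^2 = g\cdot g\fisher\cdot\fisher$ and handling the early-time regime, where $\fisher_t\approx\fisher_\star$, by a separate term gives $g_{const}\, d\,[\fisher_\star/d + \log(\cdots)]$, which is exactly the claimed bracket. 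For the linear schedule, bound $g\le g_{\max}$ pointwise and use $\sup\fisher_t\le d\max\{1,\fisher_\star/d\}$. This yields $g_{\max}\cdot d\max\{1,\fisher_\star/d\}\cdot\bigl(\int g\fisher/d\text{-type term}\bigr)$. Then bound $\log(1+\frac{\fisher_\star}{d}(e^{G}-1))\le \log\max\{1,\fisher_\star/d\} + G(T)$ with $G(T)\le Tg_{\max}$, giving the factor $[Tg_{\max}+1]\max\{1,\fisher_\star/d\}$ after absorbing constants.

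\textbf{Obstacle.} The main obstacle is bookkeeping the $\max\{1,\fisher_\star/d\}$ and $\kappa$ dependence: whether \eqref{eq:disc_error_slc} with constant $\kappa$ holds uniformly for these $f$ via \Cref{lem:score_fisher_equivalence}, and splitting $\int(g\fisher)^2$ so that only one power of $\fisher_\star/d$ appears. I would first try the sup-times-integral split above, and fall back on a direct case split between $\fisher_\star\ge d$ and $\fisher_\star<d$ if the constants do not close.
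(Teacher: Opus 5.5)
Most of your ingredients are sound. The reduction to $\frac{h}{d}\kappa^4\int_0^T(g\fisher_t)^2\,dt$ is fine, and the $\kappa$ bookkeeping is not the obstacle: \Cref{lem:score_fisher_equivalence} gives $\kappa=M_\star/m_\star$ uniformly in $t$ for any schedule. The comparison bound $\fisher_t\le\bar{\fisher}_t:=d\,e^{G(t)}/(d/\fisher_\star-1+e^{G(t)})$ is correct. So is the substitution $u=e^{G}$, which gives $\int_0^T g\fisher_t\,dt\le d\log\bigl(1+\frac{\fisher_\star}{d}(e^{G(T)}-1)\bigr)$; for constant $g$ this is exactly the paper's bound on $\int_0^T\fisher_t\,dt$. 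The gap is in how you combine them. The sup-times-integral split multiplies the peak of $g\fisher_t$, of order $g_{\max}\max\{d,\fisher_\star\}$, by the full integral, which carries the logarithm. It therefore yields $hd\kappa^4g_{\max}\max\{1,\fisher_\star/d\}\log\bigl(1+\frac{\fisher_\star}{d}(e^{G(T)}-1)\bigr)$, a \emph{product} where the statement has a \emph{sum}. For the linear schedule this is $hd\kappa^4g_{\max}\max\{1,\fisher_\star/d\}\bigl[\log\max\{1,\fisher_\star/d\}+Tg_{\max}\bigr]$. The extra $\log\max\{1,\fisher_\star/d\}$ is problem-dependent, so it cannot be ``absorbed into constants''. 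It is also unbounded exactly in the regime $\fisher_\star\gg d$ that the proposition is about. For the constant schedule you are off by a factor up to order $\log(\fisher_\star/d)+g_{const}T$, and ``handling the early-time regime by a separate term'' is not yet an argument.

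The missing idea is to use the Riccati inequality a second time, to trade $\fisher_t^2$ for $\fisher_t$ plus a boundary term. Multiply $\dot{\fisher}_t\le g\fisher_t-g\fisher_t^2/d$ by $g$, integrate over $[0,T]$, and integrate $\int_0^T g\dot{\fisher}_t\,dt$ by parts. This gives the paper's inequality \eqref{eq:master_linear_constant}, $\int_0^T(g\fisher_t)^2dt\le d\,g(0)\fisher_\star+d\int_0^T\fisher_t\,(g^2+g')\,dt$. For constant $g$ the last term is $d\,g\int_0^T g\fisher_t\,dt$, which your logarithmic integral controls. For linear $g$ one uses $g(0)=g_{\min}$, $g\le g_{\max}$, $g'=(g_{\max}-g_{\min})/T$ and $\fisher_t\le\max\{d,\fisher_\star\}$.

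Alternatively, staying with your substitution, integrate the \emph{square} of the comparison bound exactly. With $c=d/\fisher_\star-1$ (so $c+u>0$ for $u\ge 1$),
\[
\int_0^T(g\fisher_t)^2dt\le d^2\int_1^{e^{G(T)}}g\,\frac{u\,du}{(c+u)^2}\le d^2g_{\max}\Bigl[\log(c+u)+\frac{c}{c+u}\Bigr]_1^{e^{G(T)}}\le d^2g_{\max}\Bigl[\frac{\fisher_\star}{d}+\log\Bigl(1+\frac{\fisher_\star}{d}\bigl(e^{G(T)}-1\bigr)\Bigr)\Bigr].
\]
This is the constant case verbatim. For the linear case, bound the logarithm by $\log\max\{1,\fisher_\star/d\}+Tg_{\max}$; the result implies the stated bound up to a factor $2$.
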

\begin{proof}
    See \Cref{proof:linear_error_bound}.
\end{proof}

As discussed earlier, the best bounds \citep{chen2022sampling,conforti2024klconvergenceguaranteesscore} available for VP-constant schedules grow linearly with $\fisher_\star / d$.
\Cref{thm:linear_error_bound} shows an explicit dependence on the parameter $g_{const}$, and also extends this bound to VP-linear schedules.
We show that the schedule $(f, \appx)$ achieves a better trade-off when $\fisher_\star / d \gg 1$.

Let $T = 1$ for simplicity.
Under the VP constraint $f = g/2$, the initialization error is strictly a function of $\mathcal{E} = \int_0^T f(t) dt$.
For the constant schedule, a given tolerance $\mathcal{E}$ on the initialization error implies $g_{const} = 2 \mathcal{E}$, and the bound in \Cref{thm:linear_error_bound} yields a discretization error bound that scales as
\begin{equation}\label{eq:constant_bound_as_energy}
    \frac{d\kappa^4}{n} \cdot \mathcal{E} \left[
        \frac{\fisher_\star}{d} + \log \left(1 + \frac{\fisher_\star}{d} (e^{\mathcal{E}} - 1) \right)
    \right]
    = \mathcal{O}\left(\frac{d\kappa^4}{n} \cdot \left( \mathcal{E}\frac{\fisher_\star}{d} + \mathcal{E}^2 \right)\right)
    .
\end{equation}
For the VP-linear schedule, we have $\int_0^1 f(t) dt = g_{\min} + g_{\max} = \mathcal{E}$ or $g_{\max} \approx \mathcal{E}$ when $g_{\min} \approx 0$ is negligible.
The corresponding discretization error scales with
\begin{align*}
    \frac{d\kappa^4}{n} \cdot 2 \mathcal{E} \left[2 \mathcal{E} + 1 \right] \frac{\mathcal{\fisher_\star}}{d}
    = \mathcal{O}\left(\frac{d\kappa^4}{n} \cdot \mathcal{E}^2 \frac{\fisher_\star}{d} \right)
    .
\end{align*}
For the schedule $(f, \appx)$ with the VP constraint $(\theta, \omega) = (1/2, 0)$, the bound in \Cref{prop:admissable_g_robustness} can be minimized by setting $c = \log (1 + (e^{\mathcal{E}} - 1) (\fisher_\star / d))$ to obtain the bound
\begin{align*}
    \frac{d\kappa^4}{n}\cdot \log^2 \left(1 + \left(e^{\mathcal{E}} - 1\right) \frac{\fisher_\star}{d} \right)
    = \mathcal{O}\left(\frac{d\kappa^4}{n} \cdot \left(\mathcal{E}^2 + \log^2 \frac{\fisher_\star}{d} \right) \right).
\end{align*}
Compared to the $\mathcal{E} \fisher_\star / d + \mathcal{E}^2$ and $\mathcal{E}^2 \fisher_\star /d $ dependence achieved by VP-constant and VP-linear schedules, the schedule $(f, \appx)$ achieves a better trade-off with the dependence $\mathcal{E}^2 + \log^2 \fisher_\star$.

\subsection{Proof of \Cref{thm:linear_error_bound}}\label{proof:linear_error_bound}
Rearranging \eqref{eq:fisher_ode_ub_slc}, we obtain that for $f(t) = g(t) / 2$,
\begin{align*}
    \frac{1}{d} \left(g(t) \fisher_t\right)^2 \leq g(t)^2 \fisher_t  - g(t) \dot{\fisher}_t^2.
\end{align*}
Integrating from $t = 0$ to $T$, we obtain
\begin{align*}
    \frac{1}{d} \int_0^T (g(t) \fisher_t)^2 dt
    &\leq
    \int_0^T g(t)^2 \fisher_t dt - \int_0^T g(t) \dot{\fisher}_t dt
    \\ &=
    \int_0^T g(t)^2 \fisher_t dt - \left[ - [g(t) \fisher_t]_0^T + \int_0^T g'(t) \fisher_t dt \right]
    \\ &
    = g(0) \fisher_0 - g(T) \fisher_T + \int_0^T g'(t) \fisher_t dt ,
\end{align*}
where the first equality is obtained using integration by parts.
Using $g(T) \fisher_T > 0$, we obtain
\begin{equation}\label{eq:master_linear_constant}
    \int_0^T (g(t) \fisher_t)^2 dt
    \leq
    d \cdot g(0) \fisher_\star + d \int_0^T \fisher_t \left[g(t)^2 + g'(t) \right] dt .
\end{equation}
The upper bound is evaluated for the VP-linear and VP-constant schedules.

\textbf{VP-Linear:}
The linear schedule satisfies $g(0) = g_{\min}$, $g(t) \leq g_{\max}$, and $g'(t) = T^{-1}(g_{\max} - g_{\min})$, which then yields
\begin{align*}
    \int_0^T (g(t) \fisher_t)^2 dt
    \leq d \cdot g_{\min} \fisher_\star + d T \fisher_{\max} \left(g_{\max}^2 + \frac{g_{\max} - g_{\min}}{T} \right)
\end{align*}
Next we show $\fisher_{\max} = \max\{d, \fisher_\star\}$.
Recall from \Cref{lem:fisher_ode} and \eqref{eq:st_Fisher_equivalence} that
\begin{align*}
    \dot{\fisher}_t \leq 2 f(t) \fisher_t - \frac{g(t)}{d} \fisher_t^2 .
\end{align*}
Under the VP constraint $2f(t) = g(t)$, we have
\begin{equation}\label{eq:fisher_bound_slc}
    \dot{\fisher}_t \leq g(t) \fisher_t - \frac{g(t)}{d} \fisher_t^2
    = g(t) \fisher_t \left(1 - \frac{\fisher_t}{d} \right) .
\end{equation}
Therefore, $\fisher_t$ can never grow beyond $\max\{\fisher_\star, d\}$.
Substituting for $\fisher_{\max}$, we obtain
\begin{align*}
    \frac{h}{d} \kappa^4 \int_0^T \left[g(t) \fisher_t \right]^2 d t
    & \leq
    \frac{h}{d} \kappa^4 \left[d \cdot g_{\min} \fisher_\star + d T \max\left\{d, \fisher_\star\right\} \left(g_{\max}^2 + \frac{g_{\max} - g_{\min}}{T} \right) \right]
    \\
    &=
    h \kappa^4 d\left[T g_{\max}^2 \max\{1, \frac{\fisher_\star}{d}\} + \max\left\{1, \frac{\fisher_\star}{d}\right\}g_{\max} \right]
    \\ & +
    h \kappa^4 d g_{\min} \left(\frac{\fisher_\star}{d} - \max\left\{1, \frac{\fisher_\star}{d}\right\} \right)
    \\ &
    \leq
    h \kappa^4 d g_{\max} \left[T g_{\max} + 1\right] \max\left\{1, \frac{\fisher_\star}{d} \right\} .
\end{align*}
Substituting for \eqref{eq:disc_error_slc}, we obtain the result.

\textbf{VP-Constant:}
The VP-constant schedule $(f_{const}, g_{const})$ can be substituted for the bound \eqref{eq:master_linear_constant}:
\begin{equation}\label{eq:vp_constant_slc}
    \int_0^T (g_{const}(t) \fisher_t)^2 dt
    \leq
    d g_{const} \fisher_\star + d g_{const}^2 \int_0^T \fisher_t  dt    .
\end{equation}
As in \eqref{eq:fisher_bound_slc}, we have
\begin{align*}
    \dot{\fisher}_t
    \leq g_{const} \fisher_t \left(1 - \frac{\fisher_t}{d} \right)
    \Rightarrow
    \fisher_t \leq \frac{d \cdot e^{g_{const} t}}{e^{g_{const} t} - (1 - d /\fisher_\star)}
    .
\end{align*}
Integrating over time with $v_t = e^{g_{const} t} - (1 - d /\fisher_\star)$, we obtain
\begin{align*}
    \int_0^T \fisher_t dt
    \leq
    \frac{d}{g_{const}} \log \frac{v_T}{v_0}
    &= \frac{d}{g_{const}} \log \left(
        \frac{e^{g_{const} T} - 1 + \frac{d}{\fisher_\star}}{d/\fisher_\star}
    \right)
    \\ &
    = \frac{d}{g_{const}} \log \left(1 + \frac{\fisher_\star}{d} (e^{g_{const}T} - 1) \right)
    .
\end{align*}
Substituting for \eqref{eq:vp_constant_slc}, we obtain the discretization error bound
\begin{align*}
    \frac{h}{d} \kappa^4 \int_0^T (g(t) \fisher_t)^2 dt
    &\leq
    h \kappa^4 \left[ g_{const} \fisher_\star + g_{const}^2 \int_0^T \fisher_t dt \right]
    \\ & \leq
    h \kappa^4 g_{const}  \left[ \fisher_\star + d \cdot \log \left(1 + \frac{\fisher_\star}{d} \left(e^{g_{const} T} - 1 \right) \right) \right]
    ,
\end{align*}
Substituting for \eqref{eq:disc_error_slc}, we obtain the result.

\section{Supplement to Experiments (\Cref{sec:experiments})}\label{app:experiments}
\subsection{Choosing ACS Parameters}\label{proof:hparam_constraints_rmk}
Recall the choice of $\appx$ in \eqref{eq:appx_with_c}.
While \Cref{prop:optimal_schedules_practice} provides closed-form expressions for the ACS framework, deploying these schedules in practice requires properly configuring the hyperparameters $(\theta, \omega, \gamma, g_0)$.
A constraint common in existing widely-used schedules is that the forward process must inject noise monotonically ($g'(t) > 0$).
The bound in \eqref{eq:sampling_error_2} is obtained by setting the terminal energy $\mathcal{E} = \int_0^T f(t) dt$ to balance the initialization and discretization errors.
The proof of \Cref{prop:optimal_schedules_practice} shows how the parameters $(g_0, \gamma)$ should scale with problem constants.
In this section, we set $T = 1$.
\begin{proposition}\label{rmk:hparam_constraints}
    Consider the schedule $\appx$ in \eqref{eq:appx_with_c}.
    Given any $\mathcal{E} > 0$, the constraints in $\Theta_{ACS}$ and $g'(t) > 0$ are satisfied for all $(\theta, \omega, \gamma)$ when
    \begin{align*}
        \theta \in (0, \infty), \quad
        \omega \in [0, \mathcal{E}), \quad
        \gamma \in \left[2 \frac{\mathcal{E}}{\lambda^*}, \infty\right), \quad
        g_0 = \frac{2 \omega - \gamma \lambda^*}{2 \theta} \cdot \frac{e^{2 (\mathcal{E} - \omega) - 1} - 1}{1 - e^{- (2 \omega - \gamma \lambda^*)}} .
    \end{align*}
\end{proposition}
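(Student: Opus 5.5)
The plan is to work directly from the closed form in \Cref{cor:optimal_schedules_practice}, with $\lambda^*$ replaced by $c=\gamma\lambda^*$ as in \eqref{eq:appx_with_c}. Write $\beta \coloneqq 2\omega - c$. For $\beta\neq 0$ the corollary gives
\begin{equation*}
    \appx(t) = \frac{g_0\beta}{(2\theta g_0+\beta)e^{\beta t} - 2\theta g_0}, \qquad f(t)=\theta\appx(t)+\omega .
\end{equation*}

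\textbf{Step 1 (ACS membership).} Membership in $\Theta_{\textrm{ACS}}$ needs $\theta,\omega\ge 0$, which holds by the stated ranges. It also needs $\appx(t)>0$ on $[0,1]$ with finite derivative, which I will check in Step 2. Then $f=\theta\appx+\omega\ge 0$ follows automatically.

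\textbf{Step 2 (monotonicity and positivity).} From the ODE $\frac{d}{dt}\appx=-(2\theta\appx+\beta)\appx$, positivity together with $g'>0$ holds exactly when $2\theta\appx(t)+\beta<0$ for all $t$. This forces $\beta<0$, i.e.\ $c>2\omega$.
\begin{itemize}
    \item The condition $\gamma\ge 2\mathcal{E}/\lambda^*$ gives $c\ge 2\mathcal{E}>2\omega$, using $\omega<\mathcal{E}$. So $\beta<0$.
    \item Since $\appx$ solves a logistic-type ODE, the product $2\theta\appx+\beta$ keeps its sign along the trajectory. It therefore suffices to check $2\theta g_0+\beta<0$ at $t=0$, equivalently $g_0<-\beta/(2\theta)=(c-2\omega)/(2\theta)$.
    \item Given this, the denominator $(2\theta g_0+\beta)e^{\beta t}-2\theta g_0$ has the sign of $\beta$ times a positive quantity, so it never vanishes. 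Hence $\appx$ is positive, finite and smooth on $[0,1]$.
\end{itemize}

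\textbf{Step 3 (identify $g_0$ from the energy constraint).} Impose $\int_0^1 f(t)\,dt=\mathcal{E}$, i.e.\ $\theta\int_0^1\appx\,dt=\mathcal{E}-\omega$. The integral has a closed form via $\frac{d}{dt}\log$ of the denominator: writing $D(t)=(2\theta g_0+\beta)e^{\beta t}-2\theta g_0$, one has
\begin{equation*}
    \appx = \frac{1}{2\theta}\left(\beta - \frac{D'(t)}{D(t)}\right).
\end{equation*}
This gives
\begin{equation*}
    \theta\int_0^1\appx\,dt=\tfrac12\bigl[\beta-\log\bigl(D(1)/D(0)\bigr)\bigr].
\end{equation*}
Setting this equal to $\mathcal{E}-\omega$ and solving the resulting linear equation in $g_0$ (note $D(0)=\beta$) yields an expression of the stated form $\frac{2\omega-c}{2\theta}\cdot\frac{e^{2(\mathcal{E}-\omega)-\cdots}-1}{1-e^{-(2\omega-c)}}$. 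I will match exponents carefully, since the stated exponent ``$2(\mathcal{E}-\omega)-1$'' may be a typo for $2(\mathcal{E}-\omega)-\beta$ or similar.

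\textbf{Step 4 and main obstacle.} The remaining step is to verify that this $g_0$ lies in $\bigl(0,(c-2\omega)/(2\theta)\bigr)$ exactly when $c\ge 2\mathcal{E}$ and $\omega<\mathcal{E}$. Positivity should come from the signs of $e^{2(\mathcal{E}-\omega)}-1>0$ and of the ratio $\beta/(1-e^{-\beta})$. The upper bound reduces to an inequality of the form $e^{2(\mathcal{E}-\omega)}<e^{c-2\omega}$, i.e.\ $2\mathcal{E}<c$. This is where the constraint $\gamma\ge 2\mathcal{E}/\lambda^*$ enters; the boundary equality case may need separate attention.

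I expect this algebra to be the main obstacle, because of the sign bookkeeping with $\beta<0$ and the possible typo in the stated formula. I would first check it numerically on a few parameter values, then do the symbolic verification. The degenerate case $\beta=0$ is excluded by $c>2\omega$.
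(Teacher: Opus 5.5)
The one concrete error is the log-derivative identity in your Step 3, which has the wrong sign. Since $\appx=g_0\beta/D$, you have $\dot{\appx}/\appx=-D'/D$, while the ODE gives $\dot{\appx}/\appx=-(2\theta\appx+\beta)$. Hence $D'/D=2\theta\appx+\beta$, and
\[
\appx=\frac{1}{2\theta}\Bigl(\frac{D'}{D}-\beta\Bigr),\qquad \theta\int_0^1\appx\,dt=\tfrac12\bigl[\log\bigl(D(1)/D(0)\bigr)-\beta\bigr].
\]
With your sign, the energy equation yields $g_0=\frac{\beta}{2\theta}\cdot\frac{e^{-2(\mathcal{E}-\omega)}-1}{1-e^{-\beta}}$. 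This is negative for $\omega<\mathcal{E}$, so your Step 4 positivity check would fail.

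With the corrected sign, $D(1)/\beta=e^{2(\mathcal{E}-\omega)+\beta}$, and $D(1)=2\theta g_0(e^{\beta}-1)+\beta e^{\beta}$. The factor $e^{\beta}$ cancels, giving
\[
g_0=\frac{\beta}{2\theta}\cdot\frac{e^{2(\mathcal{E}-\omega)}-1}{1-e^{-\beta}} .
\]
This is exactly the formula the paper's proof derives. The typo in the statement is the stray ``$-1$'' in the exponent, not a missing $\beta$.

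After this fix, your argument is the paper's. You solve $\int_0^1 f\,dt=\mathcal{E}$ for $g_0$ through an explicit logarithmic integral. The paper uses the substitution $u(t)=e^{-\beta t}D(t)$, which avoids the extra $-\beta$ term. You get $g_0>0$ from $\omega<\mathcal{E}$ and the positivity of $x/(1-e^{-x})$. You reduce monotonicity to the sign of $2\theta g_0+\beta$. The paper reads that condition off $g'$ directly. Your logistic-invariance argument in Step 2 also shows that $D$ never vanishes on $[0,1]$, which the paper leaves implicit.

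Step 4 then goes through as you planned: $2\theta g_0+\beta=\frac{\beta}{1-e^{-\beta}}\bigl(e^{2(\mathcal{E}-\omega)}-e^{-\beta}\bigr)$, which is negative if and only if $c>2\mathcal{E}$. Your concern about the endpoint is justified. At $c=2\mathcal{E}$, $g_0$ equals the equilibrium $(c-2\omega)/(2\theta)$, so $\appx$ is constant and $g'\equiv 0$. The paper's proof only establishes $g'\ge 0$, so the closed range for $\gamma$ supports only the weak inequality. Strict $g'>0$ requires $\gamma>2\mathcal{E}/\lambda^*$.
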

\begin{proof}

The constraints on $(g_0, \gamma, \theta, \omega)$ are derived for the case $T = 1$ and $2 \omega \neq \gamma \lambda^*$.
Recall the expression for $\appx$ in \eqref{eq:appx_with_c}, where we use $c = \gamma \lambda^*$:
\begin{align*}
    g(t)
    &= \frac{g_0 (2 \omega - c)}{(2 \theta g_0 + 2 \omega - c) e^{(2 \omega - c) t} - 2 \theta g_0}
    \\ &=
    \frac{g_0 (2 \omega - c) e^{- (2\omega - c) t}}{(2 \theta g_0 + 2 \omega - c) - 2 \theta g_0 e^{- (2 \omega - c)}} .
    .
\end{align*}
Using the change of variables $u(t) = (2 \theta g_0 + 2 \omega - c) - 2 \theta g_0 e^{- (2 \omega - c) t}$, we have
\begin{equation}\label{eq:integral_g_1}
    \int_0^1 g(t) dt
    =
    \frac{1}{2 \theta} \int_0^1 \frac{du}{u}
    = \frac{1}{2\theta} \log
        \frac{2 \theta g_0 + (2 \omega - c) - 2 \theta g_0 e^{- (2\omega - c)}}{2 \omega - c}
    .
\end{equation}
Using the ACS relation $f(t) = \theta g(t) + \omega$, we obtain
\begin{align*}
    \int_0^1 f(t) dt = \theta \int_0^1 g(t) dt + \omega
    = \mathcal{E} ,
\end{align*}
which implies
\begin{equation}\label{eq:integral_g_2}
    \int_0^1 g(t) dt = \frac{\mathcal{E} - \omega}{\theta} .
\end{equation}
Equating \eqref{eq:integral_g_1} and \eqref{eq:integral_g_2} and solving for $g_0$, we obtain
\begin{align*}
    g_0 = \frac{2 \omega - c}{2 \theta} \cdot \frac{e^{2 (\mathcal{E} - \omega)} - 1}{1 - e^{- (2 \omega - c)}}
    .
\end{align*}
The condition $\omega < \mathcal{E}$ is ensures $g_0 > 0$.
To see this, observe that the $(2 \omega - c)/(1 - e^{-(2\omega - c)})$ term in $g_0$ is of the form $x / (1 - e^{-x})$.
This function is strictly positive for all $x \neq 0$.
Therefore,
\begin{align*}
    \omega < \mathcal{E}
    \Rightarrow
    2 (\mathcal{E} - \omega) > 0
    \Rightarrow
    g_0 \propto e^{2 (\mathcal{E} - \omega)} - 1
    > 0
    .
\end{align*}
The condition $c \geq 2 \mathcal{E}$ is described next.
Evaluating $g'(t)$ and substituting for the inequality $g' (t) \geq 0$, we obtain
\begin{align*}
    2 \theta g_0 + 2 \omega - c \leq 0 .
\end{align*}
Substituting the expression for $g_0$, this inequality becomes
\begin{align*}
    \frac{2 \omega - c}{1 - e^{-(2 \omega - c)}}
    \left[e^{2 (\mathcal{E} - \omega)} - e^{- (2\omega - c)} \right] \leq 0.
\end{align*}
As established earlier, the first term in the parentheses is strictly positive for $2 \omega \neq c$ and we have
\begin{align*}
    e^{2 (\mathcal{E} - \omega)} - e^{- (2\omega - c)} \leq 0
    \Leftrightarrow
    e^{2 (\mathcal{E} - \omega)} \leq e^{c - 2 \omega} .
\end{align*}
This is equivalent to the requirement $c \geq 2 \mathcal{E}$, which in turn is $\gamma \geq 2 \mathcal{E} / \lambda^*$.
\end{proof}
As established in \Cref{thm:sampling_error} and discussed in \Cref{proof:optimal_lambda}, the signal's decay rate $\lambda^*$ defined in \eqref{eq:optimal_fisher} must scale dynamically with the number of discretization steps (i.e., the Neural Function Evaluation or NFE budget) to optimally balance the initialization and discretization errors.
To operationalize this theory, we define the parameters so that they scale adaptively with $n$.
Specifically, we parametrize the base decay rate using the Lambert-$\mathcal{W}$ function as dictated by our optimal trajectory analysis, and couple the remaining variables to this rate:
\begin{equation}\label{eq:hparams}
    \lambda_n = \mathcal{W}\left(K n\right),
    \quad
    \mathcal{E}_n = \frac{\lambda_n}{2} ,
    \quad
    c_n = \gamma \lambda_n
    ,
\end{equation}
where $K$ and $\gamma$ act as ``learnable'' or tunable problem constants that depend on the intrinsic Fisher information and condition number of the unknown data distribution $p_\star$.
By selecting structural variables $\theta$ and $\omega$, and tuning $K$ and $\gamma$, the target energy $\mathcal{E}_n$ and constant $c_n$ are dynamically determined for any chosen NFE budget $n$.
\Cref{rmk:hparam_constraints} then fixes the final parameter $g_0$.

\subsection{Experimental Details}\label{app:experimental_details}
All generation experiments in this section were initialized with $X_0^\leftarrow \sim \mathcal{N}(0, \sigma_T^2)$.
In this setup, we utilize a pre-trained score function \citep{karrasElucidatingDesignSpace2022} as the reference checkpoint, utilizing the statistical equivalence described in \Cref{app:score_function_statistical_equivalence}.
To strictly align the empirical evaluation with our theoretical analysis, generation was performed using a custom implementation of the EI sampler that exactly integrates the linear drift of the analyzed SDEs.
Generation quality was evaluated by computing the FID score using 50k generated samples.
For CIFAR-10, the 50k real images were acquired from the CIFAR-10 training set as is standard in the \verb|cleanfid| library we used to evaluate CIFAR-10 FID scores.
For ImageNet experiments, we evaluated performance against the pre-computed statistics reported by \citet{karrasElucidatingDesignSpace2022} using the pre-processing code provided by the authors.

Hyperparameters for all four schedules were tuned under an identical computational budget:
120 trials utilizing the Tree-structured Parzen Estimator (TPE) sampler in Optuna \citep{akiba2019optuna} on a NVIDIA H200 Grace Hopper Superchip with a warmup phase of 25 trials, over the search space defined in \Cref{tab:hparam_decision}.
The optimization objective was to minimize the average Fr\'{e}chet Inception Distance (FID, measured on 10k generated samples) evaluated at NFE 10 and NFE 50.
Trials were pruned early if the performance at NFE=10 was low relative to prior trials, and the tuning trials were run on a high performance cluster with at most 8 concurrent jobs to ensure accurate tuning.
Because the TPE sampler requires a static hyperparameter search space acorss trials, we accomodate the dynamic boundary constraint $\omega < \mathcal{E}_n$ from \Cref{rmk:hparam_constraints} by introducing a static fractional hyperparameter $\rho \in (0, 1)$ and setting $\omega = \rho \mathcal{E}_n$.

\begin{table}[ht]
\centering
\caption{
    Hyperparameters used for CIFAR-10 experiments.
    The values chosen from the tuning phase are rounded up to 3rd floating point decimals.
}\label{tab:hparam_decision}
\begin{tabular}{lllll}
\toprule
    Schedule & Parameter & Search Space & CIFAR-10 & ImageNet \\
    \midrule
    Linear & $\beta_{\min}$ & $[10^{-4}, 10^{-3}]$ & $7.58 \cdot 10^{-4}$ & $1.46 \cdot 10^{-4}$ \\
    & $\beta_{\max}$ & $[10^{-2}, 50]$ & $9.002$ & $11.356$ \\
    \midrule
    Cosine & $s$ & $[10^{-3}, 5 \cdot 10^{-2}]$ & $0.001$ & $0.001$ \\
    \midrule
    Sigmoid & $\theta_{\min}$ & $[-5, -1]$ & $-4.105$ & $-2.570$ \\
    & $\theta_{\max}$ & $[1, 5]$ & $4.243$ & $4.505$ \\
    & $\tau_{sig}$ & $[0.5, 2]$ & $0.834$ & $0.743$ \\
    \midrule
    ACS & $K$ & $[0.01, 10^{2}]$ & $37.323$ & $24.359$ \\
    & $\gamma$ & $[1, 10^2]$ & $1.997$ & $2.164$ \\
    & $\theta$ & $[10^{-3}, 1]$ & $0.564$ & $0.159$ \\
    & $\rho \coloneqq \omega / \mathcal{E}_n$ & $[0, 0.99]$ & $0.178$ & $0.430$ \\
\bottomrule
\end{tabular}
\end{table}

We plot in \Cref{fig:schedules_fg} the schedules $(f(t), g(t))$ and in \Cref{fig:schedules_alphasigma} $(\alpha_t, \sigma_t)$ color-coded by (red) $N=20$ worst and (blue) highest performing schedules throughout the hyperparameter search.
The figure illustrates how high-performant parameters give rise to schedules $(f, g)$ which start to increase at time $t_0$ that is neither too small or large.
\begin{figure}[htbp]
    \centering
    \begin{subfigure}{\textwidth}
        \centering
        \includegraphics[width=0.9\linewidth]{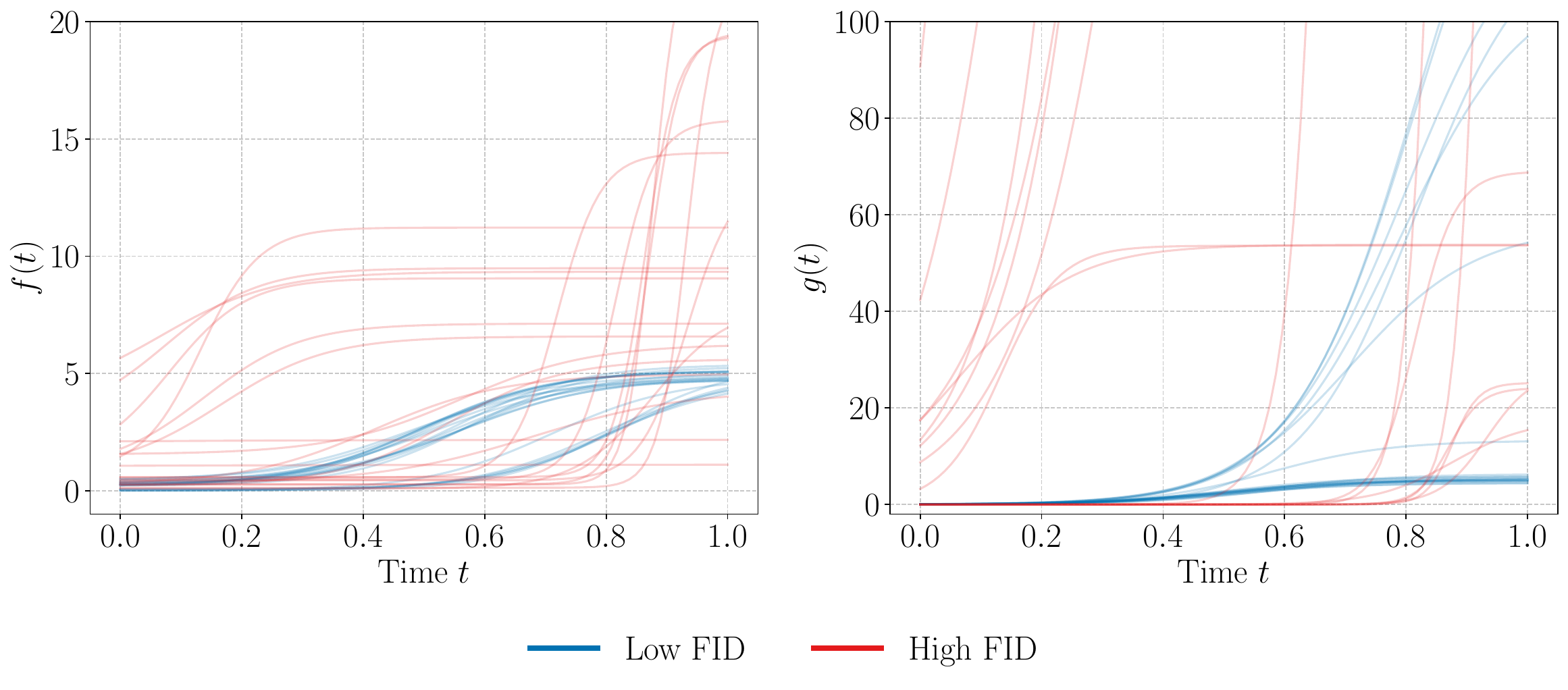}
        \label{fig:schedules_fg}
    \end{subfigure}
    \vspace{1em}
    \begin{subfigure}{\textwidth}
        \centering
        \includegraphics[width=0.9\linewidth]{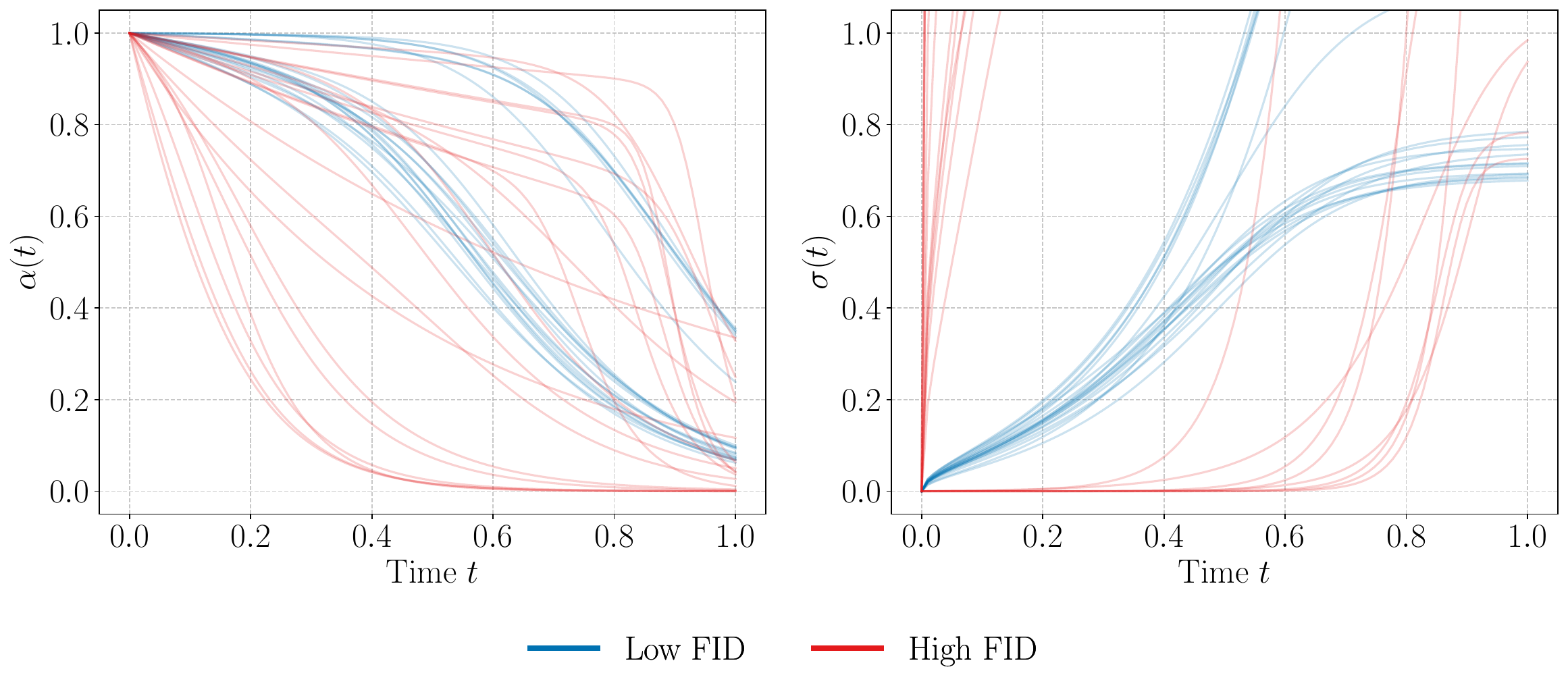}

        \label{fig:schedules_alphasigma}
    \end{subfigure}
    \caption{
        Comparison of noise schedules for the $N=20$ top (Low FID, blue) and bottom (High FID, red) parameter sets.
        Darker regions indicate high-density paths where multiple trials overlap.
    }
    \label{fig:all_noise_schedules}
\end{figure}

The standard deviation accompanying \Cref{tab:fid_results} is reported in \Cref{tab:fid_stds}.
Note that the average FID scores in \Cref{tab:fid_results} are lower for ImageNet than for CIFAR-10, which is an expected consequence of utilizing a class-conditional model for the former and an unconditional model for the latter.
\begin{table}
    \centering
    \caption{
        Sample standard deviations of FID scores for different noise schedules across various NFEs.
    }\label{tab:fid_stds}
\begin{tabular}{l|ccccc|ccccc}
\toprule
Dataset & \multicolumn{5}{c}{CIFAR10} & \multicolumn{5}{c}{ImageNet} \\
\cmidrule(lr){2-6} \cmidrule(lr){7-11}
NFE & 10 & 20 & 30 & 40 & 50 & 10 & 20 & 30 & 40 & 50 \\
\midrule
Linear \citep{ho2020denoising} & 0.05 & 0.08 & 0.06 & 0.02 & 0.12 & 0.20 & 0.07 & 0.06 & 0.08 & 0.07 \\
Cosine \citep{nichol2021improveddenoisingdiffusionprobabilistic} & 0.05 & 0.15 & 0.08 & 0.11 & 0.04 & 0.11 & 0.07 & 0.03 & 0.05 & 0.07 \\
Sigmoid \citep{chen2023importancenoiseschedulingdiffusion} & 0.07 & 0.09 & 0.10 & 0.07 & 0.07 & 0.18 & 0.07 & 0.02 & 0.04 & 0.05 \\
ACS & 0.10 & 0.08 & 0.07 & 0.04 & 0.09 & 0.20 & 0.06 & 0.07 & 0.01 & 0.06 \\
\bottomrule
\end{tabular}
\end{table}

A comparison of $64 \times 64$ ImageNet images sampled using each of the schedules is included in \Cref{fig:grid_comparison}.
To provide a standardized qualitative assessment of each schedule, we used \verb|Gemini 3.1 Pro Preview| as a judge to assess the ranking of the highest-to-lowest quality images using the following prompt:
\begin{promptbox}[LLM Assessment Prompt]
Act as an expert in image synthesis.
Evaluate the attached images generated by different diffusion models based on the criteria: photorealism and absence of artifacts.
Rank the models from best to worst and provide a concise justification for each rank.
\end{promptbox}
We attached the grid images in order listed in \Cref{tab:fid_results,tab:fid_stds}, and we did not provide the names of the schedules.

\begin{llmresponse}[LLM Qualitative Assessment]
Based on the criteria of photorealism and the absence of artifacts (such as anatomical warping, ``melting'' textures, and incoherent subject matter), here is the ranking:

\begin{description}[style=nextline, leftmargin=0pt, font=\bfseries\sffamily]
    \item[1. Best: Image 4 (ACS)]
    This model demonstrates the highest level of structural integrity and photorealism. The animals (birds, frogs, reptiles) consistently feature anatomically correct proportions, sharp textures (feathers, scales), and realistic lighting. While minor artifacts exist in highly complex scenes (like human hands), it has the fewest severe hallucinations and produces distinct, recognizable subjects across almost all panels.

    \item[2. Second Best: Image 1 (Linear)]
    This model produces generally passable results but struggles significantly more with coherence than Image 4. Artifacts are noticeable upon closer inspection; for example, snakes appear as disjointed floating segments, and human hands/faces interacting with animals are heavily distorted or melted. It achieves decent textures but fails on complex anatomical structures.

    \item[3. Third Place: Image 3 (Sigmoid)]
    This model suffers from severe "melting" and blending artifacts. When generating limbs, hands, or complex shapes (such as the person in the top right or the creatures in row 5), the geometry collapses into unnatural, fleshy blobs. While the color palettes and lighting are somewhat realistic, the structural failure of the subjects makes them highly unrealistic.

    \item[4. Worst: Image 2 (Cosine)]
    This model exhibits the most severe failures in both photorealism and coherence. Many panels do not resemble identifiable subjects, resulting in abstract, morphed, or heavily glitched outputs (e.g., the white mass in row 2, the blue blob in row 4, or the severely mangled human figures). The generation frequently breaks down into unrecognizable textures and shapes.
\end{description}
\end{llmresponse}

\newpage
\begin{figure}[htbp]
    \centering
    \begin{subfigure}[t]{0.48\linewidth}
        \centering
        \includegraphics[width=\linewidth, trim={0 2pt 0 2pt}, clip]{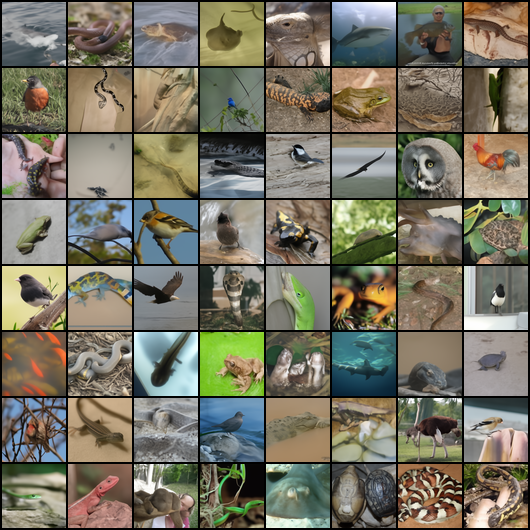}
        \caption{VP-Linear}
        \label{fig:linear}
    \end{subfigure}
    \hfill
    \begin{subfigure}[t]{0.48\linewidth}
        \centering
        \includegraphics[width=\linewidth, trim={0 2pt 0 2pt}, clip]{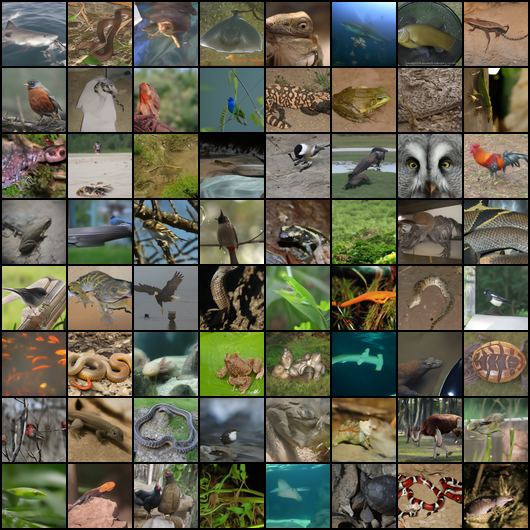}
        \caption{VP-Cosine}
        \label{fig:cosine}
    \end{subfigure}

    \vspace{1em} %

    \begin{subfigure}[t]{0.48\linewidth}
        \centering
        \includegraphics[width=\linewidth, trim={0 2pt 0 2pt}, clip]{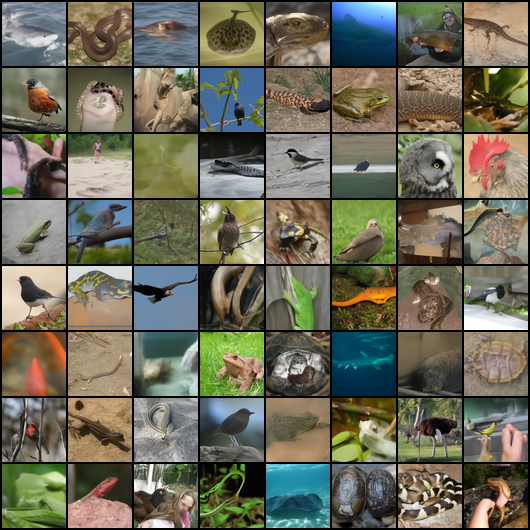}
        \caption{VP-Sigmoid}
        \label{fig:sigmoid}
    \end{subfigure}
    \hfill
    \begin{subfigure}[t]{0.48\linewidth}
        \centering
        \includegraphics[width=\linewidth, trim={0 2pt 0 2pt}, clip]{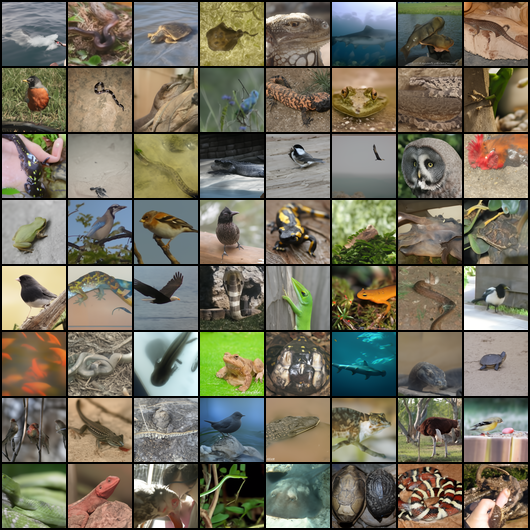}
        \caption{ACS}
        \label{fig:acs}
    \end{subfigure}
    \caption{Comparison of $64 \times 64$ samples generated using four different noise schedules.}
    \label{fig:grid_comparison}
\end{figure}
\newpage

\newpage
\begin{figure}[htbp]
    \centering
    \begin{subfigure}[b]{0.48\linewidth}
        \centering
        \includegraphics[width=\linewidth, trim={0 2pt 0 2pt}, clip]{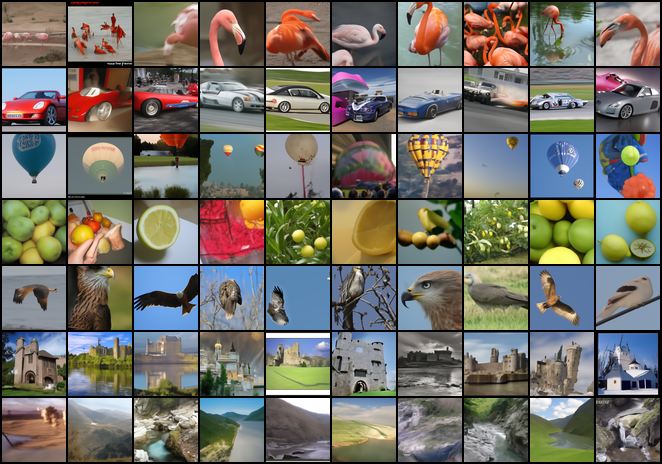}
        \caption{VP-Linear}
        \label{fig:class_linear}
    \end{subfigure}
    \hfill
    \begin{subfigure}[b]{0.48\linewidth}
        \centering
        \includegraphics[width=\linewidth, trim={0 2pt 0 2pt}, clip]{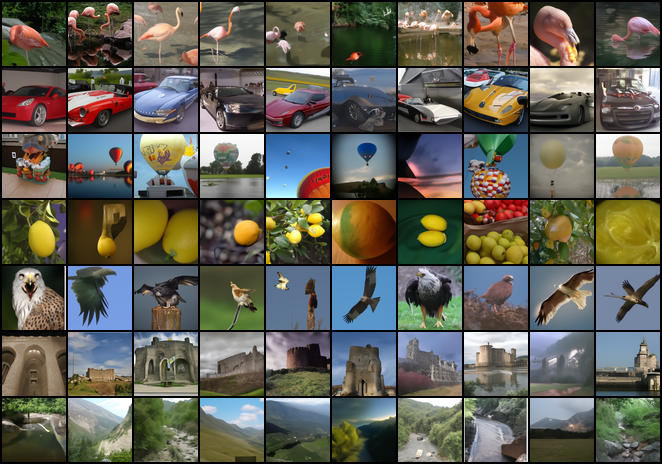}
        \caption{VP-Cosine}
        \label{fig:class_cosine}
    \end{subfigure}

    \vspace{1em} %

    \begin{subfigure}[b]{0.48\linewidth}
        \centering
        \includegraphics[width=\linewidth, trim={0 2pt 0 2pt}, clip]{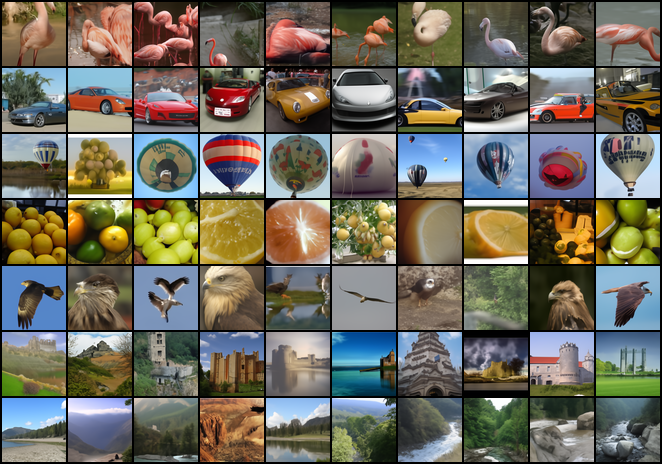}
        \caption{VP-Sigmoid}
        \label{fig:class_sigmoid}
    \end{subfigure}
    \hfill
    \begin{subfigure}[b]{0.48\linewidth}
        \centering
        \includegraphics[width=\linewidth, trim={0 2pt 0 2pt}, clip]{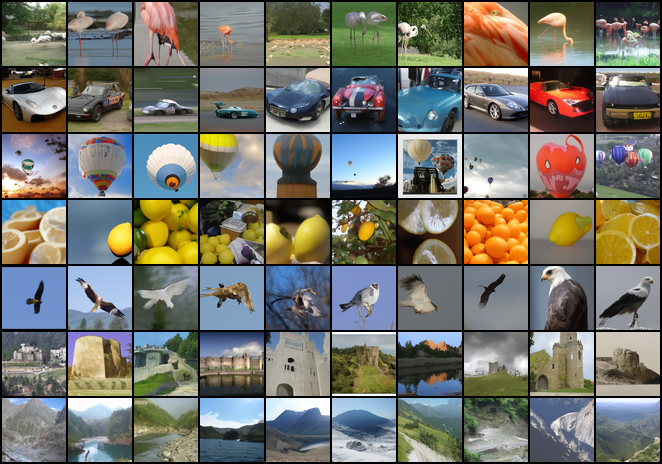}
        \caption{ACS}
        \label{fig:class_acs}
    \end{subfigure}
    \caption{ImageNet samples conditioned on ``flamingo'',
    ``sports car'',
    ``balloon'',
    ``lemon'',
    ``kite'',
    ``castle'',
    ``valley''.}
    \label{fig:per_class_comparison}
\end{figure}
\newpage %

\subsection{Constant Noise Schedules Achieve Poor Trade-Offs}\label{sec:constant}
A VP-constant schedule can be obtained from \Cref{prop:optimal_schedules_practice} by setting
\begin{align*}
    \theta = 0, \quad 2 \omega = c , \quad g_0 = c ,
\end{align*}
which implies $f = \omega, g = 2 \omega$.
Because $c = g_0 \approx \mathcal{E}/T$ is necessarily a constant for the VP-constant schedule, it is forced to maintain a high diffusion rate across the reverse process.
We proved in \Cref{thm:linear_error_bound} and \eqref{eq:constant_bound_as_energy} that constant noise schedules are highly sensitive to the specific choice of constant.
The bound shows that the discretization error bound scales linearly with the data conditioning parameter as $\mathcal{O}(\mathcal{E} \fisher_\star / d)$.
When the target distribution is highly anisotropic, this linear penalty results in a large discretization error unless extremely small step sizes are used.

Here we demonstrate through numerical experiments that the bound in \eqref{eq:constant_bound_as_energy} accurately describes the severity.
\Cref{fig:constant_sensitivity} illustrates the sampling error suffered by the VP-constant schedule.
For these experiments, we set $T = 1$ and sweeped through the constant schedules $f = \mathcal{E}$ and $g = 2\mathcal{E}$ over various values of $\mathcal{E}$.
The plots were generated in $d = 2$ dimensions with $n = 100$ discretization steps.
\begin{figure}[th]
    \centering
    \includegraphics[width=0.95\linewidth,height=3cm]{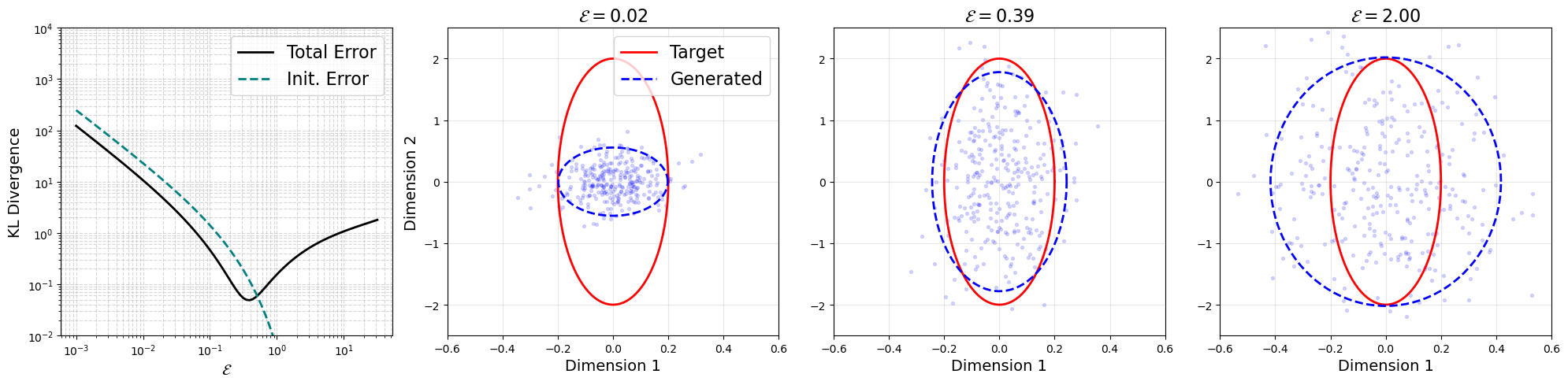}

    \caption{
        KL divergence curves and generated $2\sigma$ covariance ellipses illustrating the sensitivity of constant noise schedules ($f = \mathcal{E}, g = 2\mathcal{E}$) to parameters of the target Gaussian distribution.
        The leftmost panel plots the true initialization error and the true empirical sampling error (as opposed to upper bounds), confirming that success is bottlenecked by the sensitivity with respect to $\mathcal{E}$.
    }
    \label{fig:constant_sensitivity}
\end{figure}
\Cref{fig:constant_sensitivity} illustrates the sensitivity of constant noise schedules when generating an anisotropic 2D Gaussian target distribution (variances $m = 0.01, M = 1$).
The leftmost panel plots the KL divergence alongside the initialization error as a function of the scaling constant $\mathcal{E}$.
The initialization error in the leftmost panel of \Cref{fig:constant_sensitivity} is the true initialization error $\mathrm{KL}(p_T \| \mathcal{N}(0, \sigma_T^2 I))$.
The total sampling error exhibits a U-shaped profile, precisely validating the quadratic dependence on $\mathcal{E}$ in \eqref{eq:constant_bound_as_energy}.
For small values of $\mathcal{E}$, the forward process lacks sufficient energy to reach the standard Gaussian prior, and the total error is heavily dominated by the initialization mismatch (dashed blue line).

The three rightmost panels provide a visualization of these distinct regimes by comparing the $2 \sigma$ covariance ellipses of the generated distributions (dashed blue) against the true target (solid red).
When $\mathcal{E}$ is poorly scaled, the reverse process suffers from large initialization error when $\mathcal{E}$ is small or large discretization error when $\mathcal{E}$ is large.
These empirical results highlight the danger of utilizing data-agnostic constant schedules in practice.
A practitioner must guess the parameter $\mathcal{E}$ within a remarkably narrow window dictated by the intrinsic Fisher information and condition number of the data distribution.
A slight mis-estimation forces the sampler into catastrophic failure.

\ifpreprint
\else
    \subsection{Assets}\label{app:assets}
    Our implementation is built upon the EDM codebase by \citet{karrasElucidatingDesignSpace2022}, which is licensed under CC BY-NC-SA 4.0.
    We used pre-computed ImageNet feature statistics provided by the EDM repository for FID evaluation.
    We used CIFAR-10 feature statistics as provided by the \verb|cleanfid| library (MIT License).
    Standard libraries such as PyTorch and Optuna  were used according to their respective open-source licenses.
\fi

\ifpreprint
\else
  \newpage
\section*{NeurIPS Paper Checklist}

The checklist is designed to encourage best practices for responsible machine learning research, addressing issues of reproducibility, transparency, research ethics, and societal impact. Do not remove the checklist: {\bf The papers not including the checklist will be desk rejected.} The checklist should follow the references and follow the (optional) supplemental material.  The checklist does NOT count towards the page
limit.

Please read the checklist guidelines carefully for information on how to answer these questions. For each question in the checklist:
\begin{itemize}
    \item You should answer \answerYes{}, \answerNo{}, or \answerNA{}.
    \item \answerNA{} means either that the question is Not Applicable for that particular paper or the relevant information is Not Available.
    \item Please provide a short (1--2 sentence) justification right after your answer (even for \answerNA).

\end{itemize}

{\bf The checklist answers are an integral part of your paper submission.} They are visible to the reviewers, area chairs, senior area chairs, and ethics reviewers. You will also be asked to include it (after eventual revisions) with the final version of your paper, and its final version will be published with the paper.

The reviewers of your paper will be asked to use the checklist as one of the factors in their evaluation. While \answerYes{} is generally preferable to \answerNo{}, it is perfectly acceptable to answer \answerNo{} provided a proper justification is given (e.g., error bars are not reported because it would be too computationally expensive'' or ``we were unable to find the license for the dataset we used''). In general, answering \answerNo{} or \answerNA{} is not grounds for rejection. While the questions are phrased in a binary way, we acknowledge that the true answer is often more nuanced, so please just use your best judgment and write a justification to elaborate. All supporting evidence can appear either in the main paper or the supplemental material, provided in appendix. If you answer \answerYes{} to a question, in the justification please point to the section(s) where related material for the question can be found.

\begin{enumerate}

\item {\bf Claims}
    \item[] Question: Do the main claims made in the abstract and introduction accurately reflect the paper's contributions and scope?
    \item[] Answer: \answerYes{}
    \item[] All theoretical and empirical results are included in the main manuscript.
    \item[] Guidelines:
    \begin{itemize}
        \item The answer \answerNA{} means that the abstract and introduction do not include the claims made in the paper.
        \item The abstract and/or introduction should clearly state the claims made, including the contributions made in the paper and important assumptions and limitations. A \answerNo{} or \answerNA{} answer to this question will not be perceived well by the reviewers.
        \item The claims made should match theoretical and experimental results, and reflect how much the results can be expected to generalize to other settings.
        \item It is fine to include aspirational goals as motivation as long as it is clear that these goals are not attained by the paper.
    \end{itemize}

\item {\bf Limitations}
    \item[] Question: Does the paper discuss the limitations of the work performed by the authors?
    \item[] Answer: \answerYes{}
    \item[] The limitations are described in \Cref{app:limitations}.
    \item[] Guidelines:
    \begin{itemize}
        \item The answer \answerNA{} means that the paper has no limitation while the answer \answerNo{} means that the paper has limitations, but those are not discussed in the paper.
        \item The authors are encouraged to create a separate ``Limitations'' section in their paper.
        \item The paper should point out any strong assumptions and how robust the results are to violations of these assumptions (e.g., independence assumptions, noiseless settings, model well-specification, asymptotic approximations only holding locally). The authors should reflect on how these assumptions might be violated in practice and what the implications would be.
        \item The authors should reflect on the scope of the claims made, e.g., if the approach was only tested on a few datasets or with a few runs. In general, empirical results often depend on implicit assumptions, which should be articulated.
        \item The authors should reflect on the factors that influence the performance of the approach. For example, a facial recognition algorithm may perform poorly when image resolution is low or images are taken in low lighting. Or a speech-to-text system might not be used reliably to provide closed captions for online lectures because it fails to handle technical jargon.
        \item The authors should discuss the computational efficiency of the proposed algorithms and how they scale with dataset size.
        \item If applicable, the authors should discuss possible limitations of their approach to address problems of privacy and fairness.
        \item While the authors might fear that complete honesty about limitations might be used by reviewers as grounds for rejection, a worse outcome might be that reviewers discover limitations that aren't acknowledged in the paper. The authors should use their best judgment and recognize that individual actions in favor of transparency play an important role in developing norms that preserve the integrity of the community. Reviewers will be specifically instructed to not penalize honesty concerning limitations.
    \end{itemize}

\item {\bf Theory assumptions and proofs}
    \item[] Question: For each theoretical result, does the paper provide the full set of assumptions and a complete (and correct) proof?
    \item[] Answer: \answerYes{}
    \item[] Justification: All assumptions are explicitly stated in the respective theorem environments.
    Full proofs are provided in the appendix.
    \item[] Guidelines:
    \begin{itemize}
        \item The answer \answerNA{} means that the paper does not include theoretical results.
        \item All the theorems, formulas, and proofs in the paper should be numbered and cross-referenced.
        \item All assumptions should be clearly stated or referenced in the statement of any theorems.
        \item The proofs can either appear in the main paper or the supplemental material, but if they appear in the supplemental material, the authors are encouraged to provide a short proof sketch to provide intuition.
        \item Inversely, any informal proof provided in the core of the paper should be complemented by formal proofs provided in appendix or supplemental material.
        \item Theorems and Lemmas that the proof relies upon should be properly referenced.
    \end{itemize}

    \item {\bf Experimental result reproducibility}
    \item[] Question: Does the paper fully disclose all the information needed to reproduce the main experimental results of the paper to the extent that it affects the main claims and/or conclusions of the paper (regardless of whether the code and data are provided or not)?
    \item[] Answer: \answerYes{}
    \item[] Justification:
    The search space used to tune hyperparameters underlying noise schedules is described in \Cref{app:experiments}.
    The selected configuration is also reported.
    We made an effort towards reproducibility by using random number generator seeds and \verb|torch.Generator|.
    \item[] Guidelines:
    \begin{itemize}
        \item The answer \answerNA{} means that the paper does not include experiments.
        \item If the paper includes experiments, a \answerNo{} answer to this question will not be perceived well by the reviewers: Making the paper reproducible is important, regardless of whether the code and data are provided or not.
        \item If the contribution is a dataset and\slash or model, the authors should describe the steps taken to make their results reproducible or verifiable.
        \item Depending on the contribution, reproducibility can be accomplished in various ways. For example, if the contribution is a novel architecture, describing the architecture fully might suffice, or if the contribution is a specific model and empirical evaluation, it may be necessary to either make it possible for others to replicate the model with the same dataset, or provide access to the model. In general. releasing code and data is often one good way to accomplish this, but reproducibility can also be provided via detailed instructions for how to replicate the results, access to a hosted model (e.g., in the case of a large language model), releasing of a model checkpoint, or other means that are appropriate to the research performed.
        \item While NeurIPS does not require releasing code, the conference does require all submissions to provide some reasonable avenue for reproducibility, which may depend on the nature of the contribution. For example
        \begin{enumerate}
            \item If the contribution is primarily a new algorithm, the paper should make it clear how to reproduce that algorithm.
            \item If the contribution is primarily a new model architecture, the paper should describe the architecture clearly and fully.
            \item If the contribution is a new model (e.g., a large language model), then there should either be a way to access this model for reproducing the results or a way to reproduce the model (e.g., with an open-source dataset or instructions for how to construct the dataset).
            \item We recognize that reproducibility may be tricky in some cases, in which case authors are welcome to describe the particular way they provide for reproducibility. In the case of closed-source models, it may be that access to the model is limited in some way (e.g., to registered users), but it should be possible for other researchers to have some path to reproducing or verifying the results.
        \end{enumerate}
    \end{itemize}

\item {\bf Open access to data and code}
    \item[] Question: Does the paper provide open access to the data and code, with sufficient instructions to faithfully reproduce the main experimental results, as described in supplemental material?
    \item[] Answer: \answerYes{}
    \item[] Justification: The code used to run the experiments are attached as supplementary materials.
    \item[] Guidelines:
    \begin{itemize}
        \item The answer \answerNA{} means that paper does not include experiments requiring code.
        \item Please see the NeurIPS code and data submission guidelines (\url{https://neurips.cc/public/guides/CodeSubmissionPolicy}) for more details.
        \item While we encourage the release of code and data, we understand that this might not be possible, so \answerNo{} is an acceptable answer. Papers cannot be rejected simply for not including code, unless this is central to the contribution (e.g., for a new open-source benchmark).
        \item The instructions should contain the exact command and environment needed to run to reproduce the results. See the NeurIPS code and data submission guidelines (\url{https://neurips.cc/public/guides/CodeSubmissionPolicy}) for more details.
        \item The authors should provide instructions on data access and preparation, including how to access the raw data, preprocessed data, intermediate data, and generated data, etc.
        \item The authors should provide scripts to reproduce all experimental results for the new proposed method and baselines. If only a subset of experiments are reproducible, they should state which ones are omitted from the script and why.
        \item At submission time, to preserve anonymity, the authors should release anonymized versions (if applicable).
        \item Providing as much information as possible in supplemental material (appended to the paper) is recommended, but including URLs to data and code is permitted.
    \end{itemize}

\item {\bf Experimental setting/details}
    \item[] Question: Does the paper specify all the training and test details (e.g., data splits, hyperparameters, how they were chosen, type of optimizer) necessary to understand the results?
    \item[] Answer: \answerYes{}
    \item[] Justification: We report the specific pre-trained model used to run the experiments.
    The benchmarks used in this paper are standard.
    \item[] Guidelines:
    \begin{itemize}
        \item The answer \answerNA{} means that the paper does not include experiments.
        \item The experimental setting should be presented in the core of the paper to a level of detail that is necessary to appreciate the results and make sense of them.
        \item The full details can be provided either with the code, in appendix, or as supplemental material.
    \end{itemize}

\item {\bf Experiment statistical significance}
    \item[] Question: Does the paper report error bars suitably and correctly defined or other appropriate information about the statistical significance of the experiments?
    \item[] Answer: \answerYes{}
    \item[] Justification: The mean $\pm$ std. results are reported in the main and supplementary materials.
    \item[] Guidelines:
    \begin{itemize}
        \item The answer \answerNA{} means that the paper does not include experiments.
        \item The authors should answer \answerYes{} if the results are accompanied by error bars, confidence intervals, or statistical significance tests, at least for the experiments that support the main claims of the paper.
        \item The factors of variability that the error bars are capturing should be clearly stated (for example, train/test split, initialization, random drawing of some parameter, or overall run with given experimental conditions).
        \item The method for calculating the error bars should be explained (closed form formula, call to a library function, bootstrap, etc.)
        \item The assumptions made should be given (e.g., Normally distributed errors).
        \item It should be clear whether the error bar is the standard deviation or the standard error of the mean.
        \item It is OK to report 1-sigma error bars, but one should state it. The authors should preferably report a 2-sigma error bar than state that they have a 96\% CI, if the hypothesis of Normality of errors is not verified.
        \item For asymmetric distributions, the authors should be careful not to show in tables or figures symmetric error bars that would yield results that are out of range (e.g., negative error rates).
        \item If error bars are reported in tables or plots, the authors should explain in the text how they were calculated and reference the corresponding figures or tables in the text.
    \end{itemize}

\item {\bf Experiments compute resources}
    \item[] Question: For each experiment, does the paper provide sufficient information on the computer resources (type of compute workers, memory, time of execution) needed to reproduce the experiments?
    \item[] Answer: \answerYes{}
    \item[] Justification: The details of compute resources are reported in \Cref{app:experiments}.
    \item[] Guidelines:
    \begin{itemize}
        \item The answer \answerNA{} means that the paper does not include experiments.
        \item The paper should indicate the type of compute workers CPU or GPU, internal cluster, or cloud provider, including relevant memory and storage.
        \item The paper should provide the amount of compute required for each of the individual experimental runs as well as estimate the total compute.
        \item The paper should disclose whether the full research project required more compute than the experiments reported in the paper (e.g., preliminary or failed experiments that didn't make it into the paper).
    \end{itemize}

\item {\bf Code of ethics}
    \item[] Question: Does the research conducted in the paper conform, in every respect, with the NeurIPS Code of Ethics \url{https://neurips.cc/public/EthicsGuidelines}?
    \item[] Answer: \answerYes{}
    \item[] Justification: This paper focuses on the mathematical analysis of diffusion models, , involving no human subjects, sensitive data, or foreseeable ethical violations.
    \item[] Guidelines:
    \begin{itemize}
        \item The answer \answerNA{} means that the authors have not reviewed the NeurIPS Code of Ethics.
        \item If the authors answer \answerNo, they should explain the special circumstances that require a deviation from the Code of Ethics.
        \item The authors should make sure to preserve anonymity (e.g., if there is a special consideration due to laws or regulations in their jurisdiction).
    \end{itemize}

\item {\bf Broader impacts}
    \item[] Question: Does the paper discuss both potential positive societal impacts and negative societal impacts of the work performed?
    \item[] Answer: \answerYes{}
    \item[] Justification: A broader impact statement is included in \Cref{app:limitations}.
    \item[] Guidelines:
    \begin{itemize}
        \item The answer \answerNA{} means that there is no societal impact of the work performed.
        \item If the authors answer \answerNA{} or \answerNo, they should explain why their work has no societal impact or why the paper does not address societal impact.
        \item Examples of negative societal impacts include potential malicious or unintended uses (e.g., disinformation, generating fake profiles, surveillance), fairness considerations (e.g., deployment of technologies that could make decisions that unfairly impact specific groups), privacy considerations, and security considerations.
        \item The conference expects that many papers will be foundational research and not tied to particular applications, let alone deployments. However, if there is a direct path to any negative applications, the authors should point it out. For example, it is legitimate to point out that an improvement in the quality of generative models could be used to generate Deepfakes for disinformation. On the other hand, it is not needed to point out that a generic algorithm for optimizing neural networks could enable people to train models that generate Deepfakes faster.
        \item The authors should consider possible harms that could arise when the technology is being used as intended and functioning correctly, harms that could arise when the technology is being used as intended but gives incorrect results, and harms following from (intentional or unintentional) misuse of the technology.
        \item If there are negative societal impacts, the authors could also discuss possible mitigation strategies (e.g., gated release of models, providing defenses in addition to attacks, mechanisms for monitoring misuse, mechanisms to monitor how a system learns from feedback over time, improving the efficiency and accessibility of ML).
    \end{itemize}

\item {\bf Safeguards}
    \item[] Question: Does the paper describe safeguards that have been put in place for responsible release of data or models that have a high risk for misuse (e.g., pre-trained language models, image generators, or scraped datasets)?
    \item[] Answer: \answerNA{}
    \item[] Justification: Our experiments use a pre-trained model that is publicly available.
    \item[] Guidelines:
    \begin{itemize}
        \item The answer \answerNA{} means that the paper poses no such risks.
        \item Released models that have a high risk for misuse or dual-use should be released with necessary safeguards to allow for controlled use of the model, for example by requiring that users adhere to usage guidelines or restrictions to access the model or implementing safety filters.
        \item Datasets that have been scraped from the Internet could pose safety risks. The authors should describe how they avoided releasing unsafe images.
        \item We recognize that providing effective safeguards is challenging, and many papers do not require this, but we encourage authors to take this into account and make a best faith effort.
    \end{itemize}

\item {\bf Licenses for existing assets}
    \item[] Question: Are the creators or original owners of assets (e.g., code, data, models), used in the paper, properly credited and are the license and terms of use explicitly mentioned and properly respected?
    \item[] Answer: \answerYes{}
    \item[] Justification: Our experiments utilize standard, publicly available datasets (CIFAR-10, ImageNet) and open-source libraries. Our implementation builds upon the EDM codebase \citep{karrasElucidatingDesignSpace2022}, and we adhere to its \verb|CC BY-NC-SA 4.0| license; details on assets and licensing are provided in \Cref{app:assets}.
    \item[] Guidelines:
    \begin{itemize}
        \item The answer \answerNA{} means that the paper does not use existing assets.
        \item The authors should cite the original paper that produced the code package or dataset.
        \item The authors should state which version of the asset is used and, if possible, include a URL.
        \item The name of the license (e.g., CC-BY 4.0) should be included for each asset.
        \item For scraped data from a particular source (e.g., website), the copyright and terms of service of that source should be provided.
        \item If assets are released, the license, copyright information, and terms of use in the package should be provided. For popular datasets, \url{paperswithcode.com/datasets} has curated licenses for some datasets. Their licensing guide can help determine the license of a dataset.
        \item For existing datasets that are re-packaged, both the original license and the license of the derived asset (if it has changed) should be provided.
        \item If this information is not available online, the authors are encouraged to reach out to the asset's creators.
    \end{itemize}

\item {\bf New assets}
    \item[] Question: Are new assets introduced in the paper well documented and is the documentation provided alongside the assets?
    \item[] Answer: \answerYes{}
    \item[] Justification: A \verb|README.md| file is attached with the instructions to run the code.
    \item[] Guidelines:
    \begin{itemize}
        \item The answer \answerNA{} means that the paper does not release new assets.
        \item Researchers should communicate the details of the dataset\slash code\slash model as part of their submissions via structured templates. This includes details about training, license, limitations, etc.
        \item The paper should discuss whether and how consent was obtained from people whose asset is used.
        \item At submission time, remember to anonymize your assets (if applicable). You can either create an anonymized URL or include an anonymized zip file.
    \end{itemize}

\item {\bf Crowdsourcing and research with human subjects}
    \item[] Question: For crowdsourcing experiments and research with human subjects, does the paper include the full text of instructions given to participants and screenshots, if applicable, as well as details about compensation (if any)?
    \item[] Answer: \answerNA{}
    \item[] Justification: The paper does not involve crowdsourcing nor human subjects.
    \item[] Guidelines:
    \begin{itemize}
        \item The answer \answerNA{} means that the paper does not involve crowdsourcing nor research with human subjects.
        \item Including this information in the supplemental material is fine, but if the main contribution of the paper involves human subjects, then as much detail as possible should be included in the main paper.
        \item According to the NeurIPS Code of Ethics, workers involved in data collection, curation, or other labor should be paid at least the minimum wage in the country of the data collector.
    \end{itemize}

\item {\bf Institutional review board (IRB) approvals or equivalent for research with human subjects}
    \item[] Question: Does the paper describe potential risks incurred by study participants, whether such risks were disclosed to the subjects, and whether Institutional Review Board (IRB) approvals (or an equivalent approval/review based on the requirements of your country or institution) were obtained?
    \item[] Answer: \answerNA{}
    \item[] Justification: The paper does not involve crowdsourcing nor human subjects.
    \item[] Guidelines:
    \begin{itemize}
        \item The answer \answerNA{} means that the paper does not involve crowdsourcing nor research with human subjects.
        \item Depending on the country in which research is conducted, IRB approval (or equivalent) may be required for any human subjects research. If you obtained IRB approval, you should clearly state this in the paper.
        \item We recognize that the procedures for this may vary significantly between institutions and locations, and we expect authors to adhere to the NeurIPS Code of Ethics and the guidelines for their institution.
        \item For initial submissions, do not include any information that would break anonymity (if applicable), such as the institution conducting the review.
    \end{itemize}

\item {\bf Declaration of LLM usage}
    \item[] Question: Does the paper describe the usage of LLMs if it is an important, original, or non-standard component of the core methods in this research? Note that if the LLM is used only for writing, editing, or formatting purposes and does \emph{not} impact the core methodology, scientific rigor, or originality of the research, declaration is not required.

    \item[] Answer: \answerNA{}
    \item[] Justification: LLMs were not used as any important, orignal, or non-standard components.
    \item[] Guidelines:
    \begin{itemize}
        \item The answer \answerNA{} means that the core method development in this research does not involve LLMs as any important, original, or non-standard components.
        \item Please refer to our LLM policy in the NeurIPS handbook for what should or should not be described.
    \end{itemize}

\end{enumerate} \fi

\end{document}